\title{The Correspondence Between Bounded Graph Neural Networks and Fragments of First-Order Logic}
\author{
Bernardo Cuenca Grau$^1$,
Eva Feng$^1$,
Przemys\l aw A. Wa\l\k{e}ga$^2$
}
\newcommand{\bcgline}[1]{\todo[author=Bernardo,inline,backgroundcolor=orange!20]{#1}}
\newcommand{\R}{\ensuremath{\mathbb{R}}}
\newcommand{\true}{\ensuremath{\mathsf{true}}}
\newcommand{\false}{\ensuremath{\mathsf{false}}}
\newcommand{\prop}{\ensuremath{\mathsf{PROP}}}
\newcommand{\M}{\ensuremath{\mathfrak{M}}}
\newcommand{\modE}{\ensuremath{\mathsf{E}}}
\newcommand{\modA}{\ensuremath{\mathsf{A}}}
\newcommand{\countE}{\ensuremath{\mathsf{\exists}}}
\newcommand{\Ecount}{\ensuremath{\mathsf{\exists}}\#}
\newcommand{\FO}{\ensuremath{\text{FO}}}
\newcommand{\FOtwo}{\ensuremath{\text{FO}^2}}
\newcommand{\Ctwo}{\ensuremath{\text{C}^2}}
\newcommand{\ML}{\ensuremath{\mathcal{ML}}}
\newcommand{\GML}{\ensuremath{\mathcal{GML}}}
\newcommand{\GMLC}{\ensuremath{\mathcal{GMLC}}}
\newcommand{\MLA}{\ensuremath{\mathcal{ML}(\E)}}
\newcommand{\MLE}{\ensuremath{\mathcal{ML}(\E)}}
\newcommand{\MLpar}{\ensuremath{\mathcal{ML}^{\neg,\cap,\cup,-,id}}}
\newcommand{\EMLC}{\ensuremath{\mathcal{EMLC}}}
\newcommand{\EML}{\ensuremath{\mathcal{EML}}}
\newcommand{\G}{\ensuremath{G}}
\newcommand{\V}{\ensuremath{V}}
\newcommand{\E}{\ensuremath{\mathsf{E}}}
\newcommand{\emb}{\ensuremath{\lambda}}
\newcommand{\N}{\ensuremath{\mathcal{N}}}
\newcommand{\Lr}[1][]{%
  \ifx#1\empty
    \ensuremath{\mathcal{L}}%
  \else
    \ensuremath{\mathcal{L}^{#1}}%
  \fi
}   
\newcommand{\comb}{\ensuremath{\mathsf{comb}}}
\newcommand{\agg}{\ensuremath{\mathsf{agg}}}
\newcommand{\aggn}{\ensuremath{\mathsf{\overline{agg}}}}
\newcommand{\readout}{\ensuremath{\mathsf{read}}}
\newcommand{\cls}{\ensuremath{\mathsf{cls}}}
\newcommand{\s}{\ensuremath{\mathsf{s}}}
\newcommand{\bm}{\ensuremath{\mathsf{b}}}
\newcommand{\m}{\ensuremath{\mathsf{m}}}
\newcommand{\ac}{\ensuremath{\mathsf{AC}}}
\newcommand{\acr}{\ensuremath{\mathsf{ACR}}}
\newcommand{\acn}{\ensuremath{\mathsf{AC+}}}
\newcommand{\acar}{\ensuremath{\mathsf{AC/RC}}}
\newcommand{\GNN}[2]{\ensuremath{\text{GNN}_{#1}^{#2}}}
\newcommand{\Nn}{\ensuremath{\overline{N}}}
\declaretheorem[name=Theorem]{thm}
\newtheorem{theorem}[thm]{Theorem}
\newtheorem{example}[thm]{Example}
\newtheorem*{example*}{Example}
\newtheorem{definition}[thm]{Definition}
\newtheorem{corollary}[thm]{Corollary}
\newtheorem*{claim*}{Claim}
\newtheorem*{theorem*}{Theorem}
\begin{document}

\maketitle

\begin{abstract}
Graph Neural Networks (GNNs) address two key challenges in applying deep learning to graph-structured data: they handle varying size input graphs and ensure invariance under graph isomorphism. While GNNs have demonstrated broad applicability, understanding their expressive power remains an important question.
In this paper, we propose GNN architectures that correspond precisely to prominent fragments of first-order logic ($\FO$), including various modal logics as well as more expressive two-variable fragments. To establish these results, we apply methods from finite model theory of first-order and modal logics to the domain of graph representation learning.
Our results provide a unifying framework for understanding the logical expressiveness of GNNs within $\FO$.
\end{abstract}


 \begin{links}
     \link{Extended version}{https://arxiv.org/abs/2505.08021}
\end{links}

\section{Introduction}

Learning on graphs 
or relational structures presents two fundamental challenges. First,  neural networks require fixed size inputs, making them ill-suited for graphs of varying size. Second, predictions about graphs should not depend on how the graph is represented, i.e., they should be invariant under isomorphism \cite{DBLP:series/synthesis/2020Hamilton}. 

\emph{Graph Neural Networks} (GNNs) \cite{DBLP:conf/icml/GilmerSRVD17}
overcome these limitations by operating natively on graph-structured data, inherently handling variable sizes and ensuring representation invariance.
The flagship \emph{aggregate-combine} (\ac) architecture can be viewed as a layered network operating over an input graph. Each node maintains a state (a real-valued vector) and,
in each layer, a node’s state is updated based on its current state and that of its  neighbours. 
This update mechanism is specified by an \emph{aggregation function} that takes the current states of the neighbours and aggregates them into a vector, and a \emph{combination function} that takes the aggregate value from the neighbours and the current state of the node and computes the updated state.
Their \emph{aggregate-combine-readout} (\acr)
extension includes an additional \emph{readout function} which aggregates states across all nodes in the graph, rather than just local neighbours \cite{DBLP:conf/iclr/BarceloKM0RS20}. 
GNNs have been widely applied. They drive recommendation systems  \cite{Ying_2018},  predict molecular properties \cite{DBLP:journals/air/BesharatifardV24},  enhance traffic navigation  \cite{Derrow_Pinion_2021}, interpret  scenes in computer vision \cite{DBLP:journals/corr/abs-2209-13232}, and enable  reasoning over incomplete knowledge graphs \cite{DBLP:conf/iclr/CucalaGKM22,DBLP:conf/nips/ZhangC18,DBLP:journals/corr/abs-2402-04062}.

GNNs encompass many architectures and a central question is understanding their \emph{expressive power}---i.e., the classes of functions they can compute.
This has been addressed from multiple angles.
Early works studied the \emph{discriminative power} of GNNs: given two graphs, can a GNN from a given family yield distinct outputs for them? 
By design, no GNN can separate isomorphic graphs, but more subtly, certain non-isomorphic graphs may remain indistinguishable. In particular, if two graphs cannot be distinguished by the 1-dimensional Weisfeiler-Leman (WL)  graph isomorphism test, then no  GNN can differentiate them either~\cite{morris2019weisfeiler,DBLP:conf/iclr/XuHLJ19}. 
Generalised k-dimensional GNNs, which handle higher-order graph structures, have also been connected to the WL hierarchy of increasingly powerful isomorphism tests \cite{morris2019weisfeiler}.
Through the correspondence between WL  and finite-variable  logics \cite{cai1992optimal}, the limitation extends to logical distinguishability.

The expressiveness of GNNs has also been studied through the lenses of database query languages. 
As node classifiers, GNNs compute a \emph{unary query}---an isomorphism-invariant function mapping each graph 
and node to a truth value. 
For a family of GNN classifiers, what is the  logic expressing these unary queries? This is  the \emph{logical expressiveness} (or \emph{uniform expressiveness}) of GNNs. 
The expressiveness of GNNs  goes beyond first-order logic (\FO) since aggregation can only be captured using extensions such as  counting terms \cite{grohe2024descriptive,DBLP:conf/nips/Huang0CB23},  Presburger quantifiers \cite{benedikt_et_al:LIPIcs.ICALP.2024.127}, or linear programming \cite{DBLP:conf/ijcai/NunnSST24}.
Other GNN variants, such as recursive GNNs \cite{DBLP:conf/nips/AhvonenHKL24,DBLP:conf/aaai/PfluegerCK24} require fixpoint operators.

A connection between  GNNs and \FO{} fragments has also been established \cite{DBLP:conf/iclr/BarceloKM0RS20}.  
Graded modal logic (\GML) formulas, a.k.a.\ concepts in the description logic $\mathcal{ALCQ}$ \cite{DBLP:conf/dlog/2003handbook}, can be captured by GNNs without readout functions, 
just as $\FO$ formulas with two variables and counting quantifiers ($\Ctwo$)  can be realised by a GNNs with readouts. 
This relationship is, however, asymmetric: while any \GML{} classifier can be expressed by an \ac{} GNN, the converse requires an assumption of \FO{} expressibility.  
The conditions ensuring \FO{} expressibility of a GNN remain largely unexplored:  
the only sufficient condition known to us is that monotonic GNNs with max aggregation precisely match unions of tree-shaped conjunctive queries \cite{DBLP:conf/kr/CucalaGMK23}.

\smallskip
\noindent
\textbf{Contributions} 
We introduce \emph{bounded GNNs} with  \emph{$k$-bounded aggregation}, where multiplicities greater than $k$ in a multiset are capped at $k$. 
If $k=1$, the multiplicities do not matter, and we speak of \emph{set-based aggregation}.
As we show, bounded GNNs correspond to modal and two-variable \FO{} fragments as depicted in \Cref{fig:landscape}.

We first establish that $\ac$ GNNs with set-based aggregation ($\GNN{\s}{\ac}$) correspond to basic modal logic ($\ML$) and thus to concepts in the description logic $\mathcal{ALC}$. 
This extends to  GNNs using bounded aggregation  ($\GNN{\bm}{\ac}$), which correspond to graded modal logic ($\GML$), that is, concepts of $\mathcal{ALCQ}$.
Readouts enable global quantification: GNNs with set-based aggregation and readout ($\GNN{\s}{\acr}$) capture modal logic with the global modality ($\MLA$), which corresponds to  $\mathcal{ALC}$ with the universal role \cite{DBLP:conf/dlog/2003handbook}, while those with bounded aggregation and readouts ($\GNN{\bm}{\acr}$) match graded modal logic with counting $(\GMLC)$, which corresponds to  $\mathcal{ALCQ}$ equipped with the universal role. 
While bounded readouts enable global quantification, they cannot express certain first-order properties like ``nodes with exactly $k$ non-neighbours".
To overcome this limitation, we introduce $\GNN{\bm}{\acn}$: a family of bounded GNNs 
augmented with an aggregation function over non-neighbours. We prove that $\GNN{\bm}{\acn}$ 
captures $\Ctwo{}$ (two-variable FO with counting), while its set-based variant $\GNN{\s}{\acn}$ corresponds to the  two-variable \FO{} fragment $\FOtwo{}$.








\begin{figure}[t]
\centering
\begin{tikzpicture}[scale=1, myellipse/.style 2 args={ellipse, fill=black!#1, draw, fill opacity=0.3}]

\scriptsize

\node[myellipse={10}{\FO}, 
label={[anchor=north, below=0.9mm]\FO},
minimum width=6.1cm, minimum height=5.5cm, draw] (e1) {};

\node[myellipse={20}{\Ctwo},
label={[anchor=north, below=0.05cm]$\GNN{\bm}{\acn} \equiv \Ctwo$},
minimum width=5.9cm, minimum height=5.0cm, above=-0mm of e1.south] (e2) {};

\node[myellipse={30}{\FOtwo},
label={[below=0.3mm]{\parbox{2cm}{\centering
$\GNN{\s}{\acn}$\\
\rotatebox{90}{$\equiv$}\\
$\FOtwo$}}},
xshift=0cm, yshift=-0cm, minimum width=2.4cm, minimum height=4.5cm, above=0mm of e1.south] (e3) {};

\node[myellipse={40}{\GMLC},
label={[below left=8mm and 7mm]{\parbox{2cm}{\centering
$\GNN{\bm}{\acr}$\\
\rotatebox{90}{$\equiv$}\\
$\GMLC$}}},
xshift=0cm, yshift=0cm, minimum width=5.1cm, minimum height=3.4cm, above=0mm of e1.south] (e4) {};


\node[myellipse={50}{\GML},
label={[below right=-1mm and 1mm]{\parbox{2cm}{\centering
$\GNN{\bm}{\ac}$\\
\rotatebox{90}{$\equiv$}\\
$\GML$}}},
xshift=1.25cm, yshift=0.2cm, rotate=20, minimum width=3.2cm, minimum height=1.7cm, above=0mm of e1.south] (e6) {};

\node[myellipse={50}{\MLA},
label={[below=0.8mm]{\parbox{2cm}{\centering
$\GNN{\s}{\acr}$\\
\rotatebox{90}{$\equiv$}\\
$\MLA$}}},
xshift=0cm, yshift=0cm, minimum width=1.9cm, minimum height=2.8cm, above=0mm of e1.south] (e7) {};

\node[ellipse, draw, fill=black!50,
label={[below=1mm]{\parbox{2cm}{\centering
$\GNN{\s}{\ac}$\\
\rotatebox{90}{$\equiv$}\\
$\ML$}}},
xshift=0cm, yshift=0cm, minimum width=1.2cm, minimum height=1.2cm, above=0mm of e1.south] (e8) {};
\end{tikzpicture}
\caption{The landscape of our expressive power results}
\label{fig:landscape}
\end{figure}


This paper comes with an appendix containing all the proofs of our technical results.

\section{Graphs and Classifiers}\label{classifiers}


\noindent \textbf{Graphs}
We consider \emph{(finite, undirected, simple, and node-labelled) graphs} $\G = (V, E, \emb)$, where $V$ is a finite set of nodes, $E$ a set of undirected edges with no self-loops, and $\emb:V \to \{0,1 \}^d$ assigns to each node a 
binary vector\footnote{The assumption that node labels are binary, i.e.,  nodes are coloured, is standard when studying logical characterisation of GNNs \cite{DBLP:conf/iclr/BarceloKM0RS20,benedikt_et_al:LIPIcs.ICALP.2024.127,DBLP:conf/ijcai/NunnSST24}} of dimension $d$.
The dimension $d$ of all vectors in $\G$ is the same, and we refer to it as the \emph{dimension} of $\G$. 
A \emph{pointed graph} is a pair $(\G,v)$ of a graph and one of its nodes.

\smallskip
\noindent \textbf{Node Classifiers}
A \emph{node classifier}
is a function  mapping
pointed graphs  to  $\true$ or $\false$.  The classifier accepts the input if it returns $\true$ and it rejects if it returns $\false$.
Two classifiers are \emph{equivalent} if they compute the same function. 
A family $\mathcal{F}$ of classifiers is \emph{at most as expressive} as $\mathcal{F}'$, written $\mathcal{F} \leq \mathcal{F}'$, 
if each classifier in $\mathcal{F}$ has an equivalent one in $\mathcal{F}'$. 
If $\mathcal{F} \leq \mathcal{F}'$  and $\mathcal{F}' \leq \mathcal{F}$, we write $\mathcal{F} \equiv \mathcal{F}'$, and say that $\mathcal{F}$ and $\mathcal{F}'$ have the \emph{same expressiveness}. 
Such \emph{uniform expressiveness} contrasts with  other notions such as discriminative power \cite{morris2019weisfeiler,DBLP:conf/icml/WangZ22} and non-uniform expressiveness  \cite{grohe2024descriptive}.

\smallskip
\noindent \textbf{GNN Classifiers}
We consider standard GNNs with
 \emph{aggregate-combine} (\ac) and  \emph{aggregate-combine-readout} (\acr) layers \cite{benedikt_et_al:LIPIcs.ICALP.2024.127,DBLP:conf/iclr/BarceloKM0RS20}, and propose also \emph{extended aggregate-combine} (\acn) layers equipped with one aggregation over neighbours and another  over non-neighbours.
An \ac{} layer is a pair $( \agg, \comb )$, an \acr{} layer is a triple $( \agg, \comb, \readout)$ and an \acn{} layer is a triple $( \agg, \aggn, \comb)$, where  $\agg$ and $\aggn$ are \emph{aggregation functions} and $\readout$ is a \emph{readout function}, all  mapping multisets of vectors into single vectors, whereas $\comb$ is  a \emph{combination function} mapping  vectors to  vectors.
An application of a layer to a graph $\G = ( V, E, \emb )$ yields a graph $\G' = ( V, E, \emb' )$ with the same nodes and edges, but with an updated labelling function 
$\emb'$.
For an $\ac$ layer, vector 
$\emb'(v)$ is defined as follows
for each node  $v$, where $N_\G(v) = \{w \mid \{u,w \} \in E \}$ is the set of 
neighbours of $v$ and 
$\overline{N}_\G(v) = \{w \mid \{u,w \} \not\in E \} \backslash \{v\}$ is the set of non-neighbours of $v$ excluding $v$ itself (note that the graphs we consider have no self loops).
\begin{eqnarray}
\comb \Big( \emb(v), \agg( \lBrace  \emb(w) \rBrace_{w \in N_G(v)} )  \Big). && \label{eq:ac}
\end{eqnarray}
%
For an $\acr$ layer, vector $\emb'(v)$ is defined as
\begin{eqnarray}
\comb \Big( \emb(v), 
\agg( \lBrace  \emb(w) \rBrace_{w \in N_G(v)} ), \readout( \lBrace  \emb(w) \rBrace_{w \in V}) \Big).  && \label{eq:integrated}
\end{eqnarray}
In turn, for an $\acn$ layer, vector $\emb'(v)$ is defined as
\begin{eqnarray}
\comb \Big( \emb(v), 
\agg( \lBrace  \emb(w) \rBrace_{w \in N_G(v)} ), \aggn ( \lBrace  \emb(w) \rBrace_{w \in \Nn_G(v)} )\Big).  && \label{eq:acn}
\end{eqnarray}
Each  $\agg$, $\aggn$, $\comb$, and $\readout$ has domain and range of some fixed dimension (but each 
can have a different dimension), which we refer to as
input and output dimensions.
For layer application to be meaningful, these dimensions need to match: if in an $\ac$  layer  $\agg$  has input dimension  $d$ and output dimension $d'$, then the input dimension of $\comb$ is $d+d'$; in an $\acr$ layer, if  $\agg$ has dimensions $d$ and $d'$, and  $\readout$ has dimensions $d$ and $d''$ (note that input dimensions of $\agg$ and $\readout$ need to match),  the input dimension of $\comb$ is $d+d'+d''$; if in an $\acn$ layer, $\agg$  has input dimension  $d$ and output dimension $d'$, and  $\aggn$ has dimensions $d$ and $d''$ , then the input dimension of $\comb$ is $d+d'+d''$.
A \emph{GNN classifier} $\N$ of dimension $d$ consists of $L$ layers and a 
classification function $\mathsf{cls}$ from vectors to truth values.
The input dimension of  the first layer  is $d$ and consecutive
layers have matching dimensions: the output dimension of layer $i$ matches the input dimension of  layer $i+1$.
We write $\lambda(v)^{(\ell)}$ for the vector of node $v$ upon application of layer $\ell$;  $\lambda(v)^{(0)}$ is the initial label of $v$, and $\lambda(v)^{(L)}$ is its final label. 
The application of  $\N$ 
to  $(\G,v)$ is  the truth value  $\N(\G,v) =\cls(\emb(v)^{(L)})$.

\smallskip
\noindent \textbf{Logic Classifiers}
We consider  formulas over finite signatures consisting of a set $\prop$ of propositions (unary predicates)
$p_1, p_2, \dots$ 
for node colours. Formulas of the
\emph{graded modal logic of counting} ($\GMLC$) are defined as follows: 
$$
\varphi :=  p \mid \neg \varphi \mid \varphi \land \varphi \mid \Diamond_{k} \varphi  \mid \countE_k \varphi,  
$$
where $p \in \prop$ and, for each $k \in \mathbb{N}$, $\Diamond_k$ and $\countE_k$ are the $k$-graded modality and the $k$-counting modality, respectively.
The formula $\Diamond_{k} \varphi$ expresses that at least $k$ accessible worlds satisfy 
$\varphi$, while $\countE_k \varphi$ states that at least $k$ worlds in total satisfy 
$\varphi$. Graded modal logic ($\GML$) is obtained from $\GMLC$ by disallowing counting 
modalities. Modal logic with the global modality ($\MLA$) is obtained from $\GMLC$ by 
restricting both counting and graded modalities to  $k=1$. Basic modal logic ($\ML$) further restricts $\MLA$ by disallowing counting modalities entirely. 
We also consider the two-variable fragment of first-order logic with counting quantifiers ($\Ctwo$), where $\countE_k$ denotes counting quantifiers.\footnote{We use the symbols $\countE_k$ 
for both counting 
quantifiers and counting modalities.} The classical two-variable fragment ($\FOtwo$) is obtained from $\Ctwo$ by restricting counting quantifiers to $k=1$.


The  \emph{depth} of formula $\varphi$
the maximum nesting of modal operators ($\Diamond_k$ and $\countE_k$) or quantifiers in it.  
The  \emph{counting rank},  $\mathsf{rk}_{\#}(\varphi)$, is
the maximal  among numbers $k$ occurring in its graded and counting modalities ($\Diamond_k$ and $\countE_k$) or in counting quantifiers, or $0$ if the formula does not mention any modalities or quantifiers. 
For $\mathcal{L}$ any of the logics defined above, we denote as $\mathcal{L}_{\ell,c}$
the set of all $\mathcal{L}$ formulas of depth at most $\ell$ and counting rank at most $c$.


Formulas are evaluated over pointed models $(\M_{\G},v)$, each corresponding to a pair of a (coloured) graph $\G = (V, E, \emb)$ of some dimension $d$ and a node $v\in V$.
In the case of modal logics,
model $\M_{\G} = (V, E, \nu)$ has $V$ as the set of modal worlds, $E$ as the symmetric accessibility relation, and the valuation function $\nu$ maps each $p_i \in \prop$ to the subset of nodes in $V$ whose vectors have $1$ on the $i$-th position. 
Valuation $\nu$ extends to all modal formulas: 
\begin{align*}
\nu(\neg \varphi) & := V \setminus \nu(\varphi), \qquad 
\nu(\varphi_1 \land \varphi_2)  := \nu(\varphi_1) \cap \nu(\varphi_2), \\
\nu(\Diamond_k \varphi) & := \{ v \mid
 k \leq | \{ w \mid \{ v,w \} \in E \text{ and } w \in \nu(\varphi) \}| \}, \\
\nu(\countE_k \varphi) & :=  V \text{ if } k \leq |\nu(\varphi)|, \text{ and } \emptyset  \text{ otherwise}.
\end{align*}
%
%
A pointed model $(\mathfrak{M}, v)$, 
\emph{satisfies} a formula $\varphi$, denoted $(\mathfrak{M},v) \models \varphi$, if
$v \in \nu(\varphi)$.
In the case of \Ctwo{} formulas, we treat $\M_{\G}$ as the corresponding \FO{} structure, and evaluate formulas using the standard \FO{} semantics.

For a  logic $\mathcal{L}$, we write $(\mathfrak{M},w) \equiv_{\mathcal{L}} (\mathfrak{M}',w')$ if $(\mathfrak{M},w)$ and $(\mathfrak{M}',w')$ satisfy the same formulas of $\mathcal{L}$. 
A \emph{logic  classifier} of dimension $d$ is a formula $\varphi$ 
with at most $d$ propositions (unary predicates) $p_1, \dots, p_d$. 
The application of $\varphi$ to $(\G,v)$ is $\true$ if $(\M_{\G},v) \models \varphi$ and $\false$ otherwise.
By  convention, we use the same symbol for a logic and its associated classifier family.

\section{Bounded GNN Classifiers}\label{boundedGNN}


We next introduce \emph{bounded GNNs}, which generalise max GNNs \cite{DBLP:conf/kr/CucalaG24}
and max-sum GNNs \cite{DBLP:conf/kr/CucalaGMK23}.
Bounded GNNs restrict aggregation and readout  by requiring existence  of a bound $k$ such that all multiplicities $k' >k$ in an input multiset are replaced with $k$. Thus, multiplicities greater than $k$ do not affect the output of a $k$-bounded function.
Set-based functions ignore multiplicities altogether.

\begin{definition}
An aggregation (or readout) function $f$ 
is $k$-\emph{bounded}, for  $k \in \mathbb{N}$,  if $f(M) = f(M_{k})$ for each multiset $M$ in the domain of $f$,
where $M_{k}$ is the multiset obtained from $M$ by replacing 
all multiplicities greater than $k$ with $k$.
Function $f$ is  
\emph{set-based} if it is $1$-bounded, and it is \emph{bounded} if it is $k$-bounded for some $k \in \mathbb{N}$.
\end{definition}


\begin{example}\label{bounded_functions}
Consider the  example functions below.
\begin{itemize}
\item The  aggregation in max GNNs \cite{DBLP:conf/kr/CucalaG24}
 is set-based.
It maps a multiset of vectors to a vector being their componentwise maximum,
for instance $\lBrace (3,2),(2,4), (2,4) \rBrace \mapsto (3,4)$.

\item  The aggregation  in max-k-sum GNNs \cite{DBLP:conf/kr/CucalaGMK23} is $k$-bounded.
It maps $M$ to a vector whose $i$th component is the sum of the 
$k$ largest 
$i$th components in $M$;
if $k=2$,  $\lBrace (3,2),(2,4),(2,4)\rBrace \mapsto (5,8)$.

\item Examples of unbounded functions include componentwise sum $\lBrace (3,2),(2,4),(2,4)\rBrace \mapsto (7,10)$ and the average mapping $\lBrace (3,2),(2,4),(2,4) \rBrace \mapsto (\frac{7}{3}, \frac{10}{3})$.
\end{itemize}
\end{example}

Equipped with the notion of bounded aggregation and readout functions, we are ready to define bounded GNNs.

\begin{definition}\label{def:families}
We consider families of GNN classifiers, $\GNN{X}{Y}$, where $X \in \{\s, \bm, \m\}$ indicates the type of aggregation and readout:  
set-based ($\s$),
bounded  ($\bm$), or arbitrary---also called 
multiset---($\m$), whereas  $Y \in \{\ac, \acr, \acn \}$ indicates whether the GNN uses only $\ac$, $\acr$, or $\acn$ layers. 
\emph{Bounded GNN classifiers} are those with bounded  aggregation and readout functions.
\end{definition}

All  $\GNN{X}{Y}$ classifiers, with $X \in \{ \s, \bm \}$, are bounded.
Family  $\GNN{\m}{\ac}$ corresponds to aggregate-combine GNNs, 
$\GNN{\m}{\acr}$ to  aggregate-combine-readout GNNs
\cite{DBLP:conf/iclr/BarceloKM0RS20},
$\GNN{\bm}{\ac}$ contains 
monotonic max-sum GNNs \cite{DBLP:conf/kr/CucalaGMK23}, and
$\GNN{\s}{\ac}$ 
contains 
max GNNs \cite{DBLP:conf/kr/CucalaG24}. 
%




As shown later, the expressiveness of  bounded GNN classifiers falls within $\FO$.
This is intuitively so, because bounded GNNs have finite spectra, as defined below.

\begin{definition}\cite{benedikt_et_al:LIPIcs.ICALP.2024.127}
The \emph{spectrum}, $\mathsf{sp}(\N)$,  of a GNN classifier $\N$ (of dimension $d$), is the set of all vectors that can occur as node labels in any layer of  $\N$ application  (to graphs of dimension $d$). 
For $L$ the number of layers of $\N$ and $\ell \leq L$, we let  $\mathsf{sp}(\N,\ell)$ be the subset of the spectrum consisting of the vectors that can occur upon application of layer $\ell$. 
By convention, we let $\mathsf{sp}(\N,0)$ be the set of Boolean vectors of the classifier's dimension.
\end{definition}

Since $\mathsf{sp}(\N,0)$
is always finite and bounded functions applied to multisets with a bounded number of vectors yield finitely many possible outcomes, the 
spectra of bounded GNN classifiers are bounded. Using combinatorial arguments we can obtain explicit bounds as below.

\begin{restatable}{proposition}{finitesp}\label{prop:spectrum}
Each bounded GNN  classifier $\N$  has a finite spectrum.
In particular, 
if $\N$ has dimension $d$, $L$ layers, and $k$ is the largest bound of its aggregation and readout functions, then
$|\mathsf{sp}(\N,0)| = 2^d$ and  $| \mathsf{sp}(\N,\ell+1)|$ , for each $0 \leq \ell \leq L - 1$, is bounded by the following values:
\begin{align*}
&  
|\mathsf{sp}(\N, \ell)| \cdot
(k+1)^{ |\mathsf{sp}(\N, \ell)|}, && \text{if } \N \in \GNN{\bm}{\ac},
\\
&  
 |\mathsf{sp}(\N, \ell)| \cdot
(k+1)^{2  |\mathsf{sp}(\N, \ell)|} ,&&  \text{if } \N \in \{\GNN{\bm}{\acr}, \GNN{\bm}{\acn}\} .
\end{align*}
\end{restatable}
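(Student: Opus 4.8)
The plan is to prove the bound by induction on the layer index $\ell$, counting the number of distinct vectors that a single node can acquire after applying one layer, given that at most $|\mathsf{sp}(\N,\ell)|$ distinct vectors occur as inputs. The base case $|\mathsf{sp}(\N,0)| = 2^d$ is immediate, since the initial labels are exactly the Boolean vectors of dimension $d$. For the inductive step, I would fix a layer $\ell+1$ and ask: how many distinct output vectors $\emb'(v)$ can arise from equation~(\ref{eq:ac}) (respectively (\ref{eq:integrated}), (\ref{eq:acn}))? The key observation is that $\emb'(v) = \comb(\dots)$ is a deterministic function of its arguments, so the number of possible outputs is bounded by the number of possible \emph{argument tuples}, and it suffices to count those.

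The heart of the argument is to count how many distinct values $\agg(\lBrace \emb(w) \rBrace_{w \in N_G(v)})$ can take. Here I would invoke $k$-boundedness: since $\agg$ is $k$-bounded, its value depends only on the multiset $M$ with every multiplicity capped at $k$. Letting $s = |\mathsf{sp}(\N,\ell)|$ be the number of distinct vectors that can appear among the neighbours, such a capped multiset is determined by assigning to each of the $s$ possible vectors a multiplicity in $\{0,1,\dots,k\}$, giving at most $(k+1)^{s}$ distinct capped multisets, hence at most $(k+1)^{s}$ distinct aggregate values. The first argument of $\comb$ is $\emb(v)$ itself, which ranges over at most $s$ values. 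Multiplying, the number of distinct argument pairs for an $\ac$ layer is at most $s \cdot (k+1)^{s}$, which matches the stated bound. For $\acr$ and $\acn$ layers there are \emph{two} independent aggregated arguments (the readout over all of $V$, or the aggregation over non-neighbours $\Nn_G(v)$), each again $k$-bounded and each ranging over at most $(k+1)^{s}$ values, so the count becomes $s \cdot (k+1)^{s} \cdot (k+1)^{s} = s \cdot (k+1)^{2s}$, matching the second bound.

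A few details deserve care. First, I must justify that \emph{both} the central node's label and all neighbour/non-neighbour labels are drawn from the same set $\mathsf{sp}(\N,\ell)$ of size $s$; this follows directly from the definition of the spectrum at layer $\ell$. Second, in the $\acr$ case the readout is taken over all of $V$ and in the $\acn$ case the second aggregation is over $\Nn_G(v)$, but in every case the underlying labels still come from $\mathsf{sp}(\N,\ell)$, so the same $(k+1)^{s}$ ceiling on capped multisets applies uniformly regardless of which node set is aggregated over. Third, the readout and second aggregation use a possibly different bound; I would simply let $k$ be the \emph{largest} bound over all aggregation and readout functions, as the statement does, so that $(k+1)^{s}$ dominates each factor. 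The finiteness claim then follows because $\mathsf{sp}(\N) = \bigcup_{\ell=0}^{L} \mathsf{sp}(\N,\ell)$ is a finite union of finite sets.

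I do not anticipate a deep obstacle here; the argument is essentially a clean combinatorial bookkeeping exercise. The main point requiring attention is the counting of capped multisets: one must argue carefully that a $k$-bounded function's output is fully determined by the vector of capped multiplicities, so that the count $(k+1)^{s}$ is correct and not merely an overcount of raw multisets (which would be infinite). This is exactly where the definition of $k$-boundedness does the essential work, collapsing the infinitely many possible neighbour multisets into the finitely many capped profiles. The only mild subtlety is ensuring the bound is stated per-layer as a recurrence rather than a closed form, which is what the proposition does; no closed-form solution of the recurrence is required.
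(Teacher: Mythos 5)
Your proposal is correct and follows essentially the same argument as the paper's proof: bound the number of distinct values for each argument of $\comb$ (the node's own label by $|\mathsf{sp}(\N,\ell)|$, and each $k$-bounded aggregation/readout by the number $(k+1)^{|\mathsf{sp}(\N,\ell)|}$ of capped multiplicity profiles), then multiply. Your additional remarks on taking $k$ to be the largest bound and on the capped-multiset counting being the point where $k$-boundedness does the work are consistent with, and slightly more explicit than, the paper's presentation.
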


\section{Overview and Technical Approach}\label{sec:overview}

In what follows, we systematically establish the correspondences between bounded GNNs and logic 
classifiers depicted in Figure~\ref{fig:landscape}. 
In \Cref{sec:agg-combine}, we show that bounded aggregate-combine GNNs correspond precisely to the modal logics without global counting ($\ML$ and $\GML$). Next, in \Cref{sec:readouts}, we show that bounded aggregate-combine-readout GNNs capture the expressive power of modal logics with global counting ($\MLA$ and $\GMLC$). Finally, in \Cref{sec:two-var}, we prove that extended aggregate-combine GNNs are equivalent in expressive power to two-variable logics ($\Ctwo{}$ and $\FOtwo{})$.

To characterise a family of GNN classifiers  $\mathcal{F}$ via a logic $\mathcal{L}$ we establish a bidirectional correspondence. We first show that every formula of $\mathcal{L}$ can be simulated by a GNN in $\mathcal{F}$. 
We then show the converse:  every GNN classifier in $\mathcal{F}$ admits an equivalent $\mathcal{L}$-classifier.

The second step builds on finite model theory 
characterisations of $\mathcal{L}$-equivalence through model comparison games (independent of GNNs). The existence of a winning strategy
induces an equivalence relation $\sim$ on pointed models, which extends to
pointed coloured graphs: $(G,v) \sim (G',v')$ holds precisely when $(\M_{G},v) \sim (\M_{G'},v')$.
By limiting games to a fixed number of rounds and bounded grading, and by considering finite signatures, we ensure that $\sim$-invariant classes 
of  models  can be represented as a finite disjunction of 
\emph{characteristic formulas of $\mathcal{L}$} \cite{DBLP:journals/corr/abs-1910-00039, DBLP:books/sp/Libkin04}. 
Specifically, for modal logics and two-variable fragments we use suitable variants of bisimulation and 2-pebble games, respectively.

The final requirement to establish the connection to GNNs is to show that each GNN classifier $\mathcal{N}$   
in $\mathcal{F}$ is invariant under $\sim$, i.e., 
$(G_1,v_1) \sim (G_2, v_2)$ implies $\N(G_1,v_1) = \N(G_2,v_2)$,
for all pointed graphs $(G_1,v_1)$ and $(G_2,v_2)$.
%

\section{Modal Logics Without Global Counting}\label{sec:agg-combine}

We first study bounded aggregate-combine GNNs, and start by showing that \GML{} and \ML{} formulas can be captured by $\GNN{\bm}{\ac}$  and  $\GNN{\s}{\ac}$ classifiers, respectively.
For this, we adapt the construction simulating \GML{} formulas with GNNs which uses unbounded summation \cite{DBLP:conf/iclr/BarceloKM0RS20}.
For \GML{}, our GNN construction uses max-$k$-sum aggregation, and for \ML{} it uses max aggregation (which coincides with max-$k$-sum, for $k=1$).

\begin{restatable}{theorem}{linalgac}\label{prop:ml-to-gnn}
$\GML \leq \GNN{\bm}{\ac}$ and $\ML \leq  \GNN{\s}{\ac}$.
\end{restatable}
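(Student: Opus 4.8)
The plan is to adapt the classical simulation of graded modal logic by GNNs with unbounded summation \cite{DBLP:conf/iclr/BarceloKM0RS20}, replacing the unbounded sum by a \emph{bounded} max-$k$-sum aggregation, and to specialise this to plain max aggregation for $\ML$. Fix a $\GML$ classifier $\varphi$ of depth $\ell$ and counting rank $c=\mathsf{rk}_{\#}(\varphi)$, and let $\psi_1,\dots,\psi_n$ enumerate its subformulas. I assign to each subformula a \emph{level}: $\mathrm{level}(p)=0$ for $p\in\prop$, $\mathrm{level}(\neg\psi)=\mathrm{level}(\psi)$, $\mathrm{level}(\psi'\wedge\psi'')=\max(\mathrm{level}(\psi'),\mathrm{level}(\psi''))$, and $\mathrm{level}(\Diamond_k\psi)=\mathrm{level}(\psi)+1$, so that the maximal level equals $\ell$. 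I then build a GNN $\N$ with one coordinate per subformula whose invariant is that, after $t$ layers, the coordinate of $\psi_i$ carries the truth value of $\psi_i$ at the node whenever $\mathrm{level}(\psi_i)\le t$.

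For the construction, $\N$ has dimension $n$ and $L=\ell$ layers. The initial label copies the $d$ propositional coordinates of $\emb(v)^{(0)}$ into the coordinates of the corresponding propositions and sets the rest to $0$. In every layer the aggregation is the componentwise max-$c$-sum; since all state coordinates are kept in $\{0,1\}$, its value on a Boolean coordinate is $\min\bigl(c,\,|\{w\in N_\G(v): w \text{ satisfies the subformula}\}|\bigr)$, the count of satisfying neighbours capped at $c$. The combination function---unrestricted in the GNN definition, and in any case realisable by simple truncated-linear maps as in \cite{DBLP:conf/iclr/BarceloKM0RS20}---recomputes each coordinate as follows: propositional and lower-level coordinates are preserved; each coordinate $\Diamond_k\psi$ whose level is reached is set to $1$ iff the aggregated (capped) count of the $\psi$-coordinate is $\ge k$; Boolean coordinates $\neg\psi$ and $\psi'\wedge\psi''$ are composed from the coordinates just made available; and coordinates of not-yet-reached level are kept at $0$. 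Finally $\cls$ returns $\true$ iff the coordinate of $\varphi$ equals $1$. For $\ML$ we have $c=1$, so max-$c$-sum is exactly componentwise max, which is set-based, and the same construction yields a member of $\GNN{\s}{\ac}$.

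Correctness follows by induction on $t$, establishing the invariant. The base case $t=0$ holds because the propositional coordinates are correct and the combination composes level-$0$ Boolean connectives from them. For the step, a level-$(t+1)$ formula $\Diamond_k\psi$ has $\mathrm{level}(\psi)=t$, so by the inductive hypothesis the $\psi$-coordinate of every neighbour is correct after layer $t$; the capped count read by layer $t+1$ therefore equals $\min(c,m)$, where $m$ is the true number of neighbours satisfying $\psi$, and $\min(c,m)\ge k \iff m\ge k$ because $k\le c$. Level-$(t+1)$ Boolean combinations are composed within the same combination function. After $L=\ell$ layers the coordinate of $\varphi$ is correct, so $\N$ and $\varphi$ agree on every pointed graph; isomorphism invariance and size-independence are automatic for GNNs.

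I expect the crux to be the thresholding of graded modalities together with the choice of bound: capping multiplicities at $c$ loses no information needed to evaluate $\Diamond_k\psi$, precisely because deciding it requires only whether the neighbour count reaches the threshold $k\le c$, not its exact value. This is what lets a bounded aggregation replace the unbounded sum of \cite{DBLP:conf/iclr/BarceloKM0RS20}, and it forces the bound to be at least $c$. Keeping every state coordinate in $\{0,1\}$ (so aggregated multisets stay bounded and max-$c$-sum acts as the capped count) is the only bookkeeping needed, and the degenerate case $\ell=0$ is handled by taking $L=0$ with $\cls$ evaluating the purely Boolean $\varphi$ on the initial label.
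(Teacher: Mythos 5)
Your overall strategy is the same as the paper's: adapt the construction of \citet{DBLP:conf/iclr/BarceloKM0RS20} with one coordinate per subformula, replace the unbounded sum by max-$c$-sum aggregation where $c$ is the counting rank (so the network is bounded, hence in $\GNN{\bm}{\ac}$), justify this by the capping argument $\min(c,m)\ge k \iff m\ge k$ for $k\le c$, and obtain the $\ML$ case by observing that max-$1$-sum is componentwise max, which is set-based. The only substantive difference is scheduling: the paper uses one (homogeneous, truncated-ReLU) layer per subformula and an induction over the subformula ordering, whereas you use modal-depth-many layers and fold all Boolean composition of a given level into a single, unrestricted combination function. That variant is legitimate under the paper's definitions (combination functions are arbitrary maps on vectors), and it is more economical in depth.

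However, as written your construction has a genuine (though easily repaired) flaw at level $0$. Your initial label copies the propositional coordinates and \emph{sets all remaining coordinates to $0$}; in particular the coordinates of compound level-$0$ formulas such as $\neg p$ or $p\wedge q$ are generally wrong at $t=0$. The first aggregation then reads exactly these wrong coordinates from the neighbours: for $\varphi=\Diamond_1\neg p$, layer $1$ computes the capped count of neighbours whose $\neg p$-coordinate equals $1$, which is $0$ at every node, so the network rejects even when some neighbour falsifies $p$. Your base case (``the combination composes level-$0$ Boolean connectives'') contradicts the stated initialization: no combination has been applied at $t=0$, and if the level-$0$ Booleans are only composed inside layer $1$, they become available one layer too late, so with $L=\ell$ layers a formula such as $\Diamond_1\neg\Diamond_1\neg p$ (depth $2$) still evaluates incorrectly after layer $2$. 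The fix is one line: let the initialization step compute \emph{all} purely Boolean (level-$0$) subformula coordinates, not just the propositions, after which your induction goes through verbatim; alternatively, do what the paper does and give every subformula (Boolean ones included) its own layer, so that the subformula ordering guarantees each coordinate is correct in the labels before it is ever aggregated.
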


\begin{proof}[Proof Sketch]
Let $\varphi \in \GML$ be a  logic classifier of dimension $d$ with subformulas $\varphi_1, \ldots, \varphi_L$, such that $k \leq \ell$ if $\varphi_k$ is a subformula of $\varphi_{\ell}$.
We construct $\N_{\varphi}$ with layers $0, \ldots, L$ and
a classification function that maps a vector to \true{} iff its last element is $1$.
Layer $0$  multiplies input vectors by a  matrix $\mathbf{D} \in \R^{d \times L}$, namely  $\lambda(v)^{(1)} =  \lambda(v)^{(0)}\mathbf{D} $, where $D_{k \ell} = 1$ if the $k$th position of the  input vectors  corresponds to a proposition $\varphi_\ell$;
other entries of $\mathbf{D}$ are $0$.
All other layers are \ac{} layers of dimension $L$ using  max-$n$-sum, for $n$ the counting rank of $\varphi$. The combination function is $\comb(\mathbf{x}, \mathbf{y}) = \sigma(\mathbf{x} \mathbf{C}  + \mathbf{y} \mathbf{A} + \mathbf{b})$, where 
$\sigma(x) = \mathit{min}(\mathit{max}(0,x),1)$ is the truncated ReLU and where entries of matrices $\mathbf{A}, \mathbf{C} \in \mathbb{R}^{L \times L}$ and bias vector $\mathbf{b} \in \mathbb{R}^L$ depend on the subformulas of $\varphi$ as follows:  
\emph{(i)}~if $\varphi_{\ell}$ is a proposition, $C_{\ell\ell} = 1$, \emph{(ii)}~if $\varphi_{\ell} = \varphi_j \wedge \varphi_k$, then $C_{j \ell} = C_{k \ell} =1$ and $b_{\ell} = -1$; \emph{(iii)}~if $\varphi_{\ell} =\neg \varphi_k$, then $C_{k\ell} = -1$ and $b_{\ell} = 1$, 
and \emph{(iv)}~if  $\varphi_{\ell} = \Diamond_c \varphi_{k}$, then $A_{k\ell}=1$ and $b_{\ell} = -c + 1$. 
All other entries are zero.
Note that if $\varphi \in \ML$,  then  $\N_{\varphi} \in   \GNN{\s}{\ac}$, as required.
\end{proof}

We next show that every classifier in $\GNN{b}{\ac}$ admits an equivalent $\GML{}$ classifier
whereas 
each $\GNN{s}{\ac}$ classifier admits an equivalent $\ML{}$ classifier.
To this end, we first discuss the game-theoretic characterisations of logical 
indistinguishability for $\GML$ and $\ML$.

The \emph{$\ell$-round $c$-graded bisimulation game} \cite{DBLP:journals/corr/abs-1910-00039} is played by Spoiler (him) and Duplicator (her) on  finite pointed models 
$(\mathfrak{M},v)$ and $(\mathfrak{M}',v')$. 
A \emph{configuration} is a tuple $(\mathfrak{M}, w, \mathfrak{M}', w')$, stating that one pebble  is placed on world $w$ in $\mathfrak{M}$, and the other  on $w'$ in $\mathfrak{M'}$.
The initial configuration is $(\mathfrak{M}, v, \mathfrak{M}', v')$.
Each round proceeds in the following two steps, leading to the next configuration. 
(1)~Spoiler selects a pebble and a 
set $U_1 \neq \emptyset$ of at most $c$ neighbours of the world marked by this pebble.
Duplicator responds with a set $U_2$ of neighbours 
of the world marked by the other pebble, such that $|U_1| = |U_2|$.
(2)~Spoiler selects a world in $U_2$  and Duplicator responds with a  world in $U_1$.
If a player cannot pick an appropriate set ($U_1$ or $U_2$), they lose.
If in some configuration worlds marked by pebbles do not satisfy the same propositions, Duplicator loses.
Hence, Duplicator wins  if she has responses for all $\ell$ moves of Spoiler, or if Spoiler cannot make a move in some round.
Spoiler wins if Duplicator loses. 
We write
$(\mathfrak{M},v) \sim_{\ell,c} (\mathfrak{M}', v')$
if Duplicator has a winning strategy starting from configuration $(\mathfrak{M},v,\mathfrak{M}',v')$.
Importantly,
 $\sim_{\ell,c}$ determines indistinguishability of pointed models in $\GML_{\ell,c}$ and any class of pointed models closed under $\sim_{\ell,c}$ is definable by a $\GML_{\ell,c}$ formula.

\begin{restatable}{theorem}{gamegml}\label{th:game-GML}
\cite{DBLP:journals/corr/abs-1910-00039}
For any pointed models  $(\mathfrak{M}, v)$ and $(\mathfrak{M'}, v')$,  and any $\ell , c \in \mathbb{N}$: 
$(\mathfrak{M}, v) \sim_{\ell,c} (\mathfrak{M'}, v')$ iff  $(\mathfrak{M}, v) \equiv_{\GML_{\ell,c}} (\mathfrak{M'}, v')$ 
iff  
$(\mathfrak{M}',v') \models \varphi_{[\mathfrak{M},v]}^{\ell,c}$.

Here, the \emph{characteristic formula} $\varphi_{[\mathfrak{M},v]}^{\ell,c}$ is defined inductively on $n \geq 0$ as follows, for $p \in \prop$ and $U^n_{v,w}$ the set of worlds $u$ such that $\{v,u\} \in E$ and $(\mathfrak{M}, u) \sim_{n,c} (\mathfrak{M}, w)$.
\begin{align*}
& \varphi_{[\mathfrak{M},v]}^{0,c} := \bigwedge \{ p  : (\mathfrak{M},v) \models p \} 
\land
\bigwedge \{ \neg p  : (\mathfrak{M},v) \not\models p \},
\\
& \varphi_{[\mathfrak{M},v]}^{n+1,c} :=  \; \varphi_{[\mathfrak{M},v]}^{0,c} \wedge {}  
\\
&\bigwedge \big \{ \Diamond_k \varphi_{[\mathfrak{M},w]}^{n,c} : \{v,w\} \in E,
 k \leq \mathit{min}( |U_{v,w}^n|, c) \big \} \land {}
\\
&\bigwedge \big \{ \neg \Diamond_k \varphi_{[\mathfrak{M},w]}^{n,c} : \{v,w\} \in E \text{ and }
|U_{v,w}^n|
< k \leq c \big \} \land {}
\\
& 
\neg \Diamond_1 \bigwedge_{\{v,w\} \in E} \neg \varphi_{[\mathfrak{M}, w]}^{n,c}.
\end{align*}
Moreover, any $ \sim_{\ell,c}$-closed class  $\mathcal{C}$ of pointed models is definable  by the formula 
$\bigvee_{(\mathfrak{M},v) \in \mathcal{C}} \varphi_{[\mathfrak{M},v]}^{\ell,c}$.
\end{restatable}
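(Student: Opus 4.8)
The plan is to prove the three-way equivalence as a cycle of implications (3)$\Rightarrow$(1)$\Rightarrow$(2)$\Rightarrow$(3), where (1), (2), (3) abbreviate $(\mathfrak{M},v)\sim_{\ell,c}(\mathfrak{M}',v')$, $(\mathfrak{M},v)\equiv_{\GML_{\ell,c}}(\mathfrak{M}',v')$, and $(\mathfrak{M}',v')\models\varphi^{\ell,c}_{[\mathfrak{M},v]}$. Two preliminaries are needed first. One is the syntactic observation that $\varphi^{\ell,c}_{[\mathfrak{M},v]}\in\GML_{\ell,c}$: each recursion level introduces exactly one nesting of a graded modality and every index $k$ occurring in it is capped by $c$, so the depth is at most $\ell$ and the counting rank at most $c$, while the conjunctions are finite because $\mathfrak{M}$ is finite. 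The other is the self-satisfaction lemma $(\mathfrak{M},v)\models\varphi^{\ell,c}_{[\mathfrak{M},v]}$, obtained by a routine induction on $\ell$ directly from the semantics of $\Diamond_k$.

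Granting these, (2)$\Rightarrow$(3) is immediate: since $\varphi^{\ell,c}_{[\mathfrak{M},v]}$ is a $\GML_{\ell,c}$ formula satisfied by $(\mathfrak{M},v)$, logical equivalence transfers its satisfaction to $(\mathfrak{M}',v')$. For (1)$\Rightarrow$(2) I would induct on formula depth, with the only nontrivial case $\psi=\Diamond_k\chi$ and $k\le c$: if $(\mathfrak{M},v)\models\Diamond_k\chi$, Spoiler selects $k$ neighbours of $v$ satisfying $\chi$; Duplicator's winning response supplies $k$ neighbours of $v'$, and for each one Spoiler may play it and be matched back to a $\chi$-neighbour while Duplicator keeps winning, so the inductive hypothesis forces all of them to satisfy $\chi$, giving $(\mathfrak{M}',v')\models\Diamond_k\chi$; the converse direction is symmetric.

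The crux is (3)$\Rightarrow$(1), which I would prove by induction on $\ell$, reading off Duplicator's strategy from the three families of conjuncts and appealing to the full equivalence at level $\ell-1$. Suppose $(\mathfrak{M}',v')\models\varphi^{\ell+1,c}_{[\mathfrak{M},v]}$. If Spoiler plays in $\mathfrak{M}$, choosing a set $U_1$ of at most $c$ neighbours of $v$, Duplicator groups $U_1$ by $\sim_{\ell,c}$-type; a type $[w]$ occurring $m$ times satisfies $m\le\min(|U^{\ell}_{v,w}|,c)$, so the positive conjunct $\Diamond_m\varphi^{\ell,c}_{[\mathfrak{M},w]}$ is present and yields $m$ neighbours of $v'$ satisfying $\varphi^{\ell,c}_{[\mathfrak{M},w]}$, which by the inductive hypothesis have type $[w]$; collecting these over all types produces a response $U_2$ with $|U_2|=|U_1|$ in which every element can be matched, same-type, to an element of $U_1$. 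If Spoiler instead plays in $\mathfrak{M}'$, choosing a set $U_1'$ of at most $c$ neighbours of $v'$, the covering conjunct $\neg\Diamond_1\bigwedge_{\{v,w\}\in E}\neg\varphi^{\ell,c}_{[\mathfrak{M},w]}$ forces each element of $U_1'$ to realise the type of some neighbour $w$ of $v$, while the negative conjuncts $\neg\Diamond_k\varphi^{\ell,c}_{[\mathfrak{M},w]}$ for $|U^{\ell}_{v,w}|<k\le c$ bound the number of type-$[w]$ neighbours of $v'$ by $|U^{\ell}_{v,w}|$; hence $v$ has at least as many type-$[w]$ neighbours as $U_1'$ contains and Duplicator finds a matching $U_2$. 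In both cases the inductive hypothesis wins the remaining $\ell$-round game. This back-and-forth argument, and in particular the coordinated use of the positive, negative, and covering conjuncts in the harder $\mathfrak{M}'$-move case, is the main obstacle.

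For the definability claim, soundness of $\bigvee_{(\mathfrak{M},v)\in\mathcal{C}}\varphi^{\ell,c}_{[\mathfrak{M},v]}$ follows from self-satisfaction, and completeness from (3)$\Rightarrow$(1) together with the assumed $\sim_{\ell,c}$-closure of $\mathcal{C}$: any model satisfying some disjunct is $\sim_{\ell,c}$-equivalent to a member of $\mathcal{C}$, hence lies in $\mathcal{C}$. The disjunction is a genuine $\GML_{\ell,c}$ formula because, over a finite signature and with depth and grading bounded by $\ell$ and $c$, there are only finitely many characteristic formulas up to logical equivalence, so it collapses to a finite disjunction.
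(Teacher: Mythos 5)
Your proposal is correct and follows essentially the same route as the paper's proof: the identical cycle of three implications, with the formula-to-game direction proved by extracting Duplicator's strategy from the positive, negative, and covering conjuncts after grouping Spoiler's chosen set by $\sim_{\ell,c}$-type, and the equivalence-to-formula direction via self-satisfaction of the characteristic formula. The only differences are cosmetic: the paper argues the game-to-logic direction in contrapositive form (building a Spoiler strategy from a distinguishing formula) where you argue it directly, and it leaves implicit the collapse of the defining disjunction to a finite one, which you make explicit.
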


We note two important observations regarding Theorem~\ref{th:game-GML}. First, our construction of 
characteristic formulas corrects an error of \citet{DBLP:journals/corr/abs-1910-00039}, by including a
final conjunct that is necessary for the theorem to hold. Second, we observe that the $\ell$-round $1$-graded bisimulation games coincide with $\ell$-round bisimulation games for $\ML{}$ \cite{DBLP:books/el/07/GorankoO07}, and that the characteristic formulas $\varphi_{[\mathfrak{M},v]}^{\ell,1}$ are characteristic formulas of $\ML$.

We can now shift our attention to GNNs and show that classifiers in  $\GNN{\bm}{\ac}$ and $\GNN{\s}{\ac}$  are invariant under the   bisimulation games  for  $\GML$ and $\ML$,  respectively. 

\begin{restatable}{theorem}{biac}\label{thm:bisimulation-invariance}
The following hold:
\begin{enumerate}
\item $\GNN{\bm}{\ac}$ classifiers with $L$ layers and $k$-bounded aggregation are  $\sim_{L,k}$-invariant;

\item $\GNN{\s}{\ac}$ classifiers with $L$ layers are $\sim_{L,1}$-invariant.
\end{enumerate}
\end{restatable}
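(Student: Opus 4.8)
The plan is to prove both items at once by observing that set-based aggregation is exactly $1$-bounded, so a $\GNN{\s}{\ac}$ classifier is a $\GNN{\bm}{\ac}$ classifier with $k=1$ and part (2) is the instance $k=1$ of part (1) (here $\sim_{L,1}$ is the ordinary bisimulation game noted after Theorem~\ref{th:game-GML}). So fix $\N \in \GNN{\bm}{\ac}$ with $L$ layers all of whose aggregation functions are $k$-bounded. I would prove, by induction on $\ell \in \{0,\dots,L\}$, the strengthening: for all pointed graphs $(G_1,w_1)$ and $(G_2,w_2)$, if $(\M_{G_1},w_1) \sim_{\ell,k} (\M_{G_2},w_2)$ then $\emb_1(w_1)^{(\ell)} = \emb_2(w_2)^{(\ell)}$. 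Taking $\ell = L$, $w_i = v_i$, and applying $\cls$ then yields $\N(G_1,v_1) = \N(G_2,v_2)$, which is precisely $\sim_{L,k}$-invariance.

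For the base case $\ell = 0$, the relation $\sim_{0,k}$ forces the two pebbled worlds to satisfy the same propositions, and since the initial label of a node is exactly the Boolean vector recording which propositions hold there, $\emb_1(w_1)^{(0)} = \emb_2(w_2)^{(0)}$. For the inductive step I would assume the claim for $\ell$ and take $(\M_{G_1},w_1)\sim_{\ell+1,k}(\M_{G_2},w_2)$. Two facts then suffice to conclude via the update rule~\eqref{eq:ac}. First, a winning Duplicator strategy for the $(\ell+1)$-round game restricts to one for the $\ell$-round game, so $(\M_{G_1},w_1)\sim_{\ell,k}(\M_{G_2},w_2)$, and the induction hypothesis gives $\emb_1(w_1)^{(\ell)} = \emb_2(w_2)^{(\ell)}$. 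Second, and this is the heart of the matter, the neighbour multisets $\lBrace \emb_1(u)^{(\ell)} \rBrace_{u \in N_{G_1}(w_1)}$ and $\lBrace \emb_2(u)^{(\ell)} \rBrace_{u \in N_{G_2}(w_2)}$ coincide after $k$-bounding; since $\agg$ is $k$-bounded it then returns the same value on both, and feeding the matching current states and aggregates into the common $\comb$ yields $\emb_1(w_1)^{(\ell+1)} = \emb_2(w_2)^{(\ell+1)}$.

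To establish the multiset claim it is enough to show, for every vector $\mathbf{x}$, that the numbers $m_1,m_2$ of neighbours of $w_1,w_2$ carrying layer-$\ell$ label $\mathbf{x}$ satisfy $\min(m_1,k) = \min(m_2,k)$. By symmetry I would prove only $\min(m_2,k) \geq \min(m_1,k)$. If $\min(m_1,k)=0$ this is immediate, so set $j = \min(m_1,k) \geq 1$ and let Spoiler play, on the $w_1$-pebble, the set $U_1$ of $j \leq k$ neighbours of $w_1$ whose layer-$\ell$ label is $\mathbf{x}$. Duplicator must answer with a set $U_2$ of $j$ distinct neighbours of $w_2$; for each $u_2 \in U_2$, Spoiler may select it in the second step, whereupon Duplicator's winning response $u_1 \in U_1$ must leave a configuration she can still win in the remaining $\ell$ rounds, i.e. $(\M_{G_1},u_1)\sim_{\ell,k}(\M_{G_2},u_2)$. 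By the induction hypothesis $\emb_2(u_2)^{(\ell)} = \emb_1(u_1)^{(\ell)} = \mathbf{x}$, so all $j$ elements of $U_2$ are neighbours of $w_2$ labelled $\mathbf{x}$, giving $m_2 \geq j = \min(m_1,k)$ and hence $\min(m_2,k)\geq\min(m_1,k)$. The reverse inequality follows identically, and the whole argument specialises verbatim to $k=1$, proving part (2).

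I expect this second fact, the per-label bounded-multiplicity matching, to be the only genuine obstacle, precisely because it is where the $k$-graded structure of the game (Spoiler's sets of size at most $c=k$) must be aligned with the $k$-cap built into the aggregation. The remaining ingredients, namely monotonicity of $\sim_{\ell,k}$ in the number of rounds and the elementary algebra of truncated multiplicities, are routine.
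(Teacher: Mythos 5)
Your proof is correct and follows essentially the same route as the paper's: the same induction on layers, the same reduction of part (2) to the case $k=1$, and the same key game move in which Spoiler plays a set of $\min(k,m_1)$ equally-labelled neighbours so that the inductive hypothesis forces Duplicator's responses to carry the same layer-$\ell$ label, yielding equality of the $k$-truncated neighbour multisets. The only difference is presentational: you argue the positive direction (Duplicator's winning strategy constrains the multiplicities), whereas the paper argues the contrapositive (a multiplicity mismatch hands Spoiler a winning strategy), which is the same argument read in the other direction.
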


\begin{proof}[Proof Sketch]
We show by induction on $\ell  \leq L$ that, for any  
pointed graphs satisfying $(G_1, v_1) \sim_{\ell,k} (G_2,v_2)$, the execution of a $k$-bounded GNN  $\N \in  \GNN{\bm}{\ac}$  satisfies $\emb_1(v_1)^{(\ell)} = \emb_2(v_2)^{(\ell)}$, and thus $\N(G_1,v_1) = \N(G_2,v_2)$.

If $\ell = 0$, $(G_1, v_1) \sim_{0,k} (G_2,v_2)$ implies that
 $v_1$ and $v_2$ satisfy the same propositions, so $\emb_1(v_1)^{(0)} = \emb_2(v_2)^{(0)}$.
For $\ell \geq 1$, $(G_1, v_1) \sim_{\ell,k} (G_2,v_2)$ implies   $(G_1, v_1) \sim_{\ell-1,k} (G_2,v_2)$, so  $\emb_1(v_1)^{(\ell-1)} = \emb_2(v_2)^{(\ell-1)}$ by induction. It remains to show that $\agg(\lBrace  \emb_1(w)^{(\ell-1)} \rBrace_{w \in N_{\G_1}(v_1)})$ equals $\agg( \lBrace  \emb_2(w)^{(\ell-1)} \rBrace_{w \in N_{\G_2}(v_2)})$,  which subsequently implies  $\emb_1(v_1)^{(\ell)} = \emb_2(v_2)^{(\ell)}$ by Equation \eqref{eq:ac}. 
Suppose for the sake of contradiction, and without loss of generality, that there is  a neighbour $w_2$ of $v_2$ such that $\emb_2(w_2)^{(\ell-1)}$ occurs $k_2 < k$ times in $\lBrace  \emb_2(w)^{(\ell-1)} \rBrace_{w \in N_{G_2}(v_2)}$
and 
$k_1 > k_2$  times
in $ \lBrace  \emb_1^{(\ell-1)}(w) \rBrace_{w \in N_{\G_1}(v_1)}$. The strategy for Spoiler is to select a set $U_1$ of $\mathit{min}(k,k_1)$ elements of $w \in N_{G_1}(v_1)$ satisfying $\lambda_1(w)^{(\ell-1)} = \lambda_1(w_1)^{(\ell-1)}$. Duplicator must respond with a subset $U_2$ of neighbours of $v_2$ in $G_2$ of the same cardinality. Any such $U_2$ must contain $w_2$ such that $\lambda_2(w_2)^{(\ell-1)} \neq \lambda_1(w_1)^{(\ell-1)}$. In the second part of the round, Spoiler chooses $w_2$; then, whichever element $w_1'$ in $U_1$ Duplicator chooses, we have $\lambda_1(w_1')^{(\ell-1)} \neq \lambda_2^{(\ell-1)}(w_2)$. Hence, by the inductive hypothesis, $(G_1,w_1') \not\sim_{\ell-1,k} (G_2,w_2)$ and hence 
$(G_1,v_1) \not\sim_{\ell,k} (G_2,v_2)$, raising a contradiction.
The proof for $\N \in \GNN{\s}{\ac}$ is  a particular case, where $k=1$. 
\end{proof}

By \Cref{th:game-GML},  invariance under bisimulation games implies that graphs accepted by a GNN can be characterised by a disjunction of  characteristic formulas.

\begin{corollary}\label{cor:modal-equivalence}
Let $\N$ be a GNN with $L$ layers and  $\mathcal{C}$  the pointed models $(\mathfrak{M},v)$ accepted by $\N$. 
If $\N \in \GNN{\bm}{\ac}$ with $k$-bounded aggregations, 
it is equivalent to the $\GML$ formula $\bigvee_{(\mathfrak{M},v) \in \mathcal{C}} \varphi^{L,k}_{[\mathfrak{M},v]}$. 
If $\N \in \GNN{\s}{\ac}$, 
it is equivalent to the $\ML$ formula $\bigvee_{(\mathfrak{M},v) \in \mathcal{C}} \varphi^{L,1}_{[\mathfrak{M},v]}$. 
\end{corollary}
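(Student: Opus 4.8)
The plan is to obtain the corollary as a direct composition of the bisimulation-invariance result (Theorem~\ref{thm:bisimulation-invariance}) with the game-theoretic characterisation of $\GML$ and its definability clause (Theorem~\ref{th:game-GML}). The first step I would take is to show that the class $\mathcal{C}$ of pointed models accepted by $\N$ is closed under the relevant bisimulation equivalence. For $\N \in \GNN{\bm}{\ac}$ with $L$ layers and $k$-bounded aggregation, suppose $(\mathfrak{M},v) \in \mathcal{C}$ and $(\mathfrak{M},v) \sim_{L,k} (\mathfrak{M}',v')$. By Theorem~\ref{thm:bisimulation-invariance}, $\N$ is $\sim_{L,k}$-invariant, so $\N(\mathfrak{M}',v') = \N(\mathfrak{M},v) = \true$ and hence $(\mathfrak{M}',v') \in \mathcal{C}$; that is, $\mathcal{C}$ is $\sim_{L,k}$-closed. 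Here I implicitly use the identification of a pointed graph $(G,v)$ with its pointed model $(\mathfrak{M}_G,v)$ set up earlier, under which ``accepted by $\N$'' is well defined for models.

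With closure in hand, I would invoke the final clause of Theorem~\ref{th:game-GML}, which states that every $\sim_{L,k}$-closed class of pointed models is definable by $\bigvee_{(\mathfrak{M},v) \in \mathcal{C}} \varphi^{L,k}_{[\mathfrak{M},v]}$. Unwinding ``definable'' gives, for every pointed model $(\mathfrak{M}',v')$, the equivalence $(\mathfrak{M}',v') \models \bigvee_{(\mathfrak{M},v) \in \mathcal{C}} \varphi^{L,k}_{[\mathfrak{M},v]}$ iff $(\mathfrak{M}',v') \in \mathcal{C}$ iff $\N$ accepts $(\mathfrak{M}',v')$. This is exactly the assertion that $\N$ and the displayed formula compute the same node classifier, which is the claimed equivalence. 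The $\GNN{\s}{\ac}$ statement is the specialisation $k=1$: part (2) of Theorem~\ref{thm:bisimulation-invariance} gives $\sim_{L,1}$-invariance, and since the formulas $\varphi^{L,1}_{[\mathfrak{M},v]}$ are $\ML$ characteristic formulas, the identical argument produces the equivalent $\ML$ formula $\bigvee_{(\mathfrak{M},v) \in \mathcal{C}} \varphi^{L,1}_{[\mathfrak{M},v]}$.

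Since both ingredients are already established, I expect no substantial obstacle; the one point deserving care is the well-formedness of the displayed disjunction, which a priori ranges over infinitely many pointed models. Here I would rely on the standard fact that, over the fixed finite signature of the classifier and for fixed depth $L$ and counting rank $k$, there are only finitely many characteristic formulas $\varphi^{L,k}_{[\mathfrak{M},v]}$ up to logical equivalence, so the disjunction collapses to a genuine (finite) $\GML$ formula. As this finiteness is precisely what renders the definability clause of Theorem~\ref{th:game-GML} meaningful, it requires no separate argument here, and the corollary follows immediately.
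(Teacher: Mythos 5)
Your proposal is correct and takes essentially the same route as the paper: the corollary there is obtained precisely by combining the $\sim_{L,k}$-invariance of Theorem~\ref{thm:bisimulation-invariance} (which gives closure of $\mathcal{C}$) with the definability clause of Theorem~\ref{th:game-GML}, specialising to $k=1$ for the $\ML$ case via the observation that $\varphi^{L,1}_{[\mathfrak{M},v]}$ are $\ML$ characteristic formulas. Your remark on finiteness of the disjunction also matches the paper's justification that bounded depth, bounded grading, and a finite signature leave only finitely many characteristic formulas up to logical equivalence.
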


Therefore, $\GML \geq \GNN{\bm}{\ac}$ and  $\ML \geq  \GNN{\s}{\ac}$.
By combining these results with \Cref{prop:ml-to-gnn} we obtain the following  exact correspondence.

\begin{corollary}\label{cor:modal}
$\GML \equiv \GNN{\bm}{\ac}$ and $\ML \equiv  \GNN{\s}{\ac}$.
\end{corollary}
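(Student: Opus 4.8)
The plan is simply to assemble the two directional inclusions that the preceding results already supply: the content of \Cref{cor:modal} is that, in each case, the $\leq$ and $\geq$ inclusions combine into an equivalence $\equiv$. First I would dispatch the direction $\GML \leq \GNN{\bm}{\ac}$ (and $\ML \leq \GNN{\s}{\ac}$) by invoking \Cref{prop:ml-to-gnn}: the explicit layered construction $\N_\varphi$ given there translates every graded-modal classifier $\varphi$ into a $k$-bounded aggregate-combine GNN, where $k$ is the counting rank of $\varphi$, and this construction specialises to a set-based GNN whenever $\varphi \in \ML$.

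For the converse $\GML \geq \GNN{\bm}{\ac}$ (and $\ML \geq \GNN{\s}{\ac}$) I would appeal to \Cref{cor:modal-equivalence}, unwinding its justification as follows. Given $\N \in \GNN{\bm}{\ac}$ with $L$ layers and $k$-bounded aggregation, \Cref{thm:bisimulation-invariance} shows $\N$ is $\sim_{L,k}$-invariant, so the class $\mathcal{C}$ of pointed models accepted by $\N$ is $\sim_{L,k}$-closed. By \Cref{th:game-GML}, such a class is defined by $\bigvee_{(\M,v) \in \mathcal{C}} \varphi^{L,k}_{[\M,v]}$, which is hence equivalent to $\N$; the set-based case is the specialisation $k = 1$, in which each $\varphi^{L,1}_{[\M,v]}$ is an $\ML$ characteristic formula.

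The only point needing care---and the closest thing to an obstacle in what is otherwise a routine combination---is checking that this defining disjunction is a genuine formula rather than an infinite schema. This holds because the signature is fixed and finite (dimension $d$) and the parameters $L$ and $k$ are fixed, so $\sim_{L,k}$ admits only finitely many equivalence classes; the disjunction over $\mathcal{C}$ therefore collapses to a finite disjunction of characteristic formulas and defines a legitimate $\GML_{L,k}$ (resp.\ $\ML_{L,1}$) classifier. With both inclusions established in each case, $\GML \equiv \GNN{\bm}{\ac}$ and $\ML \equiv \GNN{\s}{\ac}$ follow directly from the definition of $\equiv$.
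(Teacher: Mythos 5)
Your proposal is correct and follows essentially the same route as the paper: the $\leq$ direction via the explicit construction of \Cref{prop:ml-to-gnn}, and the $\geq$ direction via \Cref{cor:modal-equivalence}, itself obtained by combining the $\sim_{L,k}$-invariance of \Cref{thm:bisimulation-invariance} with the definability statement of \Cref{th:game-GML}. Your closing remark on finiteness of the defining disjunction is the same point the paper makes in its technical-approach discussion (finite signature plus fixed depth and grading yield finitely many equivalence classes), so nothing is missing.
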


\section{Modal Logics with Global Counting}\label{sec:readouts}

We now consider bounded GNNs with readouts. 
We first show that \GMLC{}
is captured by $\GNN{\bm}{\acr}$ using max-$k$-sum as aggregation, whereas  
 \MLA{} is captured by $\GNN{\s}{\acr}$ using  componentwise maximum aggregation.

\begin{restatable}{theorem}{linalgacr}\label{prop:ml-to-gnn-acr}
$\GMLC \! \leq \!  \GNN{\bm}{\acr}$ and $\MLA \! \leq \!  \GNN{\s}{\acr}$.
\end{restatable}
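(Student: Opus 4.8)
The plan is to extend the layerwise construction from the proof of \Cref{prop:ml-to-gnn} so that the extra readout channel handles the counting modalities $\countE_c$, which are exactly the operators $\GMLC$ adds to $\GML$ (and, for $c=1$, the global modality $\countE_1$ that $\MLA$ adds to $\ML$). Given a classifier $\varphi$ with subformulas $\varphi_1, \dots, \varphi_L$ ordered so that $\varphi_k$ precedes $\varphi_\ell$ whenever $\varphi_k$ is a subformula of $\varphi_\ell$, I would build $\N_\varphi$ from an initial layer $0$ that places each proposition in its designated coordinate via the matrix $\mathbf{D}$, followed by $L$ $\acr$ layers of dimension $L$. Each such layer reuses max-$n$-sum as its aggregation $\agg$ over neighbours (with $n$ the counting rank of $\varphi$) and, crucially, also uses max-$n$-sum as its readout $\readout$ over all nodes; the combination function becomes $\comb(\mathbf{x}, \mathbf{y}, \mathbf{z}) = \sigma(\mathbf{x}\mathbf{C} + \mathbf{y}\mathbf{A} + \mathbf{z}\mathbf{R} + \mathbf{b})$, where $\sigma$ is the truncated ReLU, $\mathbf{x}$ the current state, $\mathbf{y}$ the neighbour aggregate, $\mathbf{z}$ the readout, and $\mathbf{R} \in \R^{L \times L}$ a new matrix.

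The matrices $\mathbf{A}, \mathbf{C}, \mathbf{b}$ would be defined exactly as in \Cref{prop:ml-to-gnn} for the propositional, Boolean, and graded-diamond cases, with $\mathbf{R}$ set to zero on all of these. The only genuinely new case is $\varphi_\ell = \countE_c \varphi_k$, for which I would set $R_{k\ell} = 1$ and $b_\ell = -c + 1$, leaving $C_{m\ell} = A_{m\ell} = 0$ for all $m$ so that coordinate $\ell$ depends only on the readout. The key observation is that, once coordinate $k$ has stabilised to the indicator of $\varphi_k$, the max-$n$-sum readout in coordinate $k$ equals $\min(n, |\nu(\varphi_k)|)$; hence $\sigma(z_k - c + 1) = 1$ precisely when $|\nu(\varphi_k)| \geq c$, i.e.\ when $\countE_c \varphi_k$ holds, and $0$ otherwise (here $n \geq c$ since $n$ is the counting rank). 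This value is node-independent, matching the global semantics of $\countE_c$.

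Correctness would then follow by the same structural induction used to establish \Cref{prop:ml-to-gnn}: for every subformula $\varphi_\ell$ and every layer index $i \in \{\ell, \dots, L\}$, coordinate $\ell$ of $\emb(v)^{(i)}$ equals $1$ iff $(\M_{G}, v) \models \varphi_\ell$, and once set a coordinate is copied forward unchanged. The added inductive case is the counting modality, handled by the computation above; the remaining cases are unaffected since $\mathbf{R}$ contributes nothing to them, and the classification function maps a vector to $\true$ iff its $L$-th coordinate is $1$. For $\MLA$, restricting to $c = 1$ means that set-based (componentwise max) aggregation and readout suffice: the readout coordinate $k$ then yields $\max_{w} \emb(w)_k$, which is $1$ iff some world satisfies $\varphi_k$, so $\N_\varphi$ lies in $\GNN{\s}{\acr}$ as required.

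The main obstacle, which I would treat carefully, is the timing of the readout: because each layer is applied synchronously to all nodes and subformulas are processed in topological order, when layer $\ell$ is applied the readout sees the states $\emb(w)^{(\ell-1)}$, and coordinate $k$ of these is already the correct indicator of $\varphi_k$ precisely because $k < \ell$. One must also verify that the chosen bound $n$ never truncates a relevant count below its threshold, which is guaranteed by taking $n$ equal to the counting rank of $\varphi$; this simultaneously certifies that $\N_\varphi$ uses only $n$-bounded aggregation and readout functions, placing it in $\GNN{\bm}{\acr}$.
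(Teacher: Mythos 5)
Your proposal is correct and follows essentially the same route as the paper's own proof: the identical extension of the construction from \Cref{prop:ml-to-gnn} with $\acr$ layers, combination $\comb(\mathbf{x}, \mathbf{y}, \mathbf{z}) = \sigma(\mathbf{x}\mathbf{C} + \mathbf{y}\mathbf{A} + \mathbf{z}\mathbf{R} + \mathbf{b})$, max-$n$-sum readout for $n$ the counting rank, the entries $R_{k\ell} = 1$, $b_\ell = -c+1$ for subformulas $\countE_c \varphi_k$, and the same structural induction (including the observation that the set-based case $c=1$ with componentwise max yields the $\MLA \leq \GNN{\s}{\acr}$ direction). Your explicit treatment of the readout timing and of why the bound $n$ never truncates a relevant count is a welcome clarification of points the paper handles implicitly via its inductive invariant over layer indices $i \in \{\ell, \dots, L\}$.
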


\begin{proof}[Proof Sketch]
Let
$\varphi \in \GMLC$; we will construct a GNN by modifying the construction from the proof of \Cref{prop:ml-to-gnn}. 
We now use $\acr$ layers with $\comb(\mathbf{x}, \mathbf{y}, \mathbf{z}) = \sigma(\mathbf{x} \mathbf{C}  + \mathbf{y} \mathbf{A} + \mathbf{z} \mathbf{R} + \mathbf{b})$, where $\mathbf{R}$  simulates global modalities.
In particular, for subformulas $\varphi_\ell =\countE_c \varphi_{k}$, we 
set $R_{k\ell} = 1$  and $b_\ell = -c+1$, whereas   
other entries are set to zeros. Moreover, we let $\readout$ (and $\agg$) be the max-$n$-sum function where $n$ is the counting rank of $\varphi$.
The remaining components are as in the proof of \Cref{prop:ml-to-gnn}. 
The construction for $\MLA$ is obtained as a particular case by taking $c = 1$ and noting the max-1-sum function
corresponds to  max aggregation.
\end{proof}

We next show that $\GNN{\bm}{\acr}$  and  $\GNN{\s}{\acr}$  classifiers admit equivalent $\GMLC$ and $\MLA$ classifiers, respectively. As a first step, our approach involves developing a new game-theoretic characterisation of $\GMLC$ that naturally covers $\MLA$ as a special case.

To this end, we introduce \emph{$\ell$-round c-graded  global bisimulation games}, by extending  the games from  \Cref{sec:agg-combine}. In each round, Spoiler can now choose to play either a standard (local) round as before, or a \emph{global round}.  
Each global round proceeds by Spoiler selecting a non-empty set $U_1$ of worlds of size bounded by $c$ in one of the models, and Duplicator subsequently picking a set $U_2$ of worlds of the same size in the other model; Spoiler then places a pebble in a world $u$ in $U_2$ and Duplicator responds by placing a pebble on a world $u'$ in $U_1$, leading to a new configuration $(\mathfrak{M},u,\mathfrak{M}', u')$.  
We write  $(\mathfrak{M}, v) \sim_{\ell,c}^{\countE} (\mathfrak{M'}, v')$ if Duplicator has a winning strategy in the  $\ell$-round $c$-graded global bisimulation game  starting at  $(\mathfrak{M}, v, \mathfrak{M}', v')$.
Similarly as in the case of non-global games (\Cref{th:game-GML}), we can show the following characterisation result.

\begin{restatable}{theorem}{gameGMLC}\label{th:game-GMLC}
For any pointed models  $(\mathfrak{M}, v)$ and $(\mathfrak{M'}, v')$  and $\ell,c \in \mathbb{N}$,  $(\mathfrak{M}, v) \sim_{\ell,c}^{\countE} (\mathfrak{M'}, v')$ iff 
 $(\mathfrak{M},v) \equiv_{\GMLC_{\ell,c}} (\mathfrak{M'}, v')$ iff
$(\mathfrak{M}',v') \models \varphi_{\countE[\mathfrak{M},v]}^{\ell,c}$.

Here, the characteristic formulas are defined inductively on $n \geq 0$ as follow, where  
$U_{v,w}^n$ is the set of all $u$ with $\{v,u\} \in E$ and $(\mathfrak{M}, u) \sim_{n,c}^{\countE} (\mathfrak{M}, w)$, and
$J_w^n$ is the set of all $u \in  V$ with $(\mathfrak{M}, u) \sim_{n,c}^{\countE} (\mathfrak{M}, w)$.
\begin{align*}
& \varphi_{\countE[\mathfrak{M},v]}^{0,c} := \bigwedge \{ p  : (\mathfrak{M},v) \models p \} 
\land
\bigwedge \{ \neg p  : (\mathfrak{M},v) \not\models p \},
\\
& \varphi_{\countE[\mathfrak{M},v]}^{n+1,c} := \; \varphi_{\countE[\mathfrak{M},v]}^{0,c} \wedge {}  
\\
&\bigwedge \big \{ \Diamond_k \varphi_{\countE[\mathfrak{M},w]}^{n,c} : \{v,w\} \in E,
 k \leq \mathit{min}( |U_{v,w}^n|, c) \big \}  \land {}
\\
&\bigwedge \big \{ \neg \Diamond_k \varphi_{\countE[\mathfrak{M},w]}^{n,c} : \{v,w\} \in E \text{ and }
|U_{v,w}^n|
< k \leq c \big \} \land {}
\\
&\bigwedge \big \{ \countE_{k} \varphi_{\countE[\mathfrak{M},w]}^{n,c} : w \in V,
 k \leq 
|J_w^n|, \text{ and } k \leq c \big \} \land {}
\\
&\bigwedge \{ \neg \countE_{k} \varphi_{\countE[\mathfrak{M},w]}^{n,c} : w \in V \text{ and }
|J_w^n|
< k \leq c \big \} \land{}
\\
& \neg \Diamond_1 \bigwedge_{\{v,w\} \in E} \neg \varphi_{\countE[\mathfrak{M},w]}^{n,c} \land{}
\neg \countE_1 \bigwedge_{w \in V} \neg \varphi_{\countE[\mathfrak{M},w]}^{n,c}.
\end{align*}

Moreover, any class $\mathcal{C}$ of pointed models closed under $\sim_{\ell,c}^{\countE}$  is definable  by $\bigvee_{(\mathfrak{M},v) \in \mathcal{C}} \varphi_{\countE[\mathfrak{M},v]}^{\ell,c}$.    
\end{restatable}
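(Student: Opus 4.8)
The plan is to prove the two biconditionals together with an auxiliary self-satisfaction property, following the pattern of \Cref{th:game-GML} but accounting for the new global rounds. First I would record two routine facts. Each $\varphi_{\countE[\mathfrak{M},v]}^{\ell,c}$ is a $\GMLC_{\ell,c}$ formula, since every inductive level introduces exactly one modal operator ($\Diamond_k$ or $\countE_k$) and no grading exceeds $c$; and every pointed model satisfies its own characteristic formula, $(\mathfrak{M},v) \models \varphi_{\countE[\mathfrak{M},v]}^{\ell,c}$, which follows by a direct induction on $\ell$. Given self-satisfaction, the implication from $(\mathfrak{M},v) \equiv_{\GMLC_{\ell,c}} (\mathfrak{M}',v')$ to $(\mathfrak{M}',v') \models \varphi_{\countE[\mathfrak{M},v]}^{\ell,c}$ is immediate, as the characteristic formula lies in $\GMLC_{\ell,c}$.

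The two substantive directions are the soundness of the game for the logic and the completeness of the characteristic formula for the game, each proved by induction on $\ell$. For soundness, that $(\mathfrak{M},v) \sim_{\ell,c}^{\countE} (\mathfrak{M}',v')$ implies $\GMLC_{\ell,c}$-equivalence, I would argue the contrapositive: from a distinguishing formula I build a winning Spoiler strategy. Local modalities $\Diamond_k$ are handled by local rounds exactly as for $\GML$. The new case is a counting modality $\countE_k \psi$ with $k \leq c$ and $\psi$ of depth at most $\ell-1$: if $\mathfrak{M}$ has at least $k$ worlds satisfying $\psi$ while $\mathfrak{M}'$ has fewer, Spoiler opens a global round, selecting a set $U_1$ of $k$ such worlds of $\mathfrak{M}$ (possible as $k \leq c$); any reply $U_2$ of size $k$ in $\mathfrak{M}'$ must contain a world failing $\psi$, so after Spoiler picks it and Duplicator answers in $U_1$ the marked worlds disagree on $\psi$, whence by the inductive hypothesis they are not $\sim_{\ell-1,c}^{\countE}$-related and Duplicator loses.

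For completeness, that $(\mathfrak{M}',v') \models \varphi_{\countE[\mathfrak{M},v]}^{\ell,c}$ implies $(\mathfrak{M},v) \sim_{\ell,c}^{\countE} (\mathfrak{M}',v')$, I would describe a Duplicator strategy. If Spoiler plays a local round, Duplicator replies as in \Cref{th:game-GML}: the positive conjuncts $\Diamond_k \varphi_{\countE[\mathfrak{M},w]}^{n,c}$ match Spoiler's selections inside $\mathfrak{M}$, while the upper-bound conjuncts $\neg\Diamond_k \varphi_{\countE[\mathfrak{M},w]}^{n,c}$ together with the coverage conjunct $\neg\Diamond_1 \bigwedge_{\{v,w\} \in E}\neg\varphi_{\countE[\mathfrak{M},w]}^{n,c}$ handle selections inside $\mathfrak{M}'$, the inductive hypothesis certifying that each resulting configuration is $\sim_{n,c}^{\countE}$-related. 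A global round is the mirror image with all worlds playing the role of neighbours: the conjuncts $\countE_k \varphi_{\countE[\mathfrak{M},w]}^{n,c}$ supply enough witnesses in $\mathfrak{M}'$ when Spoiler selects in $\mathfrak{M}$, whereas the bounds $\neg\countE_k \varphi_{\countE[\mathfrak{M},w]}^{n,c}$ and the global coverage conjunct $\neg\countE_1 \bigwedge_{w \in V}\neg\varphi_{\countE[\mathfrak{M},w]}^{n,c}$ guarantee that every world of $\mathfrak{M}'$ realises one of the classes $J_w^n$ when Spoiler selects in $\mathfrak{M}'$.

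I expect the main obstacle to be the bookkeeping in this last matching step: I must show that the positive and negative counting conjuncts, all capped at $c$, pin down the size of each $\sim_{n,c}^{\countE}$-class up to the threshold $c$, so that Duplicator can always return a set $U_2$ of the correct cardinality whose elements lie in the classes demanded by Spoiler's $U_1$ --- a finite bipartite matching argument, valid because bounded depth and grading over a finite signature leave only finitely many classes. The final \emph{moreover} statement then follows directly: for a $\sim_{\ell,c}^{\countE}$-closed class $\mathcal{C}$, self-satisfaction shows that every member satisfies the disjunction, and conversely any model satisfying some disjunct $\varphi_{\countE[\mathfrak{M},v]}^{\ell,c}$ is $\sim_{\ell,c}^{\countE}$-equivalent to $(\mathfrak{M},v) \in \mathcal{C}$ by the proven biconditional, hence belongs to $\mathcal{C}$.
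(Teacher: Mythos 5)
Your proposal is correct and follows essentially the same route as the paper's proof: the same three-implication decomposition (game-equivalence implies logical equivalence via a contrapositive Spoiler strategy using a global round for $\countE_k$; logical equivalence implies satisfaction of the characteristic formula via self-satisfaction; and satisfaction implies a Duplicator winning strategy, with the positive counting conjuncts supplying witnesses for Spoiler's selections in $\mathfrak{M}$ and the negative counting conjuncts plus the global coverage conjunct handling selections in $\mathfrak{M}'$). The ``bipartite matching'' bookkeeping you flag is exactly the step the paper resolves by grouping Spoiler's set into $\sim_{n,c}^{\countE}$-classes and invoking the conjuncts $\countE_{t'}\varphi_{\countE[\mathfrak{M},z_1]}^{n,c}$ classwise, so no new idea is missing.
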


We can observe that if $c=1$, then characteristic formulas are in \MLE{}, and our games correspond to global bisimulation games developed for \MLE{} \cite[Section~5.1]{DBLP:books/el/07/GorankoO07}. 


We are now ready to show that classifiers in $\GNN{\bm}{\acr}$  and  $\GNN{\s}{\acr}$   are invariant under the bisimulation games for \GMLC{}  and  \MLE{}, respectively.

\begin{restatable}{theorem}{acrbi}\label{thm:arc-bisimulation-invariance}
The following hold:
\begin{enumerate}
     \item $\GNN{\bm}{\acr}$ classifiers with $L$ layers and k-bounded aggregations and readouts are $\sim_{L,k}^{\countE}$-invariant;
    \item $\GNN{\s}{\acr}$ classifiers with $L$ layers are $\sim_{L,1}^{\countE} $-invariant.  
\end{enumerate}
\end{restatable}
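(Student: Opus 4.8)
The plan is to mirror the inductive argument used for \Cref{thm:bisimulation-invariance}, but to strengthen the per-layer claim so that it simultaneously controls the local neighbour aggregation and the global readout. Concretely, I would prove by induction on $\ell \le L$ that for \emph{all} pointed graphs with $(G_1,w_1) \sim_{\ell,k}^{\countE} (G_2,w_2)$ one has $\emb_1(w_1)^{(\ell)} = \emb_2(w_2)^{(\ell)}$; invariance of the classifier then follows by applying $\cls$ at $\ell = L$. The universal quantification over pointed graphs is essential here, because the readout at a node $v_i$ depends on the layer-$(\ell-1)$ labels of \emph{every} node of $G_i$, so the induction hypothesis must be available for arbitrary node pairs, not only the distinguished $v_1,v_2$. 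Statement~2 will then be the special case $k=1$, where $k$-bounded aggregation and readout are exactly the set-based functions.

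The base case $\ell = 0$ is immediate: $\sim_{0,k}^{\countE}$ forces the two worlds to satisfy the same propositions, hence the same initial label. For the inductive step with $\ell \ge 1$, note that $(G_1,v_1)\sim_{\ell,k}^{\countE}(G_2,v_2)$ implies $(G_1,v_1)\sim_{\ell-1,k}^{\countE}(G_2,v_2)$, so by the hypothesis $\emb_1(v_1)^{(\ell-1)} = \emb_2(v_2)^{(\ell-1)}$, which supplies the first argument of $\comb$ in \eqref{eq:integrated}. It then remains to equate the neighbour aggregation and the readout. Since $\agg$ and $\readout$ are $k$-bounded, it suffices to show that the two relevant multisets of layer-$(\ell-1)$ labels agree \emph{after capping all multiplicities at} $k$: the $k$-capped multiset over $N_{G_1}(v_1)$ versus $N_{G_2}(v_2)$, and the $k$-capped multiset over $V_1$ versus $V_2$.

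The neighbour part is handled exactly as in \Cref{thm:bisimulation-invariance}: since Spoiler may always opt for a local round, a mismatch in the $k$-capped neighbour multisets yields a winning Spoiler strategy in the global game as well, contradicting $\sim_{\ell,k}^{\countE}$. The genuinely new ingredient, and the part I expect to be the main obstacle, is the readout, which I would handle with a \emph{global} round. Suppose the $k$-capped global multisets differ; then some label $\mathbf{a}$ occurs $n_1$ times in $G_1$ and $n_2$ times in $G_2$ at layer $\ell-1$ with $\mathit{min}(n_1,k) \ne \mathit{min}(n_2,k)$. Without loss of generality $\mathit{min}(n_1,k) > \mathit{min}(n_2,k)$, which forces $n_2 < \mathit{min}(n_1,k) \le k = c$. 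Spoiler plays a global round selecting $U_1 \subseteq V_1$ of size $\mathit{min}(n_1,k)$ consisting of nodes whose layer-$(\ell-1)$ label is $\mathbf{a}$; this is legal since $U_1$ is non-empty and $|U_1| \le c$. Any Duplicator response $U_2 \subseteq V_2$ of equal size must contain some $w_2$ with $\emb_2(w_2)^{(\ell-1)} \ne \mathbf{a}$, because fewer than $|U_2|$ nodes of $G_2$ carry label $\mathbf{a}$. Spoiler then pebbles $w_2$; for every $w_1' \in U_1$ that Duplicator might answer with, $\emb_1(w_1')^{(\ell-1)} = \mathbf{a} \ne \emb_2(w_2)^{(\ell-1)}$, so by the contrapositive of the induction hypothesis $(G_1,w_1') \not\sim_{\ell-1,k}^{\countE} (G_2,w_2)$. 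Hence Spoiler wins the remaining $\ell-1$ rounds, contradicting $(G_1,v_1) \sim_{\ell,k}^{\countE} (G_2,v_2)$. Therefore the $k$-capped global multisets coincide, the readouts agree, and $\comb$ returns equal vectors, giving $\emb_1(v_1)^{(\ell)} = \emb_2(v_2)^{(\ell)}$.

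The only delicate point to get right is the bookkeeping of the interaction between the two aggregation channels: one must confirm that $\sim_{\ell,k}^{\countE}$ is strong enough to control \emph{both} multisets, each with a single round's worth of Spoiler move, and that capping at $k$ is matched precisely to the size bound $c$ on the sets selected in local and global rounds. Once the game's size bound $c$ is identified with the aggregation bound $k$, both the neighbour and the readout arguments collapse to the same pigeonhole-plus-induction-hypothesis pattern, so the proof reduces to verifying these two size alignments.
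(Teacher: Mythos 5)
Your proposal is correct and follows essentially the same route as the paper's proof: induction on layers with the hypothesis quantified over all node pairs, reuse of the local-round argument from the \ac{} case for the neighbour aggregation, and a global-round pigeonhole argument (capped multiplicities versus the grading bound $c=k$) for the readout, with the set-based statement as the $k=1$ special case. The only difference is organisational — the paper phrases the inductive step as a contrapositive split into three cases, while you argue directly and derive contradictions for each multiset mismatch — which does not change the substance.
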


\begin{proof}[Proof Sketch]
The proof has the same structure as in Theorem \ref{thm:bisimulation-invariance}.
The inductive hypothesis implies  
$\emb_1(v_1)^{(\ell-1)} = \emb_2(v_2)^{(\ell-1)}$ and
$\mathit{min}(k,\lBrace{\emb_1(w)^{(\ell-1)}\rBrace}_{w \in N_{\G_1}(v_1) })$ equals $\mathit{min}(k,\lBrace{ \emb_2(w)^{(\ell-1)}\rBrace}_{w \in N_{\G_2}(v_2) })$. Additionally we can show now that   $\readout( \lBrace  \emb_1(w)^{(\ell-1)} \rBrace_{w \in V_1})$ equals $\readout( \lBrace  \emb_2(w)^{(\ell-1)} \rBrace_{w \in V_2})$, which then implies that $\emb_1(v_1)^{(\ell)} = \emb_2(v_2)^{(\ell)}$ by Equation \eqref{eq:integrated}. Indeed, if these multisets were not equal, 
Spoiler could find (w.l.o.g.) a node \ $w_2 \in V_2$ such that $\emb_2(w_2)^{(\ell-1)}$ occurs $k_2 < k$ times in $\lBrace  \emb_2(w)^{(\ell-1)} \rBrace_{w \in V_2}$
and 
$k_1 > k_2$  times
in $ \lBrace  \emb_1^{(\ell-1)}(w) \rBrace_{w \in V_1}$. This would allow Spoiler to play a global round that wins the game. 

The proof for the set-based case can again be obtained as a particular case, with $c=1$. 
\end{proof}

As before, Theorems \ref{th:game-GMLC} and \ref{thm:arc-bisimulation-invariance} imply the following.

\begin{corollary}\label{cor:global-modal-equivalence}
Let $\N$ be a GNN with $L$ layers and  $\mathcal{C}$  the pointed models $(\mathfrak{M},v)$ accepted by $\N$. 
If $\N \in \GNN{\bm}{\acr}$ with $k$-bounded aggregations, 
it is equivalent to the $\GMLC$ formula $\bigvee_{(\mathfrak{M},v) \in \mathcal{C}} \varphi^{L,k}_{\countE[\mathfrak{M},v]}$. 
If $\N \in \GNN{\s}{\acr}$, 
it is equivalent to the $\MLA$ formula $\bigvee_{(\mathfrak{M},v) \in \mathcal{C}} \varphi^{L,1}_{\countE[\mathfrak{M},v]}$. 
\end{corollary}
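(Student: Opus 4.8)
The plan is to mirror the derivation of Corollary~\ref{cor:modal-equivalence}, now using the global game and its characteristic formulas from Theorem~\ref{th:game-GMLC} in place of the local ones. I would treat the bounded case $\N \in \GNN{\bm}{\acr}$ explicitly; the set-based case is then just the instance $k=1$, for which the characteristic formulas $\varphi^{L,1}_{\countE[\mathfrak{M},v]}$ lie in $\MLA$ by the observation following Theorem~\ref{th:game-GMLC}.

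First I would unfold what invariance gives us. By Theorem~\ref{thm:arc-bisimulation-invariance}, a classifier $\N \in \GNN{\bm}{\acr}$ with $L$ layers and $k$-bounded aggregations and readouts is $\sim_{L,k}^{\countE}$-invariant, i.e.\ $(G_1,v_1) \sim_{L,k}^{\countE} (G_2,v_2)$ implies $\N(G_1,v_1) = \N(G_2,v_2)$. Consequently the class $\mathcal{C}$ of pointed models accepted by $\N$ is closed under $\sim_{L,k}^{\countE}$: if $(\mathfrak{M},v) \in \mathcal{C}$ and $(\mathfrak{M}',v') \sim_{L,k}^{\countE} (\mathfrak{M},v)$, then, since the two agree on the output of $\N$, the model $(\mathfrak{M}',v')$ is accepted as well. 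The second step is to invoke the definability statement of Theorem~\ref{th:game-GMLC}: every $\sim_{L,k}^{\countE}$-closed class of pointed models is defined by $\bigvee_{(\mathfrak{M},v) \in \mathcal{C}} \varphi^{L,k}_{\countE[\mathfrak{M},v]}$. Applying this to the $\mathcal{C}$ above yields a $\GMLC$ formula satisfied by exactly the pointed models accepted by $\N$, which is precisely the required equivalence; for $\GNN{\s}{\acr}$ the same argument with $k=1$ produces an $\MLA$ formula.

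The one point requiring care---and the main obstacle---is that the displayed disjunction is indexed over the a priori infinite set $\mathcal{C}$, whereas a logic classifier must be a single finite formula. The resolution is that, over the fixed finite signature $p_1,\dots,p_d$ with modal depth bounded by $L$ and grading bounded by $k$, the relation $\sim_{L,k}^{\countE}$ has only finitely many equivalence classes, and $\sim_{L,k}^{\countE}$-equivalent pointed models induce the same characteristic formula $\varphi^{L,k}_{\countE[\mathfrak{M},v]}$. Hence, up to logical equivalence, the disjunction collapses to a finite one and is a genuine $\GMLC$ (respectively $\MLA$) classifier. This finiteness is the same phenomenon already exploited in the finite-spectrum bound of Proposition~\ref{prop:spectrum}: bounded depth, bounded grading, and a finite signature leave only finitely many distinguishable behaviours, so only finitely many distinct characteristic formulas can occur in the disjunction. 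I expect the remainder of the argument to be entirely routine given Theorems~\ref{th:game-GMLC} and~\ref{thm:arc-bisimulation-invariance}, exactly as in the non-global setting.
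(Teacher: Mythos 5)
Your proposal is correct and follows exactly the paper's (very terse) argument: invariance from Theorem~\ref{thm:arc-bisimulation-invariance} makes the accepted class $\mathcal{C}$ closed under $\sim_{L,k}^{\countE}$, and the definability clause of Theorem~\ref{th:game-GMLC} then yields the stated disjunction, with the set-based case obtained as the instance $k=1$. Your additional observation that the disjunction is finite up to logical equivalence (finitely many $\sim_{L,k}^{\countE}$-classes over a finite signature with bounded depth and grading) is a sound elaboration of a point the paper only gestures at in its overview of the technical approach.
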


Therefore, $\GMLC \geq \GNN{\bm}{\acr}$ and $\MLA \geq  \GNN{\s}{\acr}$.
Combining these with \Cref{prop:ml-to-gnn-acr} we have the following.

\begin{corollary}\label{cor:acrb}
The following equivalences hold: $\GMLC \equiv \GNN{\bm}{\acr}$ and $\MLA \equiv \GNN{\s}{\acr}$.
\end{corollary}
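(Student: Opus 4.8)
The plan is to obtain each equivalence by establishing the two inclusions separately, since by definition $\mathcal{F} \equiv \mathcal{F}'$ holds exactly when $\mathcal{F} \leq \mathcal{F}'$ and $\mathcal{F}' \leq \mathcal{F}$. I would first invoke \Cref{prop:ml-to-gnn-acr} for the inclusions $\GMLC \leq \GNN{\bm}{\acr}$ and $\MLA \leq \GNN{\s}{\acr}$: every formula is simulated by the explicit GNN built in its proof, whose $\acr$ layers use max-$n$-sum aggregation and readout together with a truncated-ReLU combination, the readout matrix $\mathbf{R}$ encoding the global counting modalities $\countE_c$. For the reverse inclusions I would appeal to \Cref{cor:global-modal-equivalence}, which shows that any classifier in $\GNN{\bm}{\acr}$ (resp.\ $\GNN{\s}{\acr}$) is equivalent to the explicit disjunction $\bigvee_{(\mathfrak{M},v) \in \mathcal{C}} \varphi^{L,k}_{\countE[\mathfrak{M},v]}$ of characteristic formulas ranging over the accepted pointed models $\mathcal{C}$, and hence to a $\GMLC$ (resp.\ $\MLA$) classifier. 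Putting the two directions together yields the stated equivalences.

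The substance behind the reverse inclusions rests on two ingredients that I would treat as the real content. First, the game characterisation of \Cref{th:game-GMLC}: the $\ell$-round $c$-graded global bisimulation relation $\sim_{\ell,c}^{\countE}$ coincides with $\equiv_{\GMLC_{\ell,c}}$, and every $\sim_{\ell,c}^{\countE}$-closed class of pointed models is definable by a finite disjunction of characteristic formulas; finiteness here is guaranteed because, over a fixed finite signature with bounded depth $\ell$ and bounded grading $c$, there are only finitely many $\sim_{\ell,c}^{\countE}$-classes. Second, the invariance of \Cref{thm:arc-bisimulation-invariance}: a $k$-bounded classifier $\N \in \GNN{\bm}{\acr}$ with $L$ layers respects $\sim_{L,k}^{\countE}$, so the accepted set $\mathcal{C}$ is $\sim_{L,k}^{\countE}$-closed and thus falls under the definability result. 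The set-based claims then arise uniformly as the special case $k = 1$, where max-$1$-sum collapses to componentwise maximum and the characteristic formulas land in $\MLA$.

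The step I expect to carry the real difficulty is not the assembly in this corollary---which is immediate once the two directions are in place---but the readout component of the invariance argument in \Cref{thm:arc-bisimulation-invariance}. The local aggregation case is handled exactly as in \Cref{thm:bisimulation-invariance}, but to show that $\readout(\lBrace \emb_1(w)^{(\ell-1)} \rBrace_{w \in V_1})$ and $\readout(\lBrace \emb_2(w)^{(\ell-1)} \rBrace_{w \in V_2})$ agree, one must combine $k$-boundedness of the readout with a \emph{global} Spoiler move: if some label occurred with different capped multiplicities (up to $k$) across the two graphs, Spoiler selects a size-bounded set of witnesses globally and, by the inductive hypothesis on $\sim_{\ell-1,k}^{\countE}$, forces a label mismatch, contradicting $(\G_1,v_1) \sim_{\ell,k}^{\countE} (\G_2,v_2)$. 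Making this multiplicity-matching argument precise---and observing that capping at $k$ is exactly what the size bound $c = k$ on the global round supplies---is the crux of the whole section; the present corollary merely records the consequence.
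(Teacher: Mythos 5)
Your proposal is correct and follows exactly the paper's own route: the forward inclusions come from the explicit GNN construction of \Cref{prop:ml-to-gnn-acr}, and the reverse inclusions from \Cref{cor:global-modal-equivalence}, which the paper likewise derives by combining the game characterisation of \Cref{th:game-GMLC} with the $\sim_{L,k}^{\countE}$-invariance of \Cref{thm:arc-bisimulation-invariance}, with the set-based claims obtained as the $k=1$ special case. Your identification of the readout step in the invariance argument as the substantive content also matches where the paper places the work.
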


\section{Two-Variable Fragments} \label{sec:two-var}
Finally, we consider bounded GNNs with non-neighbour aggregation. We show that $\GNN{\bm}{\acn}$  and  $\GNN{\s}{\acn}$  exactly correspond to $\Ctwo$ and $\FOtwo$, respectively. 

We first show that each 
$\Ctwo{}$ classifier can be expressed in $\GNN{\bm}{\acn}$ using max-$k$-sum for neighbour and non-neighbour aggregations, whereas each
$\FOtwo{}$ classifier can be expressed in $\GNN{\s}{\acn}$ using componentwise max.

\begin{restatable}{theorem}{linalgacn}\label{lem:twovar-to-gnn}
$\Ctwo{} \leq \GNN{\bm}{\acn}$ and $\FOtwo \leq  \GNN{\s}{\acn}$.
\end{restatable}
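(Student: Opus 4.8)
The plan is to mirror the inductive ``formula-to-GNN'' constructions of \Cref{prop:ml-to-gnn} and \Cref{prop:ml-to-gnn-acr}, replacing the single neighbour aggregation by the pair $(\agg,\aggn)$ of an $\acn$ layer (Equation~\eqref{eq:acn}). The essential new ingredient is logical rather than architectural: unlike a modal subformula, a $\Ctwo$ subformula may use both variables $x$ and $y$, so it need not define a node property. I would therefore first bring $\varphi\in\Ctwo$ into a normal form in which every counting construct is applied to a subformula with a single free variable and is of one of three kinds: neighbour counting $\exists^{\geq k}y\,(E(x,y)\wedge\chi(y))$, non-neighbour counting $\exists^{\geq k}y\,(\neg E(x,y)\wedge x\neq y\wedge\chi(y))$, and global counting $\countE_k y\,\chi(y)$, where $\chi$ has $y$ as its only free variable. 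Over our signature (unary predicates plus one symmetric, loop-free $E$) such a normal form exists because, with only two variables available, the sole quantifier-free link between $x$ and $y$ is through the atoms $E(x,y)$ and $x=y$; conditioning each quantifier $\exists^{\geq k}y$ on the three mutually exclusive cases ``$y=x$'', ``$E(x,y)$'', and ``$\neg E(x,y)\wedge x\neq y$'' collapses its matrix to a Boolean combination of unary formulas in $y$ (after a case analysis on the realizable type of the current node $x$, with the $y=x$ case becoming a unary condition on $x$). This is the standard correspondence between the two-variable fragment and graded modal logics equipped with the relation, its complement, and the universal modality.

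Given the normal form, I would enumerate its subformulas $\varphi_1,\dots,\varphi_L$ respecting the subformula order and build $\N_\varphi$ exactly as before: a layer $0$ applies a matrix $\mathbf{D}$ selecting the propositional coordinates, and every later layer is an $\acn$ layer of dimension $L$ with $\agg=\aggn=$ max-$n$-sum for $n$ the counting rank of $\varphi$, and combination $\comb(\mathbf{x},\mathbf{y},\mathbf{z})=\sigma(\mathbf{x}\mathbf{C}+\mathbf{y}\mathbf{A}+\mathbf{z}\overline{\mathbf{A}}+\mathbf{b})$ with $\sigma$ the truncated ReLU, where $\mathbf{x}$ carries the node's current state, $\mathbf{y}$ the neighbour aggregate and $\mathbf{z}$ the non-neighbour aggregate. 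The entries for propositions, $\wedge$ and $\neg$ are copied verbatim from \Cref{prop:ml-to-gnn}. For $\varphi_\ell=\exists^{\geq c}y\,(E(x,y)\wedge\varphi_k(y))$ I set $A_{k\ell}=1$, $b_\ell=-c+1$; for the non-neighbour counterpart I instead set $\overline{A}_{k\ell}=1$, $b_\ell=-c+1$; and for global counting $\varphi_\ell=\countE_c y\,\varphi_k(y)$ I set $C_{k\ell}=A_{k\ell}=\overline{A}_{k\ell}=1$ and $b_\ell=-c+1$, exploiting the disjoint partition $V=\{v\}\sqcup N_{\G}(v)\sqcup\Nn_{\G}(v)$ to recover the total count from the self value and the two aggregates.

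Correctness would follow by induction on subformula structure, showing that coordinate $\ell$ equals $1$ exactly when $\varphi_\ell$ holds at the node. The one computation to check is that max-$n$-sum reads off thresholds correctly: since the tracked coordinates are $0/1$, the $k$-th component of each aggregate equals $\min(n,m)$ where $m$ is the number of relevant nodes satisfying $\varphi_k$, and because $c\le n$ the output $\sigma(\min(n,m)-c+1)$ is $1$ iff $m\ge c$. For global counting one additionally verifies that $\sigma\big([\varphi_k(v)]+\min(n,a)+\min(n,b)-c+1\big)=1$ iff $[\varphi_k(v)]+a+b\ge c$; this holds because whenever $\min(n,a)+\min(n,b)<c$ we have $a,b<c\le n$, so the truncations are inactive and the expression equals the true total. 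Since max-$n$-sum is $n$-bounded, $\N_\varphi\in\GNN{\bm}{\acn}$. For the $\FOtwo$ claim all counting bounds equal $1$, so max-$1$-sum coincides with componentwise maximum, i.e.\ set-based aggregation, giving $\N_\varphi\in\GNN{\s}{\acn}$.

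I expect the normal-form reduction to be the main obstacle: the GNN construction and its verification are a direct extension of the earlier proofs, but justifying that every $\Ctwo$ unary query reduces to neighbour, non-neighbour, and global counting over unary subformulas requires the two-variable-specific analysis of how $x$ and $y$ interact, including the case split on the type of $x$, the treatment of the $y=x$ case, and the symmetric handling of subformulas with two free variables.
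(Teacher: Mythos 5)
Your proposal is correct and follows the same overall strategy as the paper -- reduce \Ctwo{} to a modal-style normal form whose counting constructs range over neighbours, non-neighbours, and the node itself, then run the matrix construction of \Cref{prop:ml-to-gnn} with \acn{} layers -- and your matrix entries and threshold arithmetic (including the observation that truncation by max-$n$-sum is inactive exactly when it matters) match the paper's. The genuine difference is how the normal form is obtained. The paper does not derive it: it invokes the known equivalence of \Ctwo{} over coloured graphs with the logic \EMLC{} of \citet{DBLP:conf/csl/LutzSW01} and the normal form of \citet[Lemma D.4]{DBLP:conf/iclr/BarceloKM0RS20}, whose eight modal parameters ($id$, $\neg id$, $e$, $\neg e$, $id \cup e$, $\neg id \cap \neg e$, $e \cup \neg e$, $e \cap \neg e$) are each compiled into a row pattern over $\mathbf{C}$, $\mathbf{A}$, $\overline{\mathbf{A}}$; your three primitive constructs (neighbour, non-neighbour-excluding-self, global) are a sub-basis of these and suffice, since any of the eight modalities decomposes as a finite disjunction over ways of splitting the count across the disjoint cells $\{v\}$, $N_\G(v)$, $\Nn_\G(v)$. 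What the paper's route buys is safety and brevity: the hardest step -- showing every \Ctwo{} formula reduces to counting over unary subformulas via the case split on the link type of $(x,y)$, with complete-type bookkeeping for mixed Boolean combinations of $x$- and $y$-formulas -- is exactly the content of the cited results, and the paper never has to prove it. What your route buys is self-containment and a leaner target logic, but as you yourself flag, that reduction is the real obstacle, and in your write-up it remains a sketch; to make the argument complete you would either need to carry out that type-theoretic case analysis in full (including the partition-of-$k$ splitting of counting quantifiers over mutually exclusive link cases) or simply fall back on the same citations the paper uses, at which point the two proofs coincide.
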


\begin{proof}[Proof Sketch]
Similarly to \citet{DBLP:conf/iclr/BarceloKM0RS20}, we exploit the fact that  \Ctwo{} has the same expressive power as the modal logic \EMLC{} with complex modalities \cite[Theorem 1]{DBLP:conf/csl/LutzSW01}.
We can show that \EMLC{} classifiers in normal form \cite[Lemma D.4]{DBLP:conf/iclr/BarceloKM0RS20} can be captured by $\GNN{\bm}{\acn}$.
The construction is similar to  that of \Cref{prop:ml-to-gnn}, but using \acn{} layers.
We use max-$n$-sum aggregations where $n$ is the counting rank of the \EMLC{} formula.
Then, $\comb(\mathbf{x}, \mathbf{y}, \mathbf{z}) = \sigma( \mathbf{x}  \mathbf{C} + \mathbf{y} \mathbf{A} + \mathbf{z} \overline{\mathbf{A}} + \mathbf{b})$, with matrix and vector entries  depending on the subformulas  $\varphi_\ell$. 
If $\varphi_{\ell}$ is a proposition, conjunction, or negation,  the $\ell$th columns of $\mathbf{A}$, $\mathbf{C}$, and  $\mathbf{b}$ are as in  \Cref{prop:ml-to-gnn}, and the $\ell$th column of  $\overline{\mathbf{A}}$ has only $0$s. For the remaining cases,
the $\ell$th columns of $\mathbf{A}$, $\mathbf{C}$, $\overline{\mathbf{A}}$, and  $\mathbf{b}$ are defined as in the construction of  \cite[Theorem 5.1]{DBLP:conf/iclr/BarceloKM0RS20}, except that we use combinations of bounded neighbour and non-neighbour aggregation instead of combinations of unbounded aggregation and global readouts.
The proof for \FOtwo{} is a particular case, where operators in $\EMLC$ can count up to 1.
\end{proof}

Games for two-variable logics \cite{DBLP:books/sp/Libkin04} are 
similar to bisimulation games, but they are now played with
two pairs of pebbles.
In what follows we define a variant of the game for \Ctwo{} \cite{DBLP:journals/tcs/GradelO99}, obtained by imposing restrictions on both the number of rounds and on the possible counting.
We let the \emph{2-pebble $\ell$-round $c$-graded game} be  played on two models $\mathfrak{M}$ and $\mathfrak{M'}$
by Spoiler and Duplicator  with two pairs of pebbles:
$(p_{\mathfrak{M}}^1, p_{\mathfrak{M}'}^1)$ and  $(p_{\mathfrak{M}}^2, p_{\mathfrak{M}'}^2)$. 
After each round, the pebble positions  define a mapping $\pi$ of two elements in $\M$ into two elements of $\M'$. 
Duplicator has a winning strategy if she can ensure that, after each round, 
$\pi$ is a partial isomorphism between the models.
For node classification, we consider games in which the starting configuration has $p_{\mathfrak{M}}^1$ and $p_{\mathfrak{M}'}^1$ placed on some elements of $\M$ and $\M'$, respectively.
Each round is played as follows: (1) Spoiler chooses a model (say, $\mathfrak{M}$), one of pebble pairs $i \in \{1,2\}$, and  a non-empty
subset $U \subseteq V$ with  $|U| \leq c$.
Duplicator responds with a subset $U' \subseteq V'$ such that $|U'| = |U|$.
(2) Spoiler places pebble  
$p_{\mathfrak{M}'}^i$ on some $u' \in U'$. Duplicator responds by placing  
$p_{\mathfrak{M}}^i$ on some $u \in U$.
We write  $(\mathfrak{M}, a) \sim^{2}_{\ell,c}  (\mathfrak{M}',a')$ if Duplicator has a winning strategy when  $p_{\mathfrak{M}}^1$ and $p_{\mathfrak{M}'}^1$ are initially placed on elements $a$ and $a'$, respectively.

As we establish next, this game variant  characterises indistinguishability in the logic $\Ctwo{}_{\ell,c}$.
What is crucial is that we consider both bounded depth and counting rank.
As a result, formulas of $\Ctwo{}_{\ell,c}$ have finitely many equivalence classes (with respect to the logical equivalence), and so, any class of models closed under the equivalence in $\Ctwo{}_{\ell,c}$ is definable by a (finite) $\Ctwo{}_{\ell,c}$ formula.



\begin{restatable}{theorem}{ctwogame}\label{th:Ctwo-game}
For any pointed models  $(\mathfrak{M}, a)$ and $(\mathfrak{M'}, a')$ and any $\ell,c \in \mathbb{N}$:
$(\mathfrak{M}, a) \sim^{2}_{\ell,c} (\mathfrak{M}',a')$ iff $a$ in $\mathfrak{M}$ and $a'$ in $\mathfrak{M}'$ satisfy the same $\Ctwo_{\ell,c}$ formulas with one free variable. 
Furthermore, any class $\mathcal{C}$ of pointed models closed under $\sim^{2}_{\ell,c}$ is definable by a $\Ctwo_{\ell,c}$ formula.
\end{restatable}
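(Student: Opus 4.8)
The plan is to mirror the bidirectional template used for Theorems~\ref{th:game-GML} and~\ref{th:game-GMLC}, now adapted to the two-pebble setting. The central device is a family of \emph{characteristic formulas} defined by induction on the number $n$ of remaining rounds. Since the game marks at most two elements at a time, for each pebble configuration I would define a $\Ctwo$ formula with the matching free variables: a one-free-variable formula $\chi^{n,c}_{(\M,a)}(x)$ for the initial single-pebble configuration, and a two-free-variable formula $\chi^{n,c}_{(\M,a,b)}(x,y)$ once the second pebble has been placed. The base case $\chi^{0,c}$ is the complete atomic description of the marked elements---which unary predicates hold, whether the two elements coincide, and whether they are joined by an edge---which is precisely what the partial-isomorphism winning condition tests. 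For the inductive step, $\chi^{n+1,c}$ conjoins $\chi^{0,c}$ with, for each pebble pair $i\in\{1,2\}$ and each $n$-round type $\tau$ reachable by repositioning that pebble, a counting conjunct $\countE_{k_\tau}\,v_i.\,\chi^{n,c}_\tau$ together with the negated companion $\neg\,\countE_{k_\tau+1}\,v_i.\,\chi^{n,c}_\tau$ whenever $k_\tau<c$, where $v_i\in\{x,y\}$ is the variable rebound by moving pebble $i$ and $k_\tau=\min(c,\,m_\tau)$ caps the number $m_\tau$ of elements realising $\tau$ at $c$. Crucially, moves range over the whole model rather than over neighbours, so these are genuine (global) counting quantifiers of $\Ctwo$; capping every count at $c$ keeps each quantifier within rank $c$, and the nesting depth equals the number of rounds, so $\chi^{\ell,c}\in\Ctwo_{\ell,c}$.

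I would then prove the two directions separately. For the forward implication I would argue the contrapositive: if a $\Ctwo_{\ell,c}$ formula with one free variable separates $(\M,a)$ from $(\M',a')$, then Spoiler wins in $\ell$ rounds. This proceeds by induction on formula structure, the only substantive case being a counting quantifier $\countE_k\,v_i.\,\phi$ with $k\leq c$: a set of $k$ witnesses in the model where the quantifier holds is exactly a legal Spoiler move of a set $U$ with $|U|=k\leq c$, and the inductive hypothesis applied to $\phi$ (of smaller depth) furnishes his continuation after a pebble is placed. For the backward implication I would show that $(\M',a')\models\chi^{\ell,c}_{(\M,a)}(x)$ entails $(\M,a)\sim^2_{\ell,c}(\M',a')$ by induction on $n\leq\ell$: for any set $U$ of size $k\leq c$ that Spoiler selects in one model, the positive counting conjuncts of $\chi$ supply a response set $U'$ of equal size realising the same $n$-round types, while the negated conjuncts rule out overshoot, after which the inductive hypothesis on the sub-formula $\chi^{n,c}_\tau$ yields Duplicator's reply and preserves the partial isomorphism.

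For definability, I would note that over a finite signature and for fixed $n,c$ there are only finitely many $n$-round types: atomic types are finite, and each $(n+1)$-round type is determined by its atomic type together with, for each pebble and each of the finitely many $n$-round types, a count truncated at $c$. Hence $\sim^2_{\ell,c}$ has finite index, every type is captured by a characteristic formula, and any $\sim^2_{\ell,c}$-closed class $\mathcal{C}$ equals the finite disjunction $\bigvee_{(\M,a)\in\mathcal{C}}\chi^{\ell,c}_{(\M,a)}(x)$, which lies in $\Ctwo_{\ell,c}$. Taking $c=1$ recovers the $\FOtwo$ case, where counting collapses to ordinary existential and universal quantification.

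The main obstacle is the two-variable bookkeeping in the inductive construction. Whereas the modal characteristic formulas of Theorems~\ref{th:game-GML} and~\ref{th:game-GMLC} only inspect the neighbours of a single current world, here Spoiler may reposition either pebble while the other stays fixed, so $\chi^{n+1,c}_{(\M,a,b)}(x,y)$ must simultaneously count repositionings of $x$ relative to the fixed $y=b$ and of $y$ relative to the fixed $x=a$, and the quantifier that rebinds one variable must reuse the variable already free without capturing the other. Verifying that this reuse never exceeds two variables and that the counting conjuncts capture exactly the legal Spoiler moves---neither too many nor too few---is the delicate point; once the formulas are correctly set up, the counting arguments in both directions are routine.
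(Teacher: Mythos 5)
Your overall architecture is viable, but it takes a genuinely different route from the paper on the backward direction. The paper never constructs explicit characteristic formulas for the two-pebble game: it works with abstract \emph{types}, i.e., maximal conjunctions of $\Ctwo_{\ell,c}$ formulas with at most two free variables, whose finiteness up to equivalence it imports from \cite[Lemma 4.4]{cai1992optimal}. Duplicator's strategy is to answer Spoiler's set $U$ with a set $U'$ realising exactly the same types, and the existence of $U'$ is argued by contradiction: if some type $\psi$ were realised $k\leq c$ times among the candidate moves in one model but fewer times in the other, then $\countE_k x_1\, \psi(\mathbf{x})$ (with the other variable frozen at the fixed pebble) would separate the two configurations, contradicting the assumed logical equivalence. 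Your plan instead builds Hintikka-style formulas $\chi^{n,c}$ by induction on rounds, transplanting the template of Theorems~\ref{th:game-GML} and~\ref{th:game-GMLC} to the two-pebble setting, and routes the backward direction through satisfaction of $\chi^{\ell,c}_{(\M,a)}$. Both work; the paper's version sidesteps the two-variable syntactic bookkeeping you rightly call delicate, at the price of invoking the external finiteness lemma inside the equivalence proof, whereas yours is more self-contained (your counting argument for finiteness of round-$n$ types is elementary) but must get the formula construction exactly right. Your forward direction (Spoiler extracts a strategy from a separating formula, with the counting quantifier as the only substantive case) and your strengthening to configurations with two marked elements coincide with the paper's proof.

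The one point where your proposal, as written, is at real risk is the treatment of \emph{unrealised} types. You add counting conjuncts only ``for each $n$-round type $\tau$ \emph{reachable} by repositioning that pebble'', which reads as: only types realised in $\M$ receive a conjunct. Under that reading the backward implication is false. Concretely, let every element of $\M$ satisfy $p$, with at least $c$ of them, and let $\M'$ be $\M$ plus one fresh element satisfying $\neg p$. All conjuncts of $\chi^{\ell,c}_{(\M,a)}$ mention only the $p$-type (whose count is capped at $c$, so no upper-bound companion appears), hence $(\M',a)$ satisfies $\chi^{\ell,c}_{(\M,a)}$; yet Spoiler wins in one round by selecting $U=\{w^*\}$ for the fresh element $w^*$, since whatever singleton Duplicator answers in $\M$ has a different atomic type. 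This is precisely the error in the characteristic formulas of \citet{DBLP:journals/corr/abs-1910-00039} for $\GML$ that the paper corrects by adding the closing conjunct $\neg\Diamond_1 \bigwedge_{\{v,w\}\in E}\neg\varphi^{n,c}_{[\mathfrak{M},w]}$. The fix in your setting is the same: either let $\tau$ range over \emph{all} $n$-round types, so that an unrealised type ($m_\tau=0$, hence $k_\tau=0<c$) contributes the exclusion conjunct $\neg\countE_1 v_i.\,\chi^{n,c}_\tau$, or add a final conjunct asserting that every repositioning of each pebble lands on one of the realised types. Your definition $k_\tau=\min(c,m_\tau)$ and your remark that the conjuncts must capture ``neither too many nor too few'' moves suggest you may have intended this, but the proof has to state it explicitly---it is the load-bearing step of the backward direction, and without it Duplicator's response set cannot be constructed.
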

\begin{proof}[Proof Sketch]
To show the equivalence, we prove a stronger result, where  $a$ and $a'$ are vectors of length at most  2.  
We show each implication by induction on $\ell$.
For the forward implication,
we show the contrapositive, namely
that $\mathfrak{M} \models \varphi(a)$ and  $\mathfrak{M}' \not\models \varphi(a')$, imply the existence of a winning strategy for Spoiler. 
For the opposite direction,
we show that whenever
$\mathfrak{M} \models \varphi(\mathbf{a})$ iff $\mathfrak{M}' \models \varphi(\mathbf{a}')$,
there is a winning strategy for Duplicator.

The above shows that $(\mathfrak{M}, a) \sim^{2}_{\ell,c} (\mathfrak{M}',a')$ iff $a$ in $\mathfrak{M}$ and $a'$ in $\mathfrak{M}'$ satisfy the same $\Ctwo_{\ell,c}$ formulas with one free variable.
Since, up to logical equivalence,  there are finitely many $\Ctwo_{\ell,c}$ formulas  \cite[Lemma 4.4]{cai1992optimal}, each $\sim_{\ell,c}^{2}$ equivalence class can be  expressed as a (finite) disjunction of (finite) $\Ctwo_{\ell,c}$ formulas.
\end{proof}

Next, we show that bounded GNNs with non-neighbour aggregation are invariant under our variant of the 2-pebble games.

\begin{restatable}{theorem}{biacn}\label{lem:pebble-invariance}
The following hold:
\begin{enumerate}
\item $\GNN{\bm}{\acn}$ classifiers with $L$ layers and $k$-bounded aggregations and readout are  $\sim^{2}_{L,k}$-invariant;
\item $\GNN{\s}{\acn}$ classifiers with $L$ layers are $\sim^{2}_{L,1}$-invariant.
\end{enumerate}
\end{restatable}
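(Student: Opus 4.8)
The plan is to prove \Cref{lem:pebble-invariance} by induction on the layer $\ell \leq L$, mirroring the structure of the proofs of \Cref{thm:bisimulation-invariance} and \Cref{thm:arc-bisimulation-invariance}, but now tracking \emph{two} pointed graphs via the 2-pebble game. Concretely, I will show that if $(\G_1, v_1) \sim^2_{\ell,k} (\G_2, v_2)$, then $\emb_1(v_1)^{(\ell)} = \emb_2(v_2)^{(\ell)}$, which upon reaching $\ell = L$ and applying $\cls$ yields $\N(\G_1, v_1) = \N(\G_2, v_2)$. The base case $\ell = 0$ is immediate: $\sim^2_{0,k}$ forces $v_1$ and $v_2$ to satisfy the same propositions (the partial isomorphism condition on the singleton pebble placement), so the initial labels coincide.

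For the inductive step I assume $(\G_1, v_1) \sim^2_{\ell,k} (\G_2, v_2)$ and derive three equalities that together determine $\emb_i(v_i)^{(\ell)}$ through Equation~\eqref{eq:acn}: the self-label equality $\emb_1(v_1)^{(\ell-1)} = \emb_2(v_2)^{(\ell-1)}$, the $k$-bounded neighbour-aggregate equality, and the $k$-bounded non-neighbour-aggregate equality. The first follows because a weaker number of rounds still suffices, i.e.\ $\sim^2_{\ell,k}$ implies $\sim^2_{\ell-1,k}$, so the inductive hypothesis applies to the pair $(\G_1,v_1),(\G_2,v_2)$ themselves. For the two aggregates I argue by contradiction in the now-familiar pattern: suppose, w.l.o.g., some value $\mathbf{t} = \emb_2(w_2)^{(\ell-1)}$ occurs $k_2 < k$ times among the neighbours of $v_2$ but $k_1 > k_2$ times among the neighbours of $v_1$. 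Spoiler then uses the first pebble pair (currently on $v_1,v_2$), selects the model $\G_1$, pebble pair $i=2$, and a set $U$ of $\min(k,k_1)$ neighbours of $v_1$ whose $(\ell-1)$-labels equal $\mathbf{t}$. Since edges to $v_1$ are pinned by the partial isomorphism, Duplicator must answer with a set $U'$ of neighbours of $v_2$ of the same size, which by the multiplicity gap must contain some $w_2'$ with $\emb_2(w_2')^{(\ell-1)} \neq \mathbf{t}$. Spoiler places $p^2_{\G_2}$ on $w_2'$; whatever $w_1' \in U$ Duplicator chooses has label $\mathbf{t} \neq \emb_2(w_2')^{(\ell-1)}$, so by the inductive hypothesis $(\G_1, w_1') \not\sim^2_{\ell-1,k} (\G_2, w_2')$, contradicting that Duplicator had a winning $\ell$-round strategy.

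The non-neighbour case is entirely parallel but Spoiler instead selects a set $U$ of non-neighbours of $v_1$; here the key point is that, because pebble pair $1$ sits on $v_1,v_2$ and $\pi$ must remain a partial isomorphism, a \emph{non}-edge to $v_1$ is matched to a non-edge to $v_2$, so Duplicator's response $U'$ is forced to lie among non-neighbours of $v_2$, and the multiplicity mismatch again yields a bad world. Both the neighbour and non-neighbour analyses use only that the aggregations are $k$-bounded, which is exactly why capping multiplicities at $k$ (rather than needing exact multisets) aligns with the game's $c = k$ grading bound. Finally, the set-based statement is recovered as the special case $k = 1$, where $\GNN{\s}{\acn}$ classifiers are $\sim^2_{L,1}$-invariant.

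\textbf{The main obstacle} I anticipate is making rigorous that Spoiler is genuinely free to target neighbours \emph{and} non-neighbours within the single game, i.e.\ that the partial-isomorphism bookkeeping of the 2-pebble game correctly transmits both edge and non-edge information so that Duplicator's responding set cannot escape into the ``wrong'' neighbourhood class. This is where the proof genuinely departs from the modal arguments of \Cref{thm:bisimulation-invariance}: there Spoiler was restricted to neighbours, whereas here the binary relation $E$ must be respected symmetrically under $\pi$, and I will need to check carefully that the pebble on $v_i$ (pair $1$) remains fixed while pair $2$ roams, so that ``$\{v_1, w_1'\} \in E \iff \{v_2, w_2'\} \in E$'' is enforced at the moment of comparison. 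Once that invariant is nailed down, the counting contradiction transfers verbatim from the readout proof.
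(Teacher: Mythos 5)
Your proposal is correct and follows essentially the same route as the paper's own proof: induction on layers, a case split over the three inputs to the $\acn$ combination in Equation~\eqref{eq:acn}, a Spoiler strategy that selects $\min(k,k_1)$ same-labelled (non-)neighbours and exploits the capped multiplicity gap, the observation that the partial-isomorphism condition on the pebble pair resting on $(v_1,v_2)$ forces Duplicator's answer set into the matching neighbourhood class, and the set-based statement as the $k=1$ special case. The only (immaterial) difference is that you argue by contradiction from $(\G_1,v_1)\sim^2_{\ell,k}(\G_2,v_2)$ while the paper proves the contrapositive.
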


\begin{proof}[Proof Sketch]
The structure of the proof follows that of  \Cref{thm:bisimulation-invariance}, but now  games use two pairs of pebbles and GNNs have  $\acn$ layers---with two types of aggregation.
The important part of the proof is in the  inductive step, where we show that $(G_1,v_1) \sim_{\ell,k}^{2} (G_2, v_2)$ implies that
$\mathit{min}(k,\lBrace{\emb_1(w)^{(\ell-1)}\rBrace}_{w \in X_{\G_1}(v_1) })$  equals $\mathit{min}(k,\lBrace{ \emb_2(w)^{(\ell-1)}\rBrace}_{w \in X_{\G_2}(v_2) })$, for both $X \in \{\N ,\Nn\}$.
Towards a contradiction suppose that there is $u \in X_{\G_1}(v_1)$ such that $\emb_1(u)^{(\ell-1)}$ appears $k_1$ times in $\lBrace   \emb_1(w)^{(\ell-1)}  \rBrace _{w \in X_{\G_1}(v_1)}$ and $k_2 < k_1 $ times in $\lBrace   \emb_2(w)^{(\ell-1)}  \rBrace _{w \in X_{\G_2}(v_2)}$, with $k_2 < k$.
The winning strategy for Spoiler is to  pick a set $U_1$ of $\min(k,k_1)$ elements from $\lBrace   \emb_1(w)^{(\ell-1)}  \rBrace _{w \in X_{\G_1}(v_1)}$ with label $\emb_1(u)^{(\ell-1)}$. 
%
This allows Spoiler to get to a configuration $(\mathfrak{M}_G,v_1,u_1,\mathfrak{M}_{G'},v_2,u_2)$ with $\emb_1(u)^{(\ell-1)} \neq \emb_2(u_2)^{(\ell-1)}$ which, by the inductive hypothesis, implies
$(G_1, v_1) \not\sim_{\ell,k}^{2} (G_2, v_2)$.
%
%
The proof for the set-based case is a particular case, with $k=1$. 
%
\end{proof}


As before,  \Cref{th:Ctwo-game} implies the following.

\begin{corollary}\label{cor:FOtwo-Ctwo-equivalence}
Let $\N$ be a GNN with $L$ layers and  $\mathcal{C}$  the pointed models $(\mathfrak{M},v)$ accepted by $\N$. 
If $\N \in \GNN{\bm}{\acn}$ 
it is equivalent to a $\Ctwo$ formula. 
If $\N \in \GNN{\s}{\acn}$, 
it is equivalent to an $\FOtwo$ formula. 
\end{corollary}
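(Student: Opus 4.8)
The plan is to follow the same two-step ``glue'' argument used for \Cref{cor:modal-equivalence} and \Cref{cor:global-modal-equivalence}, combining the invariance result of \Cref{lem:pebble-invariance} with the definability result of \Cref{th:Ctwo-game}. First I would fix a classifier $\N \in \GNN{\bm}{\acn}$ with $L$ layers and $k$-bounded aggregations, and argue that its acceptance class $\mathcal{C}$ is closed under $\sim^2_{L,k}$. This is immediate from \Cref{lem:pebble-invariance}(1): if $(\mathfrak{M},v) \in \mathcal{C}$ and $(\mathfrak{M},v) \sim^2_{L,k} (\mathfrak{M}',v')$, then $\sim^2_{L,k}$-invariance gives $\N(\mathfrak{M}',v') = \N(\mathfrak{M},v) = \true$, so $(\mathfrak{M}',v') \in \mathcal{C}$. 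With closure established, I would invoke the second half of \Cref{th:Ctwo-game}, which asserts that any $\sim^2_{L,k}$-closed class of pointed models is definable by a $\Ctwo_{L,k}$ formula $\psi$. Since $\N$ and $\psi$ then accept exactly the same pointed models, they are equivalent, and $\psi$ witnesses that $\N$ is equivalent to a $\Ctwo$ formula.

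For the set-based case I would repeat the argument with $\N \in \GNN{\s}{\acn}$ and $k=1$. By \Cref{lem:pebble-invariance}(2), such $\N$ is $\sim^2_{L,1}$-invariant, so $\mathcal{C}$ is $\sim^2_{L,1}$-closed, and \Cref{th:Ctwo-game} yields an equivalent $\Ctwo_{L,1}$ formula. The remaining point is to observe that counting rank $1$ collapses counting quantifiers to ordinary existential quantifiers: by the definition of $\FOtwo$ as the counting-rank-$1$ fragment of $\Ctwo$, every $\Ctwo_{L,1}$ formula is in fact an $\FOtwo$ formula (of depth at most $L$). Hence the formula produced is an $\FOtwo$ classifier equivalent to $\N$.

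The argument is essentially bookkeeping once the two cited theorems are in hand, so there is no serious obstacle. The only point requiring care is the last one: ensuring that restricting the counting rank to $1$ genuinely lands the defining formula inside $\FOtwo$ rather than merely inside $\Ctwo$. Unlike the modal cases, \Cref{th:Ctwo-game} does not hand us an explicit characteristic formula---it relies on the finiteness of the index of $\sim^2_{\ell,c}$, itself inherited from the finite number of $\Ctwo_{\ell,c}$ types up to logical equivalence---so I would phrase the conclusion as the \emph{existence} of an equivalent formula rather than exhibiting a canonical one.
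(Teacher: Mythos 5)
Your proposal is correct and follows exactly the paper's intended route: the paper obtains \Cref{cor:FOtwo-Ctwo-equivalence} precisely by combining the $\sim^{2}_{L,k}$-invariance of \Cref{lem:pebble-invariance} with the definability clause of \Cref{th:Ctwo-game}, as you do. Your two points of care are also the right reading of the paper: restricting the counting rank to $1$ makes the defining $\Ctwo_{L,1}$ formula an $\FOtwo$ formula (since $\exists_1$ is ordinary existential quantification), and the conclusion is indeed only existential, as \Cref{th:Ctwo-game} rests on the finite index of $\sim^{2}_{\ell,c}$ rather than on explicit characteristic formulas.
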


Therefore, $\Ctwo \geq \GNN{\bm}{\acn}$ and $\FOtwo \geq  \GNN{\s}{\acn}$.
Combining these with \Cref{lem:twovar-to-gnn} we have the following.

\begin{corollary}\label{cor:acn}
The following equivalences hold: $\Ctwo \equiv \GNN{\bm}{\acn}$ and $\FOtwo \equiv \GNN{\s}{\acn}$.
\end{corollary}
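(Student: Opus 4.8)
The plan is to read off both equivalences by combining the one-directional inclusions established earlier in this section, so that no new machinery is required. The inclusions $\Ctwo \leq \GNN{\bm}{\acn}$ and $\FOtwo \leq \GNN{\s}{\acn}$ are exactly \Cref{lem:twovar-to-gnn}. The converse inclusions $\GNN{\bm}{\acn} \leq \Ctwo$ and $\GNN{\s}{\acn} \leq \FOtwo$ are the content of \Cref{cor:FOtwo-Ctwo-equivalence}, and it is only this converse that carries any real argument; once both directions are in hand, the equivalences $\Ctwo \equiv \GNN{\bm}{\acn}$ and $\FOtwo \equiv \GNN{\s}{\acn}$ follow immediately.

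For completeness I would spell out how the converse inclusion is obtained, since it is where the two preceding theorems are chained. Fix an arbitrary $\N \in \GNN{\bm}{\acn}$ with $L$ layers and let $k$ be the largest bound among its aggregation and readout functions. By \Cref{lem:pebble-invariance}, $\N$ is $\sim^{2}_{L,k}$-invariant, so the class $\mathcal{C}$ of pointed models accepted by $\N$ (transported from pointed graphs via the identification $(\G,v) \sim (\G',v') \iff (\M_{\G},v) \sim (\M_{\G'},v')$) is closed under $\sim^{2}_{L,k}$. \Cref{th:Ctwo-game} then supplies a single $\Ctwo_{L,k}$ formula defining $\mathcal{C}$, and this formula accepts exactly the pointed graphs accepted by $\N$; hence $\GNN{\bm}{\acn} \leq \Ctwo$. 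The set-based case is the specialisation to $k=1$: \Cref{lem:pebble-invariance} gives $\sim^{2}_{L,1}$-invariance and \Cref{th:Ctwo-game} a defining $\Ctwo_{L,1}$ formula, which is an $\FOtwo$ formula because restricting counting quantifiers to $k=1$ is precisely the definition of $\FOtwo$; thus $\GNN{\s}{\acn} \leq \FOtwo$.

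I do not expect a genuine obstacle, as the heavy lifting sits in \Cref{lem:twovar-to-gnn,th:Ctwo-game,lem:pebble-invariance}; the only points needing care are bookkeeping. I would verify that the grading parameter $c$ governing the game is instantiated to the same $k$ that bounds $\N$'s aggregations, so that \Cref{lem:pebble-invariance} and \Cref{th:Ctwo-game} are applied with matching parameters, and that the finite signature together with the finite index of $\sim^{2}_{L,k}$ (from there being finitely many $\Ctwo_{\ell,c}$ formulas up to equivalence) guarantees the defining formula is a genuine finite $\Ctwo$ formula rather than an infinite disjunction of characteristic formulas. The one line worth stating explicitly is the $\FOtwo$ collapse: confirming that the $1$-graded game indeed captures $\FOtwo$ and not merely $\Ctwo$ with small counters.
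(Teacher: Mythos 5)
Your proposal is correct and follows essentially the same route as the paper: both directions are read off by combining \Cref{lem:twovar-to-gnn} with \Cref{cor:FOtwo-Ctwo-equivalence}, the latter being exactly the chaining of \Cref{lem:pebble-invariance} (invariance under $\sim^{2}_{L,k}$) with \Cref{th:Ctwo-game} (definability of $\sim^{2}_{L,k}$-closed classes by a finite $\Ctwo_{L,k}$ formula). The bookkeeping points you flag---matching the game's grading parameter to the GNN's bound $k$, finiteness of the defining formula, and the $k=1$ specialisation yielding a genuine $\FOtwo$ formula---are precisely what the paper's intermediate results already account for.
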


\section{Conclusion and Future Work}\label{sec:conclusions}


We have introduced families of bounded GNNs, whose expressive power corresponds exactly to well-known modal logics and 2-variable first-order logics. 
Among others, we have showed that standard aggregate-combine GNNs with bounded aggregation have the same expressive power as the graded modal logic.
The correspondence between \FO{}-expressibility and bounding aggregation (and readout) occurs as an interesting phenomenon to study. 
In particular, we find it interesting to  determine for which classes of GNNs classifiers, \FO-expressibility is equivalent to expressibility by bounded GNNs.
Future work directions we consider include also establishing tight bounds on the size of logical formulas capturing  GNNs and practical extraction of logical formulas from GNNs.

\section*{Acknowledgements}
Eva Feng is generously supported by a Google DeepMind Scholarship (CS2324\_DeepMind\_1594092). 

\bibliography{aaai2026}


\newpage
\onecolumn
\appendix

\section*{Technical Appendix}






\section*{Proof Details for \Cref{boundedGNN}}

\finitesp*
\begin{proof}
The fact that $|\mathsf{sp}(\N,0)| = 2^d$ follows from the definition of spectra.
Next, we will compute the bound on $\emb(v)^{(\ell+1)}$.
If $\N$ uses \acr{} layers, then $$\emb(v)^{(\ell+1)} = \comb \Big( \emb(v)^{(\ell)}, 
\agg( \lBrace  \emb(w)^{(\ell)} \rBrace_{w \in N_G(v)} ), \readout( \lBrace  \emb(w)^{(\ell)} \rBrace_{w \in V}) \Big). $$ 
We will analyse the number of values that each of the three components the argument of $\comb$  can take.
Multiplication of these  numbers will provide us with a bound for
 $| \mathsf{sp}(\N,\ell+1)|$.
 \begin{itemize}
 \item
The first component, $\emb(v)^{(\ell)}$, can take up to  $| \mathsf{sp}(\N,\ell)|$ values.
\item 
Now, consider the second component, $\agg( \lBrace  \emb(w)^{(\ell)} \rBrace_{w \in N_G(v)} )$.
Since aggregation is
$k$-bounded, 
the output of aggregation  can  take at most as many values as there are multisets consisting of  elements in $\mathsf{sp}(\N, \ell)$ with at most $k$ multiples for each element.
That is, the aggregation can take at most $(k+1)^{|\mathsf{sp}(\N, \ell)|}$ values.

\item Calculations for the third component are analogous as for the second component, and so, the third component can also take up to $(k+1)^{|\mathsf{sp}(\N, \ell)|}$ values.
\end{itemize}
Hence
$$|\mathsf{sp}(\N, \ell+1)|
=
|\mathsf{sp}(\N, \ell)| \cdot
(k+1)^{|\mathsf{sp}(\N, \ell)|} \cdot
(k+1)^{|\mathsf{sp}(\N, \ell)|}
=
|\mathsf{sp}(\N, \ell)| \cdot
(k+1)^{2 \cdot |\mathsf{sp}(\N, \ell)|}.
$$
If $\N$ uses $\acn$ layers, we obtain the same bounds, since the third component is replaced with a copy of the second component (for the second aggregation function). 
If $\N$ uses $\ac$ layers only, then the third component is not present, and so
$$|\mathsf{sp}(\N, \ell+1)|
=
|\mathsf{sp}(\N, \ell)| \cdot
(k+1)^{|\mathsf{sp}(\N, \ell)|}.
$$
\end{proof}



\newpage

\section*{Proof Details for Section \ref{sec:agg-combine}}

\linalgac*
\begin{proof}
Let $\varphi \in \GML$ be a logic classifier of dimension $d$ with propositions $\prop(\varphi)$ and subformulas $\mathsf{sub}(\varphi) = (\varphi_1, \ldots, \varphi_L)$, where $k \leq \ell$ whenever $\varphi_k$ is a subformula of $\varphi_{\ell}$.
We construct a GNN classifier $\N_{\varphi}$ with layers $0, \ldots, L$.
The classification function $\cls$ is such that $\cls(\mathbf{x}) = \true$ iff the last element of $\mathbf{x}$ is $1$.
Layer $0$  uses a combination function to multiply input vectors by a  matrix $\mathbf{D} \in \R^{d \times L}$, namely  $\lambda(v)^{(1)} =  \lambda(v)\mathbf{D} $, where $\mathbf{D}_{k \ell} = 1$ if the $k$th position of the  input vectors  corresponds to a proposition $\varphi_\ell$;
other entries of $\mathbf{D}$ are $0$.
Other layers are of dimension $L$ and defined below. They are homogeneous, meaning that the aggregation, combination and readout functions in each of these $L$ layer are the same. 

The activation function is the componentwise truncated ReLU defined as $\sigma(x) = \mathit{min}(\mathit{max}(0,x),1)$. 
The combination function is $\comb(\mathbf{x}, \mathbf{y}) = \sigma(\mathbf{x} \mathbf{C}  + \mathbf{y} \mathbf{A} + \mathbf{b})$ and the aggregation function $\agg$ is the max-$n$-sum function, where $n$ is the counting rank of $\varphi$. 
The entries of $\mathbf{A}, \mathbf{C} \in \mathbb{R}^{L \times L}$ and  $\mathbf{b} \in \mathbb{R}^L$ depend on the subformulas of $\varphi$ as follows: 

\begin{center}
\begin{tabular}{l}
1. if $\varphi_{\ell}$ is a proposition, $C_{\ell\ell} = 1$, \\
2. if $\varphi_{\ell} = \varphi_j \wedge \varphi_k$, then $C_{j \ell} = C_{k \ell} =1$ and $b_{\ell} = -1$; \\
3. if $\varphi_{\ell} =\neg \varphi_k$, then $C_{k\ell} = -1$ and $b_{\ell} = 1$;\\
4. if $\varphi_{\ell} = \Diamond_c \varphi_k$, then $A_{k\ell} = 1$ and $b_{\ell} = -c + 1$; and\\
\end{tabular}
\end{center}

\noindent all other entries  are set to $0$.

Consider the GNN application to $\G = (V,E,\emb)$ of dimension $d$.
We  show that for each subformula $\varphi_{\ell}$, each  $i \in \{\ell, \ldots, L\}$, and  $v \in V$, if   $(\M_{\G},v) \models \varphi_{\ell}$ then $\emb(v)^{(i)}_{\ell} =1 $, and  otherwise $\emb(v)^{(i)}_{\ell} =0$. 
This implies that $\N_\varphi(\G,v) = \true$ iff $(\M_{\G},v) \models \varphi$.
The proof is by induction on the structure of $\varphi_\ell$.

\begin{itemize}
\item If $\varphi_{\ell}$ is a proposition then, by the design of layer $0$, we have  $\lambda(v)^{(0)}_\ell =1$ if $(\M_{G},v) \models \varphi_\ell$, and otherwise $\lambda(v)^{(0)}_\ell =0$.
Moreover, since $\varphi_\ell$ is a proposition,
all \ac{} layers have $C_{\ell \ell} = 1$, $b_{\ell} = 0$, and $A_{k\ell} = 0$ for each $k$, so
$\comb(\mathbf{x}, \mathbf{y})_{\ell} = \mathbf{x}_{\ell}$.
Thus $\lambda(v)^{(0)}_\ell = \lambda(v)^{(i)}_\ell$ for any $i$.

\item If $\varphi_{\ell} = \neg \varphi_k$, then by construction, we have $C_{k\ell} = -1$, $b_\ell = 1$, and $A_{m\ell} = 0$ for each $m$. 
Then by Equation~\eqref{eq:ac}, we have $\emb(v)^{(i)}_\ell = \sigma(-\emb(v)^{(i-1)}_k + 1)$.
By inductive hypothesis, 
$\lambda(v)^{(i-1)}_k = 1$ if  $(\M_{G},v) \models \varphi_k$,  and  otherwise $\lambda(v)^{(i-1)}_k = 0$. 
Hence, 
$\emb(v)^{(i)}_\ell=1$ if $(\M_{\G} , v) \models  \neg \varphi_k$, and otherwise $\emb(v)^{(i)}_\ell=0$ for all $i \geq \ell$.

\item If $\varphi_{\ell} = \varphi_j \wedge \varphi_k$, then by construction, we have $C_{j\ell} = C_{k\ell} = 1$, $b_\ell = -1$, and $A_{m\ell} = 0$ for each $m$.
Then by Equation~\eqref{eq:ac}, we have $\emb(v)^{(i)}_\ell = \sigma(\emb(v)^{(i-1)}_j + \emb(v)^{(i-1)}_k - 1)$.
By inductive hypothesis, 
$\lambda(v)^{(i-1)}_k = 1$ if  $(\M_{G},v) \models \varphi_k$,  and  otherwise $\lambda(v)^{(i-1)}_k = 0$; and $\lambda(v)^{(i-1)}_j = 1$ if  $(\M_{G},v) \models \varphi_j$, and  otherwise $\lambda(v)^{(i-1)}_j = 0$.
Hence, 
$\emb(v)^{(i)}_\ell=1$ if $(\M_{\G} , v) \models  \varphi_j \wedge \varphi_k$, and otherwise $\emb(v)^{(i)}_\ell=0$ for all $i \geq \ell$.

\item If $\varphi_{\ell} = \Diamond_c \varphi_k$, then by construction, we have $A_{k \ell} = 1$,  $b_{\ell} = -c+1$, and  $C_{m \ell} = 0$ for each $m$. 
Since \agg{} is the max-$n$-sum, by Equation~\eqref{eq:ac}, we have $\emb(v)^{(i)}_\ell = \sigma(\mathit{min}(n, \sum_{w\in N_G(v)} \lambda(w)^{(i-1)}_k) -c + 1)$.
By inductive hypothesis, 
$\lambda(w)^{(i-1)}_k = 1$ if  $(\M_{G},w) \models \varphi_k$,  and  otherwise $\lambda(w)^{(i-1)}_k = 0$. 
Hence, 
$\emb(v)^{(i)}_\ell=1$ if $(\M_{\G} , v) \models \Diamond_c \varphi_k$, and otherwise $\emb(v)^{(i)}_\ell=0$.
\end{itemize}

Finally, observe that $\ML$ is a fragment of $\GML$ where the counting rank is restricted to $1$. 
Then, the max-$n$-sum function becomes the max-$1$-sum, which is simply the component-wise maximum function.
Then, it is easy to see that we have $\ML \leq \GNN{\s}{\ac}$ when the counting rank is restricted to 1.
\end{proof}

\begin{example*}[$\GNN{\bm}{\ac}$ Construction for $\GML$]
In what follows we will show how the construction from \Cref{prop:ml-to-gnn} works on the $\GML_{\ell,c}$ formula $\varphi = \Diamond_3 p \land \lnot \Diamond_3 q$, with quantifier rank $\ell = 1$ and counting rank $c = 3$.
We  assume that input graphs have labellings of the form $[x_p, x_q]$, corresponding to the node colours $p$ and $q$. The first step is to provide an ordering of subformulas of $\varphi$ as described in the proof of \Cref{prop:ml-to-gnn}, for example: $\varphi_1 = p$, $\varphi_2 = q$, $\varphi_3 = \Diamond_3 p$, $\varphi_4 = \Diamond_3 q$, $\varphi_5 = \lnot \Diamond_3 q$, and $\varphi_6 = \varphi = \Diamond_3 p \land \lnot \Diamond_3 q$. 

Next, we show how the corresponding $\N$ is constructed.
Layer $0$ of $\N$  extends the input label using a matrix $\mathbf{D}$, which in our example is as follows:
$$\mathbf{D} = 
\begin{bmatrix}
1 & 0 & 0 & 0 & 0 & 0\\
0 & 1 & 0 & 0 & 0 & 0
\end{bmatrix}.
$$

The remaining 6 layers of $\N$ use the following matrices and bias vector:
$$\mathbf{C} = 
\begin{bmatrix}
1 & 0 & 0 & 0 & 0 & 0\\
0 & 1 & 0 & 0 & 0 & 0\\
0 & 0 & 0 & 0 & 0 & 1\\
0 & 0 & 0 & 0 & -1 & 0 \\
0 & 0 & 0 & 0 & 0 & 1\\
0 & 0 & 0 & 0 & 0 & 0
\end{bmatrix}
,~~
\mathbf{A} = 
\begin{bmatrix}
0 & 0 & 1 & 0 & 0 & 0\\
0 & 0 & 0 & 1 & 0 & 0\\
0 & 0 & 0 & 0 & 0 & 0\\
0 & 0 & 0 & 0 & 0 & 0 \\
0 & 0 & 0 & 0 & 0 & 0\\
0 & 0 & 0 & 0 & 0 & 0
\end{bmatrix}
, \text{ and }~
\mathbf{b} = 
\begin{bmatrix}
0 & 0 & -2 & -2 & 1 & -1
\end{bmatrix}.
$$
\end{example*}

\gamegml*
\begin{proof}
The proof by \citet{DBLP:journals/corr/abs-1910-00039} is not fully correct, due to an incomplete construction of characteristic formulas. Hence, we will reproduce the proof for our complete construction of characteristic formulas.
To prove the theorem we will show three implications: 

\begin{center}
\begin{tabular}{l}
1. $(\mathfrak{M}, v) \sim_{\ell,c} (\mathfrak{M'}, v')$ implies  $(\mathfrak{M}, v) \equiv_{\GML_{\ell,c}} (\mathfrak{M'}, v')$, \\
2. $(\mathfrak{M}, v) \equiv_{\GML_{\ell,c}} (\mathfrak{M'}, v')$ implies $(\mathfrak{M}',v') \models \varphi_{[\mathfrak{M},v]}^{\ell,c}$, and \\
3. $(\mathfrak{M}',v') \models \varphi_{[\mathfrak{M},v]}^{\ell,c}$ implies $(\mathfrak{M}, v) \sim_{\ell,c} (\mathfrak{M'}, v')$.
\end{tabular}
\end{center}

For the first implication we prove  by induction on $\ell$ that for any fixed $c$,  $(\mathfrak{M},v) \not\equiv_{\GML_{\ell,c}} (\mathfrak{M'}, v')$
implies $(\mathfrak{M}, v) \not\sim_{\ell,c} (\mathfrak{M'}, v')$. 
For the base case $\ell = 0$, we have that $(\mathfrak{M}, v) \sim_{0,c} (\mathfrak{M'}, v')$  implies that $v$ and $v'$ satisfy the same propositions, so $(\mathfrak{M},v) \equiv_{\GML_{0,c}} (\mathfrak{M'}, v')$. 
For the inductive step, assume $(\mathfrak{M},v) \not\equiv_{\GML_{\ell,c}} (\mathfrak{M'}, v')$ implies $(\mathfrak{M}, v) \not\sim_{\ell,c} (\mathfrak{M'}, v')$. 
We show that the claim holds for $\ell+1$. 
Assume $(\mathfrak{M},v) \not\equiv_{\GML_{\ell+1,c}} (\mathfrak{M'}, v')$, then there exists $\varphi \in \GML_{\ell+1,c}$ such that $(\mathfrak{M},v) \models \varphi$ but $(\mathfrak{M'},v') \not\models \varphi$. 
It suffices to show the case for $\varphi = \Diamond_k \phi$, where $k \leq c$ and $\phi \in \GML_{\ell, c}$. 
The  remaining cases regarding conjunctions and negations immediate imply that $(\mathfrak{M},v) \not\equiv_{\GML_{\ell+1,c}} (\mathfrak{M'}, v')$ implies $(\mathfrak{M}, v) \not\sim_{\ell+1,c} (\mathfrak{M'}, v')$ by the inductive hypothesis.
We will show that Spoiler has a winning strategy from $(\mathfrak{M},v,\mathfrak{M'},v')$ by picking a set $U_1$ of $k$ worlds in $N_{\mathfrak{M}}(v)$ such that for each $u \in U_1$, $(\mathfrak{M}, u) \models \phi$. 
Then any subset $U_2 \subseteq N_{\mathfrak{M'}}(v')$ of size $k$ picked  by Duplicator must contain some $u'$ such that $(\mathfrak{M'},u') \not\models \phi$. 
Spoiler has a winning strategy by picking $u'$.
Then, whichever $u \in U_1$ Duplicator chooses, we have
$(\mathfrak{M},u) \models \phi$
and $(\mathfrak{M'},u') \not\models \phi$, so by the inductive hypothesis,  $(\mathfrak{M}, u) \not\sim_{\ell,c} (\mathfrak{M'}, u')$. Hence  $(\mathfrak{M}, v) \not\sim_{\ell+1,c} (\mathfrak{M'}, v')$.

The second implication results from the construction of the characteristic formula $\varphi_{[\mathfrak{M},v]}^{\ell,c}$. We can see that $\varphi_{[\mathfrak{M},v]}^{\ell,c}$ essentially characterises the structure of $(\mathfrak{M},v)$ by capturing the presence of each neighbouring type of the appropriate cardilanities by $\bigwedge \big \{ \Diamond_k \varphi_{[\mathfrak{M},w]}^{\ell,c} : \{v,w\} \in E,
 k \leq \mathit{min}( |U_{v,w}^\ell|, c) \big \}$, prohibiting the absence of neighbouring types by $\neg \Diamond_1 \bigwedge_{\{v,w\} \in E} \neg \varphi_{[\mathfrak{M}, w]}^{\ell,c}$, and prohibiting the absence of each neighbouring type of the appropriate cardinalities through $\bigwedge \big \{ \neg \Diamond_k \varphi_{[\mathfrak{M},w]}^{\ell,c} : \{v,w\} \in E \text{ and }
|U_{v,w}^\ell|
< k \leq c \big \}$. Hence, by construction, we have $(\mathfrak{M},v) \models \varphi_{[\mathfrak{M},v]}^{\ell,c}$, so $(\mathfrak{M},v) \equiv_{\GML_{\ell,c}} (\mathfrak{M'}, v')$ implies $(\mathfrak{M'},v') \models \varphi_{[\mathfrak{M},v]}^{\ell,c}$.

Now, we show the third implication  by induction on $\ell$, for a fixed $c$. 
For the base case when $\ell = 0$, $(\mathfrak{M'},v') \models \varphi_{[\mathfrak{M},v]}^{0,c}$ implies that $v$ and $v'$ satisfy the same propositions, so we have $(\mathfrak{M}, v) \sim_{0,c} (\mathfrak{M'}, v')$. 
For the inductive step, assume $(\mathfrak{M'},v') \models \varphi_{[\mathfrak{M},v]}^{\ell,c}$ implies $(\mathfrak{M}, v) \sim_{\ell,c} (\mathfrak{M'}, v')$.
We will show that  if $(\mathfrak{M'},v') \models \varphi_{[\mathfrak{M},v]}^{\ell+1,c}$, then $(\mathfrak{M}, v) \sim_{\ell+1,c} (\mathfrak{M'}, v')$.
To this end, we will show that  Duplicator has a winning strategy.
First, we need to show that $(\mathfrak{M},v,\mathfrak{M}',v')$ is not a winning configuration for Spoiler, that is, $(\mathfrak{M}, v)$ and $(\mathfrak{M'}, v')$ satisfy the same propositions.
Indeed, by the form of $\varphi_{[\mathfrak{M},v]}^{\ell,c}$, the fact that $(\mathfrak{M'},v') \models \varphi_{[\mathfrak{M},v]}^{\ell+1,c}$ implies  $(\mathfrak{M'},v') \models \varphi_{[\mathfrak{M},v]}^{0,c}$.
The latter, implies that $(\mathfrak{M}, v)$ and $(\mathfrak{M'}, v')$ satisfy the same propositions.
To show how the winning strategy for Duplicatior, we will consider  separately  the cases when Spoiler plays a set in $\mathfrak{M}$ and in $\mathfrak{M}'$. 

Assume that Spoiler selects a set $U = \{u_1,\dots, u_t \}$ of size $t \leq c$ of elements in $N_{\mathfrak{M}}(v)$. 
We will show that there exists a set $U' = \{u_1',\dots, u_t' \}$ of elements in $N_{\mathfrak{M'}}(v')$
such that
$(\mathfrak{M'}, u_i') \models \varphi_{[\mathfrak{M},u_i]}^{\ell,c}$, for each $i \in \{ 1, \dots, t \}$.
To this end, it suffices to show that if there is a set $Z=\{z_1, \dots, z_{t'} \} \subseteq N_{\mathfrak{M}}(v)$ such that $t' \leq c$ and  $(\mathfrak{M}, z_i) \models \varphi_{[\mathfrak{M},z_1]}^{\ell,c}$ for each $z_i \in Z$,
then there is 
a set $Z'=\{z_1', \dots, z_{t'}' \} \subseteq N_{\mathfrak{M'}}(v')$ of the same cardinality $t'$ such that $(\mathfrak{M}', z_i') \models \varphi_{[\mathfrak{M},z_1]}^{\ell,c}$ for each $z_i' \in Z'$.
Consider arbitrary $Z$ satisfying the required properties.
Since $(\mathfrak{M}, z_i) \models \varphi_{[\mathfrak{M},z_1]}^{\ell,c}$for each $z_i$, by the inductive assumption we obtain that $(\mathfrak{M}, z_i) \sim_{\ell,c} (\mathfrak{M}, z_j)$, for all $i,j \in \{1, \dots, t' \}$.
Hence, in the definition of $\varphi_{[\mathfrak{M},v]}^{\ell+1,c}$, we have $|U^\ell_{v,z_1}| \geq t'$, and so $\varphi_{[\mathfrak{M},v]}^{\ell+1,c}$ contains a conjunct $\Diamond_{t'} \varphi_{[\mathfrak{M},z_1]}^{\ell,c}$.
Since $(\mathfrak{M'},v') \models \varphi_{[\mathfrak{M},v]}^{\ell+1,c}$, we have therefore $(\mathfrak{M'},v') \models \Diamond_{t'} \varphi_{[\mathfrak{M},z_1]}^{\ell,c}$.
This, crucially, implies that required $Z'$ exists.
Recall that it implies existence of $U'$.
Now, the strategy for Duplicator is to choose this $U'$.
Then, Spoiler selects some $u_i' \in U'$.
The winning strategy for Duplicator is to select $u_i \in U$.
Indeed, we have $(\mathfrak{M'}, u_i') \models \varphi_{[\mathfrak{M},u_i]}^{\ell,c}$ so, by the inductive hypothesis, $(\mathfrak{M'}, u_i) \sim_{\ell,c} (\mathfrak{M}, u_i')$.
Hence, we get $(\mathfrak{M}, v) \sim_{\ell+1,c} (\mathfrak{M'}, v')$.

Assume that Spoiler selects a set $U' = \{u'_1,\dots, u'_t \}$ of size $t \leq c$ of elements in $N_{\mathfrak{M'}}(v')$. 
We will show that there exists a set $U = \{u_1,\dots, u_t \}$ of elements in $N_{\mathfrak{M}}(v)$
such that
$(\mathfrak{M'}, u_i') \models \varphi_{[\mathfrak{M},u_i]}^{\ell,c}$, for each $i \in \{ 1, \dots, t \}$.
To this end, it suffices to show that:
\begin{enumerate}[label=(\alph*)]
\item for each $z' \in N_{\mathfrak{M'}}(v')$ there exists $z \in N_{\mathfrak{M}}(v)$ such that  $(\mathfrak{M'}, z') \models \varphi_{[\mathfrak{M},z]}^{\ell,c}$, and 

\item if there is a set $Z'=\{z'_1, \dots, z'_{t'} \} \subseteq N_{\mathfrak{M'}}(v')$ and $z_1 \in N_{\mathfrak{M}}(v)$ such that $t' \leq c$ and  $(\mathfrak{M}, z'_i) \models \varphi_{[\mathfrak{M},z_1]}^{\ell,c}$ for each $z'_i \in Z$,
then there is 
a set $Z=\{z_1, \dots, z_{t'} \} \subseteq N_{\mathfrak{M}}(v)$ of the same cardinality $t'$ such that $\varphi_{[\mathfrak{M},z_i]}^{\ell,c} = \varphi_{[\mathfrak{M},z_1]}^{\ell,c}$ for each $z_i \in Z$.
\end{enumerate}
To show Statement (a), we observe that $(\mathfrak{M'},v') \models \varphi_{[\mathfrak{M},v]}^{\ell+1,c}$
implies that
$(\mathfrak{M'},v')  \models \neg \Diamond_1 \bigwedge_{\{v,w\} \in E} \neg \varphi_{[\mathfrak{M}, w]}^{\ell,c}$.
Hence, $(\mathfrak{M'},v')  \models \Box \bigvee_{\{v,w\} \in E}  \varphi_{[\mathfrak{M}, w]}^{\ell,c}$.
Therefore, for each $z' \in N_{\mathfrak{M'}}(v')$ there exists $w \in N_{\mathfrak{M}}(v)$ such that $(\mathfrak{M'}, z') \models \varphi_{[\mathfrak{M},w]}^{\ell,c}$, as required.
For Statement (b), consider arbitrary $Z'$ and $z_1$ satisfying the required properties.
By these properties, it follows directly that
$(\mathfrak{M'},v') \models \Diamond_{t'} \varphi_{[\mathfrak{M},z_1]}^{\ell+1,c}$.
We can show that it implies  $t' \leq |U^\ell_{v,z_1}|$.
Indeed,  if $|U^\ell_{v,z_1}| < t'$ then, by the  definition of  $\varphi_{[\mathfrak{M},v]}^{\ell+1,c}$, the formula   $\neg \Diamond_{t'} \varphi_{[\mathfrak{M},z_1]}^{\ell+1,c}$ would be a conjunct in $\varphi_{[\mathfrak{M},v]}^{\ell+1,c}$, and so, $(\mathfrak{M'},v') \models \varphi_{[\mathfrak{M},v]}^{\ell+1,c}$
would imply $(\mathfrak{M'},v') \models \neg \Diamond_{t'} \varphi_{[\mathfrak{M},z_1]}^{\ell+1,c}$, raising a contradiction.
Since $t' \leq |U^\ell_{v,z_1}|$, we obtain that 
$\Diamond_{t'} \varphi_{[\mathfrak{M},z_1]}^{\ell,c}$ is one of the conjuncts of 
$\varphi_{[\mathfrak{M},v]}^{\ell+1,c}
$, and so, $(\mathfrak{M},v) \models \Diamond_{t'} \varphi_{[\mathfrak{M},z_1]}^{\ell,c}$.
This implies that required $Z$ exists.

Recall that by showing Statements (a) and (b), we have shown existence of  required $U = \{u_1,\dots, u_t \}$.
Now, the strategy for Duplicator is to choose this set  $U$.
Then, Spoiler selects some $u_i \in U$.
The winning strategy for Duplicator is to select $u_i' \in U'$.
Indeed, we have $(\mathfrak{M'}, u_i') \models \varphi_{[\mathfrak{M},u_i]}^{\ell,c}$ so, by the inductive hypothesis, $(\mathfrak{M'}, u_i') \sim_{\ell,c} (\mathfrak{M}, u_i)$.
Hence, we get $(\mathfrak{M}, v) \sim_{\ell+1,c} (\mathfrak{M'}, v')$.
\end{proof}

\begin{example*}[Characteristic Formulas for $\GML$]
To emphasise why the characteristic formulas for $\GML$, as defined by  \citet{DBLP:journals/corr/abs-1910-00039} do not contain all the required information, we provide a concrete example.
Consider the a pointed model ($\mathfrak{M}, v$) over $\prop = \{p,q\}$ as depicted in \Cref{ex:Otto}.
If we apply the definition of \citet{DBLP:journals/corr/abs-1910-00039}, for $\ell = 1$ and $c = 3$, we would obtain the following characteristic formula for the node $v$:
\begin{align*}
\psi_{[\mathfrak{M},v]}^{1,3}  = & \;
(p \land \lnot q) \; \land  \; \\
&\Big( \Diamond_1 (p \land \lnot q)  \land  \Diamond_2 (p \land \lnot q) \land  \Diamond_1 (q \land \lnot p) \Big) \;\land \; \\
&\Big( \lnot \Diamond_3 (p \land \lnot q)   \land \lnot \Diamond_2 (q \land \lnot p) \land \lnot \Diamond_3 (q \land \lnot p) \Big) .
\end{align*}
The above formula, however, does not express, for example, that $v$ has no neighbour satisfying $\neg p \land \neg q$.
Hence  \Cref{th:game-GML} for such defined characteristic formulas does not hold. 
Indeed, consider the following pointed model $(\mathfrak{M'}, v')$ from \Cref{ex:Otto}.
Even though $(\mathfrak{M}',v') \models \psi_{[\mathfrak{M},v]}^{\ell,c}$, we still have   $(\mathfrak{M}, v) \not\equiv_{\GML_{\ell,c}} (\mathfrak{M'}, v')$ (and $(\mathfrak{M}, v) \not\sim_{\ell,c} (\mathfrak{M'}, v')$).
For example, we have $(\mathfrak{M}, v) \models \neg \Diamond_1 (\neg p \land \neg q)$, but $(\mathfrak{M}', v') \not\models \neg \Diamond_1 (\neg p \land \neg q)$.

\begin{figure*}[ht]
\centering
\begin{minipage}{.5\textwidth}
  \centering
\begin{center}
\tikzstyle{place}=[circle,draw=blue!50,fill=blue!20,thick]
\tikzstyle{transition}=[circle,draw=black!50,fill=black!20,thick]
\begin{tikzpicture}
\node at ( 0,3) (v4) [place,
                      label=right:$u_1$] {p};] {p};
\node at ( 1,1.5) (v3) [place,
                        label=right:$u_2$] {p};] {p};] {p};
\node at ( 0,0) (v2) [transition, 
                      label=right:$u_3$] {q};] {p};] {q};
\node at (-2,1.5) (v1) [place,
                   label=left:$\mathfrak{M}: ~~~v$] {p};
\draw [line width=0.3mm] (v1.east) -- (v3.west);
\draw [line width=0.3mm] (v1.east) -- (v2.west);
\draw [line width=0.3mm] (v1.east) -- (v4.west);
\end{tikzpicture}
\end{center}
\end{minipage}%
\begin{minipage}{.5\textwidth}
  \centering
\begin{center}
\tikzstyle{place}=[circle,draw=blue!50,fill=blue!20,thick]
\tikzstyle{transition}=[circle,draw=black!50,fill=black!20,thick]
\begin{tikzpicture}
\node at ( 0,3) (v4) [place,
                      label=right:$u'_1$] {p};] {p};
\node at ( -2,3) (v5) [place,
                      label=right:$u'_4$] {\phantom{a}};] {};
\node at ( 1,1.5) (v3) [place,
                        label=right:$u'_2$] {p};] {p};] {p};
\node at ( 0,0) (v2) [transition, 
                      label=right:$u'_3$] {q};] {p};] {q};
\node at (-2,1.5) (v1) [place,
                   label=left:$\mathfrak{M'}: ~~~v'$] {p};
\draw [line width=0.3mm] (v1.east) -- (v3.west);
\draw [line width=0.3mm] (v1.east) -- (v2.west);
\draw [line width=0.3mm] (v1.east) -- (v4.west);
\draw [line width=0.3mm] (v1.north) -- (v5.south);
\end{tikzpicture}
\end{center}
\end{minipage}
\caption{Models $\mathfrak{M}$ and $\mathfrak{M}'$}\label{ex:Otto}
\end{figure*}

We observe that our extended definition of characteristic formulas does not have this problem.
In particular, according to our definition, the characteristic formula for $v$ in $\mathfrak{M}$ is as follows:
\begin{align*}
\varphi_{[\mathfrak{M},v]}^{1,3}  = & \;
(p \land \lnot q) \; \land  \; \\
&\Big( \Diamond_1 (p \land \lnot q)  \land  \Diamond_2 (p \land \lnot q) \land  \Diamond_1 (q \land \lnot p) \Big) \;\land \; \\
&\Big( \lnot \Diamond_3 (p \land \lnot q)   \land \lnot \Diamond_2 (q \land \lnot p) \land \lnot \Diamond_3 (q \land \lnot p) \Big)  \;\land \; \\
&  \lnot \Diamond_1 \Big( \lnot (p \land \lnot q) \land \lnot (q \land \lnot p) \Big).
\end{align*}
It can be checked that  $(\mathfrak{M}',v') \not\models \varphi_{[\mathfrak{M},v]}^{\ell,c}$, as required.



\end{example*}

\biac*

\begin{proof}
To show Statement 1, consider  $\N \in \GNN{\bm}{\ac}$.
We show by induction on $\ell \leq L$ and fixed $k$ that $(G_1,v_1) \sim_{\ell,k} (G_2, v_2)$ implies  $\emb_1(v_1)^{(\ell)} = \emb_2(v_2)^{(\ell)}$,
where $G_1$, $G_2$ are graphs, $v_1$, $v_2$ any of their nodes, and $k \geq 1$. 
If $\ell = 0$, 
then $\ell \leq L$ that $(G_1,v_1) \sim_{\ell,k} (G_2, v_2)$ implies that $v_1$ and $v_2$ satisfy the same propositions, so $\emb_1(v_1)^{(\ell)} = \emb_2(v_2)^{(\ell)}$.

For the induction step, assume that $(G_1,v_1) \sim_{\ell-1,k} (G_2, v_2)$ implies $\emb_1(v_1)^{(\ell-1)} = \emb_2(v_2)^{(\ell-1)}$. 
We will prove this statement  for $\ell$, by showing  contrapositive of the implication.
Hence, we assume  that $\emb_1(v_1)^{(\ell)} \neq \emb_2(v_2)^{(\ell)}$. Recall that by Equation \ref{eq:ac} we have 
\begin{align*}
\emb_i(v_i)^{(\ell)} := \comb \Big( \emb_i(v_i)^{(\ell-1)}, \agg(\lBrace   \emb_i(w)^{(\ell-1)}  \rBrace _{w \in N_{\G}(v_i)})\Big), 
\end{align*}
\noindent so one of the following two cases needs to hold:
\begin{center}
\begin{tabular}{l}
     1. $\emb_1(v_1)^{(\ell-1)} \neq \emb_2(v_2)^{(\ell-1)}$, or\\
     2. $\agg(\lBrace  \emb_1(w)^{(\ell-1)} \rBrace_{w \in N_{\G_1}(v_1)}) \neq \agg( \lBrace  \emb_2(w)^{(\ell-1)} \rBrace_{w \in N_{\G_2}(v_2)})$.
\end{tabular}
\end{center}

Notice that (1) and the inductive hypothesis immediately imply that $(G_1,v_1) \not\sim_{\ell-1,k} (G_2, v_2)$, so we get $(G_1,v_1) \not\sim_{\ell,k} (G_2, v_2)$. If (2), then since $\agg$ is $k$-bounded, there is $w_1 \in N_{G_1}(v_1)$ such that $\emb_1(w_1)^{(\ell-1)}$ occurs $k_1$ times in $\lBrace  \emb_1(w)^{(\ell-1)} \rBrace_{w \in N_{G_1}(v_1)}$
and 
$k_2 \neq k_1$  times
in $ \lBrace  \emb_2^{(\ell-1)}(w) \rBrace_{w \in N_{\G_2}(v_2)}$ such that either $k_1 < k$ or $k_2 < k$.
Assume w.l.o.g. that $k_2 < k_1$.
We will that Spoiler has a winning strategy as follows.
He should select a set $U_1$ of $\mathit{min}(k,k_1)$ elements $w \in N_{G_1}(v_1)$ satisfying $\emb_1^{(\ell-1)}(w) = \emb_1^{(\ell-1)}(w_1)$.
Then, Duplicator must respond with a subset $U_2$ of $w \in N_{G_2}(v_2)$ with $|U_2| = |U_1| = \mathit{min}(k,k_1)$.
Any such $U_2$  must contain a $w_2$ such that $\emb_2^{(\ell-1)}(w_2) \neq \emb_1^{(\ell-1)}(w_1)$. 
Then Spoiler has a winning strategy by choosing $w_2$ since whichever $w_1' \in U_1$ Duplicator chooses, we have  $\emb_2^{(\ell-1)}(w_2) \neq \emb_1^{(\ell-1)}(w_1')$.
By the inductive hypothesis,  $(G_1,w'_1) \not\sim_{\ell-1, k} (G_2, w_2)$, and so $(G_1,v_1) \not\sim_{\ell, k} (G_2, v_2)$.

If $\N \in \GNN{\s}{\ac}$, the aggregation function $\agg$ is set-based, so it suffices to consider the proof for $\N \in \GNN{\bm}{\ac}$ under the special case where $k=1$. 
\end{proof}

\newpage

\section*{Proof Details for Section \ref{sec:readouts}}

\linalgacr*

\begin{proof}
Let $\varphi \in \GMLC$ be a logic classifier of dimension $d$ with propositions $\prop(\varphi)$ and subformulas $\mathsf{sub}(\varphi) = (\varphi_1, \ldots, \varphi_L)$, where $k \leq \ell$ whenever $\varphi_k$ is a subformula of $\varphi_{\ell}$.
We construct a GNN classifier $\N_{\varphi}$ with layers $0, \ldots, L$.
The classification function $\cls$ is such that $\cls(\mathbf{x}) = \true$ iff the last element of $\mathbf{x}$ is $1$.
Layer $0$  uses a combination function to multiply input vectors by a  matrix $\mathbf{D} \in \R^{d \times L}$, namely  $\lambda(v)^{(1)} =  \lambda(v)\mathbf{D} $, where $\mathbf{D}_{k \ell} = 1$ if the $k$th position of the  input vectors  corresponds to a proposition $\varphi_\ell$;
other entries of $\mathbf{D}$ are $0$.
Other layers are of dimension $L$ and defined below. All these remaining layers are homogeneous, meaning that the aggregation, combination, and readout functions 
are the same (i.e., they share the same parameters) across all layers. The activation function is the componentwise truncated ReLU defined as $\sigma(x) = \mathit{min}(\mathit{max}(0,x),1)$. 
The combination function is $\comb(\mathbf{x}, \mathbf{y}, \mathbf{z}) = \sigma(\mathbf{x} \mathbf{C}  + \mathbf{y} \mathbf{A} + \mathbf{z} \mathbf{R} + \mathbf{b})$, whereas  the aggregation and readout functions, $\agg$ and $\readout$, are the max-$n$-sum, where $n$ is the counting rank of $\varphi$. 
The entries of $\mathbf{A}, \mathbf{C}, \mathbf{R} \in \mathbb{R}^{L \times L}$ and  $\mathbf{b} \in \mathbb{R}^L$ depend on the subformulas of $\varphi$ as follows: 

\begin{center}
\begin{tabular}{l}
1. if $\varphi_{\ell}$ is a proposition, $C_{\ell\ell} = 1$, \\
2. if $\varphi_{\ell} = \varphi_j \wedge \varphi_k$, then $C_{j \ell} = C_{k \ell} =1$ and $b_{\ell} = -1$; \\
3. if $\varphi_{\ell} =\neg \varphi_k$, then $C_{k\ell} = -1$ and $b_{\ell} = 1$;\\
4. if $\varphi_{\ell} = \Diamond_c \varphi_k$, then $A_{k\ell} = 1$ and $b_{\ell} = -c + 1$;\\
5. if $\varphi_\ell = \countE_c \varphi_{k}$, then $R_{k\ell} = 1$ and $b_\ell = -c+1$; and
\end{tabular}
\end{center}

\noindent all other entries are set to $0$.

Consider the GNN application to $\G = (V,E,\emb)$ of dimension $d$.
We  show that for each subformula $\varphi_{\ell}$, each  $i \in \{\ell, \ldots, L\}$, and  $v \in V$, if   $(\M_{\G},v) \models \varphi_{\ell}$ then $\emb(v)^{(i)}_{\ell} =1 $, and  otherwise $\emb(v)^{(i)}_{\ell} =0$. 
This implies that $\N_\varphi(\G,v) = \true$ iff $(\M_{\G},v) \models \varphi$.
The proof is by induction on the structure of $\varphi_\ell$.

\begin{itemize}
\item If $\varphi_{\ell}$ is a proposition then, by the design of layer $0$, we have  $\lambda(v)^{(0)}_\ell =1$ if $(\M_{G},v) \models \varphi_\ell$, and otherwise $\lambda(v)^{(0)}_\ell =0$.
Moreover, since $\varphi_\ell$ is a proposition,
all  layers have $C_{\ell \ell} = 1$, $b_{\ell} = 0$, $A_{k\ell} = 0$, and $R_{k\ell} = 0$ for each $k$, so
$\comb(\mathbf{x}, \mathbf{y}, \mathbf{z})_{\ell} = \mathbf{x}_{\ell}$.
Thus $\lambda(v)^{(0)}_\ell = \lambda(v)^{(i)}_\ell$ for any $i$.

\item If $\varphi_{\ell} = \neg \varphi_k$, then by construction, we have $C_{k\ell} = -1$, $b_\ell = 1$, $R_{m\ell} = 0$, and $A_{m\ell} = 0$ for each $m$. 
Then by Equation~\eqref{eq:integrated}, we have $\emb(v)^{(i)}_\ell = \sigma(-\emb(v)^{(i-1)}_k + 1)$.
By inductive hypothesis, 
$\lambda(v)^{(i-1)}_k = 1$ if  $(\M_{G},v) \models \varphi_k$,  and  otherwise $\lambda(v)^{(i-1)}_k = 0$. 
Hence, 
$\emb(v)^{(i)}_\ell=1$ if $(\M_{\G} , v) \models  \neg \varphi_k$, and otherwise $\emb(v)^{(i)}_\ell=0$ for all $i \geq \ell$.

\item If $\varphi_{\ell} = \varphi_j \wedge \varphi_k$, then by construction, we have $C_{j\ell} = C_{k\ell} = 1$, $b_\ell = -1$, $R_{m\ell} = 0$, and $A_{m\ell} = 0$ for each $m$.
Then by Equation~\eqref{eq:integrated}, we have $\emb(v)^{(i)}_\ell = \sigma(\emb(v)^{(i-1)}_j + \emb(v)^{(i-1)}_k - 1)$.
By inductive hypothesis, 
$\lambda(v)^{(i-1)}_k = 1$ if  $(\M_{G},v) \models \varphi_k$,  and  otherwise $\lambda(v)^{(i-1)}_k = 0$; and $\lambda(v)^{(i-1)}_j = 1$ if  $(\M_{G},v) \models \varphi_j$, and  otherwise $\lambda(v)^{(i-1)}_j = 0$.
Hence, 
$\emb(v)^{(i)}_\ell=1$ if $(\M_{\G} , v) \models  \varphi_j \wedge \varphi_k$, and otherwise $\emb(v)^{(i)}_\ell=0$ for all $i \geq \ell$.

\item If $\varphi_{\ell} = \Diamond_c \varphi_k$, then by construction, we have $A_{k \ell} = 1$,  $b_{\ell} = -c+1$, $R_{m\ell} = 0$, and $C_{m \ell} = 0$ for each $m$. 
Since \agg{} is the max-$n$-sum, by Equation~\eqref{eq:integrated}, we have $\emb(v)^{(i)}_\ell = \sigma(\mathit{min}(n, \sum_{w\in N_G(v)} \lambda(w)^{(i-1)}_k) -c + 1)$.
By inductive hypothesis, 
$\lambda(w)^{(i-1)}_k = 1$ if  $(\M_{G},w) \models \varphi_k$,  and  otherwise $\lambda(w)^{(i-1)}_k = 0$. 
Hence, 
$\emb(v)^{(i)}_\ell=1$ if $(\M_{\G} , v) \models \Diamond_c \varphi_k$, and otherwise $\emb(v)^{(i)}_\ell=0$.

\item If $\varphi_\ell = \countE_c \varphi_{k}$, then by construction, we have $R_{k \ell} = 1$,  $b_{\ell} = -c+1$,  $C_{m \ell} = 0$, and $A_{m \ell} = 0$ for each $m$. 
Since \agg{} is max-$n$-sum, by Equation~\eqref{eq:integrated}, we have $\emb(v)^{(i)}_\ell = \sigma(\mathit{min}(n, \sum_{w\in V_G} \emb(w)^{(i-1)}_k) -c + 1)$.
By the inductive hypothesis, 
$\emb(w)^{(i-1)}_k = 1$ if  $(\M_{G},w) \models \varphi_k$,  and  otherwise $\emb(w)^{(i-1)}_k = 0$. 
Hence, 
$\emb(v)^{(i)}_\ell=1$ if $(\M_{\G} , v) \models \countE_c \varphi_{k}$, and otherwise $\emb(v)^{(i)}_\ell=0$.
\end{itemize}

Finally, observe that $\MLA$ is a fragment of $\GMLC$ where the counting rank is restricted to $1$. 
Then, the max-$n$-sum function becomes the max-$1$-sum, which is simply the component-wise maximum function.
Hence, we have $\MLA \leq \GNN{\s}{\acr}$ when the counting rank is 1.
\end{proof}

\begin{example*}[$\GNN{\bm}{\acr}$ 
Construction for $\GMLC$]
We illustrate the construction of $\N \in \GNN{\bm}{\acr}$ from \Cref{prop:ml-to-gnn-acr} for the $\GML_{\ell,c}$ formula $\varphi = \Diamond_3 p \land \exists_3 \lnot \Diamond_3 q$ with quantifier rank $\ell = 2$ and counting rank $c = 3$.
We  assume that input graphs have labellings of the form $[x_p, x_q]$, corresponding to the node colours $p$ and $q$. The first step is to provide an ordering of subformulas of $\varphi$ as described in the proof of \Cref{prop:ml-to-gnn-acr}, for example: $\varphi_1 = p$, $\varphi_2 = q$, $\varphi_3 = \Diamond_3 p$, $\varphi_4 = \Diamond_3 q$, $\varphi_5 = \lnot \Diamond_3 q$, $\varphi_6 = \exists_3 \lnot \Diamond_3 q$, and $\varphi_7 = \varphi = \Diamond_3 p \land \exists_3 \lnot \Diamond_3 q$.


As before, we construct matrix $\mathbf{D}$ for Layer 0 to extend the input labels: 

$$\mathbf{D} = 
\begin{bmatrix}
1 & 0 & 0 & 0 & 0 & 0 & 0\\
0 & 1 & 0 & 0 & 0 & 0 & 0
\end{bmatrix}.
$$

We use $\mathbf{C}$ and $\mathbf{A}$ to propagate conjunctions, negations and local graded modalities; use $\mathbf{b}$ for offsetting multiplicities. Following the construction from Theorem \ref{prop:ml-to-gnn-acr}, we get:

$$\mathbf{C} = 
\begin{bmatrix}
1 & 0 & 0 & 0 & 0 & 0 & 0\\
0 & 1 & 0 & 0 & 0 & 0 & 0\\
0 & 0 & 0 & 0 & 0 & 0 & 1\\
0 & 0 & 0 & 0 & -1 & 0 & 0\\
0 & 0 & 0 & 0 & 0 & 0 & 0\\
0 & 0 & 0 & 0 & 0 & 0 & 1 \\
0 & 0 & 0 & 0 & 0 & 0 & 0
\end{bmatrix}, ~~
\mathbf{A} = 
\begin{bmatrix}
0 & 0 & 1 & 0 & 0 & 0 & 0\\
0 & 0 & 0 & 1 & 0 & 0 & 0\\
0 & 0 & 0 & 0 & 0 & 0 & 0\\
0 & 0 & 0 & 0 & 0 & 0 & 0\\
0 & 0 & 0 & 0 & 0 & 0 & 0\\
0 & 0 & 0 & 0 & 0 & 0 & 0\\
0 & 0 & 0 & 0 & 0 & 0 & 0
\end{bmatrix}, \text{ and } ~
\mathbf{b} = 
\begin{bmatrix}
0 & 0 & -2 & -2 & 1 & -2 & -1
\end{bmatrix}.
$$

To propagate the global counting modality $\exists_k$ in $\GMLC$, we use matrix $\mathbf{R}$ to process the aggregation over the set of all vertices. By construction from Theorem \ref{prop:ml-to-gnn-acr}, we get:

$$\mathbf{R} = 
\begin{bmatrix}
0 & 0 & 0 & 0 & 0 & 0 & 0\\
0 & 0 & 0 & 0 & 0 & 0 & 0\\
0 & 0 & 0 & 0 & 0 & 0 & 0\\
0 & 0 & 0 & 0 & 0 & 0 & 0\\
0 & 0 & 0 & 0 & 0 & 1 & 0\\
0 & 0 & 0 & 0 & 0 & 0 & 0\\
0 & 0 & 0 & 0 & 0 & 0 & 0
\end{bmatrix}.
$$
\end{example*}

\gameGMLC*

\begin{proof}
To prove the theorem we will show three implications: 

\begin{center}
\begin{tabular}{l}
1. $(\mathfrak{M}, v) \sim_{\ell,c}^{\exists} (\mathfrak{M'}, v')$ implies  $(\mathfrak{M}, v) \equiv_{\GMLC_{\ell,c}} (\mathfrak{M'}, v')$, \\
2. $(\mathfrak{M}, v) \equiv_{\GMLC_{\ell,c}} (\mathfrak{M'}, v')$ implies $(\mathfrak{M}',v') \models \varphi_{\exists[\mathfrak{M},v]}^{\ell,c}$, and \\
3. $(\mathfrak{M}',v') \models \varphi_{\exists[\mathfrak{M},v]}^{\ell,c}$ implies $(\mathfrak{M}, v) \sim_{\ell,c}^{\exists} (\mathfrak{M'}, v')$.
\end{tabular}
\end{center}

For the first implication we prove  by induction on $\ell$ that for any fixed $c$,  $(\mathfrak{M},v) \not\equiv_{\GMLC_{\ell,c}} (\mathfrak{M'}, v')$
implies $(\mathfrak{M}, v) \not\sim_{\ell,c}^{\exists} (\mathfrak{M'}, v')$. 
For the base case $\ell = 0$, $(\mathfrak{M}, v) \sim_{0,c}^{\exists} (\mathfrak{M'}, v')$ in the $0$-th round implies that $v$ and $v'$ satisfy the same propositions, so we have $(\mathfrak{M},v) \equiv_{\GMLC_{0,c}} (\mathfrak{M'}, v')$.
For the inductive step, assume $(\mathfrak{M},v) \not\equiv_{\GMLC_{\ell,c}} (\mathfrak{M'}, v')$ implies $(\mathfrak{M}, v) \not\sim_{\ell,c}^{\exists} (\mathfrak{M'}, v')$. 
We show that the claim holds for $\ell+1$. Assume $(\mathfrak{M},v) \not\equiv_{\GMLC_{\ell+1,c}} (\mathfrak{M'}, v')$, then there exists $\varphi \in \GMLC_{\ell+1,c}$ such that $(\mathfrak{M},v) \models \varphi$ but $(\mathfrak{M'},v') \not\models \varphi$. 
It suffices to show the case for $\varphi = \countE_k \phi$, where $k \leq c$ and $\phi \in \GMLC_{\ell, c}$. 
The proofs for the remaining cases are identical to those in \cite{DBLP:journals/corr/abs-1910-00039}. 
Spoiler has a winning strategy from $(\mathfrak{M},v,\mathfrak{M'},v')$ by picking a set $U_1$ of $k$ worlds in $\mathfrak{M}$ such that for each $u \in U_1$, $(\mathfrak{M}, u) \models \phi$. 
Then, any subset $U_2$ of size $k$ picked by Duplicator must contain some $u'$ such that $(\mathfrak{M'},u') \not\models \phi$. 
Spoiler can subsequently pick $u'$ as the next pebble position.
Then by the inductive hypothesis, we have $(\mathfrak{M}, u) \not\sim_{\ell,c}^{\exists} (\mathfrak{M'}, u')$, so we get $(\mathfrak{M}, v) \not\sim_{\ell+1,c}^{\exists} (\mathfrak{M'}, v')$.

For the second implication we observe that
the characteristic formula $\varphi_{\exists[\mathfrak{M},v]}^{\ell,c}$ is constructed so that 
$(\mathfrak{M},v) \models \varphi_{\exists[\mathfrak{M},v]}^{\ell,c}$.
Moreover, $\varphi_{\exists[\mathfrak{M},v]}^{\ell,c}$ is a $\GMLC_{\ell,c}$ formula, so
$(\mathfrak{M}, v) \equiv_{\GMLC_{\ell,c}} (\mathfrak{M'}, v')$ immediately implies $(\mathfrak{M}',v') \models \varphi_{\exists[\mathfrak{M},v]}^{\ell,c}$.

Now, we show the third implication  by induction on $\ell$, for a fixed $c$. 
For the base case when $\ell = 0$, $(\mathfrak{M'},v') \models \varphi_{\exists[\mathfrak{M},v]}^{0,c}$ implies that $v$ and $v'$ satisfy the same propositions, so we have $(\mathfrak{M}, v) \sim_{0,c}^{\exists} (\mathfrak{M'}, v')$. 
For the inductive step, assume $(\mathfrak{M'},v') \models \varphi_{\exists[\mathfrak{M},v]}^{\ell,c}$ implies $(\mathfrak{M}, v) \sim_{\ell,c}^{\exists} (\mathfrak{M'}, v')$.
We will show that  if $(\mathfrak{M'},v') \models \varphi_{\exists[\mathfrak{M},v]}^{\ell+1,c}$, then $(\mathfrak{M}, v) \sim_{\ell+1,c}^{\exists} (\mathfrak{M'}, v')$.
To this end, we will show that  Duplicator has a winning strategy.
First, we need to show that $(\mathfrak{M},v,\mathfrak{M}',v')$ is not a winning configuration for Spoiler, that is, $(\mathfrak{M}, v)$ and $(\mathfrak{M'}, v')$ satisfy the same propositions.
Indeed, by the form of $\varphi_{\exists[\mathfrak{M},v]}^{\ell,c}$, the fact that $(\mathfrak{M'},v') \models \varphi_{\exists[\mathfrak{M},v]}^{\ell+1,c}$ implies  $(\mathfrak{M'},v') \models \varphi_{\exists[\mathfrak{M},v]}^{0,c}$.
The latter, implies that $(\mathfrak{M}, v)$ and $(\mathfrak{M'}, v')$ satisfy the same propositions.
To show how the winning strategy for Duplicatior, we need to consider all possibilities for Spoiler moves. If Spoiler plays a local round, then the proof proceeds as in that of Theorem \ref{th:game-GML}, and we get $(\mathfrak{M}, v) \sim_{\ell+1,c}^{\exists} (\mathfrak{M'}, v')$.
Next, we will consider separately  the cases when Spoiler plays a set in $\mathfrak{M}$ and in $\mathfrak{M}'$ in a global round.

Assume that Spoiler selects a set $U = \{u_1,\dots, u_t \}$ of size $t \leq c$ of elements in $\mathfrak{M}$. 
We will show that there exists a set $U' = \{u_1',\dots, u_t' \}$ of elements in $\mathfrak{M'}$
such that
$(\mathfrak{M'}, u_i') \models \varphi_{\exists[\mathfrak{M},u_i]}^{\ell,c}$, for each $i \in \{ 1, \dots, t \}$.
To this end, it suffices to show that if there is a set $Z=\{z_1, \dots, z_{t'} \} \subseteq V$ such that $t' \leq c$ and  $(\mathfrak{M}, z_i) \models \varphi_{\exists[\mathfrak{M},z_1]}^{\ell,c}$ for each $z_i \in Z$,
then there is 
a set $Z'=\{z_1', \dots, z_{t'}' \} \subseteq V'$ of the same cardinality $t'$ such that $(\mathfrak{M}', z_i') \models \varphi_{\exists[\mathfrak{M},z_1]}^{\ell,c}$ for each $z_i' \in Z'$.
Consider arbitrary $Z$ satisfying the required properties.
Since $(\mathfrak{M}, z_i) \models \varphi_{\exists[\mathfrak{M},z_1]}^{\ell,c}$for each $z_i$, by the inductive assumption we obtain that $(\mathfrak{M}, z_i) \sim_{\ell,c}^{\exists} (\mathfrak{M}, z_j)$, for all $i,j \in \{1, \dots, t' \}$.
Hence, in the definition of $\varphi_{\exists[\mathfrak{M},v]}^{\ell+1,c}$, we have $|J^\ell_{z_1}| \geq t'$, and so $\varphi_{\exists[\mathfrak{M},v]}^{\ell+1,c}$ contains a conjunct $\countE_{t'} \varphi_{\exists[\mathfrak{M},z_1]}^{\ell,c}$.
Since $(\mathfrak{M'},v') \models \varphi_{\exists[\mathfrak{M},v]}^{\ell+1,c}$, we have therefore $(\mathfrak{M'},v') \models \countE_{t'} \varphi_{\exists[\mathfrak{M},z_1]}^{\ell,c}$.
This, crucially, implies that required $Z'$ exists.
Recall that it implies existence of $U'$.
Now, the strategy for Duplicator is to choose this $U'$.
Then, Spoiler selects some $u_i' \in U'$.
The winning strategy for Duplicator is to select $u_i \in U$.
Indeed, we have $(\mathfrak{M'}, u_i') \models \varphi_{\exists[\mathfrak{M},u_i]}^{\ell,c}$ so, by the inductive hypothesis, $(\mathfrak{M'}, u_i) \sim_{\ell,c}^{\exists} (\mathfrak{M}, u_i')$.
Hence, we get $(\mathfrak{M}, v) \sim_{\ell+1,c}^{\exists} (\mathfrak{M'}, v')$.

Assume that Spoiler selects a set $U' = \{u'_1,\dots, u'_t \}$ of size $t \leq c$ of elements in $V$. 
We will show that there exists a set $U = \{u_1,\dots, u_t \}$ of elements in $V'$
such that
$(\mathfrak{M'}, u_i') \models \varphi_{\exists[\mathfrak{M},u_i]}^{\ell,c}$, for each $i \in \{ 1, \dots, t \}$.
To this end, it suffices to show that:
\begin{enumerate}[label=(\alph*)]
\item for each $z' \in V'$ there exists $z \in V$ such that  $(\mathfrak{M'}, z') \models \varphi_{\countE[\mathfrak{M},z]}^{\ell,c}$, and 

\item if there is a set $Z'=\{z'_1, \dots, z'_{t'} \} \subseteq V'$ and $z_1 \in V$ such that $t' \leq c$ and  $(\mathfrak{M}, z'_i) \models \varphi_{\countE[\mathfrak{M},z_1]}^{\ell,c}$ for each $z'_i \in Z$,
then there is 
a set $Z=\{z_1, \dots, z_{t'} \} \subseteq V$ of the same cardinality $t'$ such that $\varphi_{\countE[\mathfrak{M},z_i]}^{\ell,c} = \varphi_{\countE[\mathfrak{M},z_1]}^{\ell,c}$ for each $z_i \in Z$.
\end{enumerate}
To show Statement (a), we observe that $(\mathfrak{M'},v') \models \varphi_{\countE[\mathfrak{M},v]}^{\ell+1,c}$
implies that
$(\mathfrak{M'},v')  \models \neg \countE_1 \bigwedge_{w \in V} \neg \varphi_{[\mathfrak{M}, w]}^{\ell,c}$.
Therefore, for each $z' \in V'$ there exists $w \in V$ such that $(\mathfrak{M'}, z') \models \varphi_{\countE[\mathfrak{M},w]}^{\ell,c}$, as required.
For Statement (b), consider arbitrary $Z'$ and $z_1$ satisfying the required properties.
By these properties, it follows directly that
$(\mathfrak{M'},v') \models \countE_{t'} \varphi_{\countE[\mathfrak{M},z_1]}^{\ell+1,c}$.
We can show that it implies  $t' \leq |J^\ell_{z_1}|$.
Indeed,  if $|J^\ell_{z_1}| < t'$ then, by the  definition of  $\varphi_{\countE[\mathfrak{M},v]}^{\ell+1,c}$, the formula   $\neg \countE_{t'} \varphi_{\countE[\mathfrak{M},z_1]}^{\ell+1,c}$ would be a conjunct in $\varphi_{\countE[\mathfrak{M},v]}^{\ell+1,c}$, and so, $(\mathfrak{M'},v') \models \varphi_{\countE[\mathfrak{M},v]}^{\ell+1,c}$
would imply $(\mathfrak{M'},v') \models \neg \countE_{t'} \varphi_{\countE[\mathfrak{M},z_1]}^{\ell+1,c}$, raising a contradiction.
Since $t' \leq |J^\ell_{z_1}|$, we obtain that 
$\countE_{t'} \varphi_{\countE[\mathfrak{M},z_1]}^{\ell,c}$ is one of the conjuncts of 
$\varphi_{\countE[\mathfrak{M},v]}^{\ell+1,c}
$, and so, $(\mathfrak{M},v) \models \countE_{t'} \varphi_{\countE[\mathfrak{M},z_1]}^{\ell,c}$.
This implies that required $Z$ exists.

Recall that by showing Statements (a) and (b), we have shown existence of  required $U = \{u_1,\dots, u_t \}$.
Now, the strategy for Duplicator is to choose this set  $U$.
Then, Spoiler selects some $u_i \in U$.
The winning strategy for Duplicator is to select $u_i' \in U'$.
Indeed, we have $(\mathfrak{M'}, u_i') \models \varphi_{\countE[\mathfrak{M},u_i]}^{\ell,c}$ so, by the inductive hypothesis, $(\mathfrak{M'}, u_i') \sim_{\ell,c}^{\countE} (\mathfrak{M}, u_i)$.
Hence, we get $(\mathfrak{M}, v) \sim_{\ell+1,c}^{\countE} (\mathfrak{M'}, v')$.
\end{proof}

\begin{example*}[Characteristic Formulas for $\GMLC$]
Consider the following pointed model ($\mathfrak{M}, v$) over $\prop = \{p,q\}$ again:

\vspace{5mm}
\begin{center}
\tikzstyle{place}=[circle,draw=blue!50,fill=blue!20,thick]
\tikzstyle{transition}=[circle,draw=black!50,fill=black!20,thick]
\begin{tikzpicture}
\node at ( 0,3) (v4) [place,
                      label=right:$u_1$] {p};] {p};
\node at ( 1,1.5) (v3) [place,
                        label=right:$u_2$] {p};] {p};] {p};
\node at ( 0,0) (v2) [transition, 
                      label=right:$u_3$] {q};] {p};] {q};
\node at (-2,1.5) (v1) [place,
                   label=left:$\mathfrak{M}: ~~~v$] {p};
\draw [line width=0.3mm] (v1.east) -- (v3.west);
\draw [line width=0.3mm] (v1.east) -- (v2.west);
\draw [line width=0.3mm] (v1.east) -- (v4.west);
\end{tikzpicture}
\end{center}
\vspace{5mm}


Then, a characteristic formula, as in  \Cref{th:game-GMLC}, is as follows:
\begin{align*}
\varphi_{\Ecount[\mathfrak{M},v]}^{1,3} = & \;
(p \land \lnot q) \; \land \\
&\Big( \Diamond_1 (p \land \lnot q)  \land  \Diamond_2 (p \land \lnot q) \land  \Diamond_1 (q \land \lnot p) \Big) \;\land \; \\
&\Big( \lnot \Diamond_3 (p \land \lnot q)   \land \lnot \Diamond_2 (q \land \lnot p) \land \lnot \Diamond_3 (q \land \lnot p) \Big)  \;\land \; \\
&\Big( \exists_1 (p \land \lnot q) \land \exists_2 (p \land \lnot q) \land \exists_3 (p \land \lnot q) \land  \exists_1 (q \land \lnot p) \Big) \; \land \; \\
& \Big(\lnot \exists_2 (q \land \lnot p) \land \lnot \exists_3 (q \land \lnot p) \Big) \; \land \;\\
&  \lnot \Diamond_1 \Big( \lnot (p \land \lnot q) \land \lnot (q \land \lnot p) \Big) \;\land\; \lnot \exists_1 \Big( \lnot (p \land \lnot q) \land \lnot (q \land \lnot p) \Big).
\end{align*}
\end{example*}

\acrbi*

\begin{proof}
We first consider $\N \in \GNN{\bm}{\acr}$.
We show by induction on $\ell \leq L$ that $(G_1,v_1) \sim_{\ell,k}^{\exists} (G_2, v_2)$ implies  $\emb_1(v_1)^{(\ell)} = \emb_2(v_2)^{(\ell)}$,
where $G_1$, $G_2$ are graphs, $v_1$, $v_2$ any of their nodes, and $k \geq 1$. 
If $\ell = 0$, 
then $v_1$ and $v_2$ satisfy the same propositions, so $\emb_1(v_1)^{(\ell)} = \emb_2(v_2)^{(\ell)}$.
For the induction step, assume $(G_1,v_1) \sim_{\ell-1,k}^{\exists} (G_2, v_2)$ implies $\emb_1(v_1)^{(\ell-1)} = \emb_2(v_2)^{(\ell-1)}$. We prove the contrapositive for $\ell$.

If $\N \in \GNN{\bm}{\acr}$, then $\agg$ and $\readout$ are $k$-bounded. Assume that $\emb_1(v_1)^{(\ell)} \neq \emb_2(v_2)^{(\ell)}$. Then by Equation \ref{eq:integrated}, 

\begin{align*}
\emb_i(v_i)^{(\ell)} := \comb \Big( \emb_i(v_i)^{(\ell-1)}, &\agg(\lBrace   \emb_i(w)^{(\ell-1)}  \rBrace _{w \in N_{\G}(v_i)}),\readout( \lBrace  \emb_i(w)^{(\ell-1)} \rBrace_{w \in V})\Big), 
\end{align*}

\noindent so we have one of the following:  

\begin{center}
\begin{tabular}{l}
1. $\emb_1(v_1)^{(\ell-1)} \neq \emb_2(v_2)^{(\ell-1)}$,\\
2. $\agg(\lBrace   \emb_1(w)^{(\ell-1)}  \rBrace _{w \in N_{\G_1}(v_1)}) \neq \agg(\lBrace   \emb_2(w)^{(\ell-1)}  \rBrace _{w \in N_{\G_2}(v_2)})$, or\\
3. $\readout( \lBrace  \emb_1(w)^{(\ell-1)} \rBrace_{w \in V_1}) \neq \readout( \lBrace  \emb_2(w)^{(\ell-1)} \rBrace_{w \in V_2})$.
\end{tabular}
\end{center}

Notice that (1) and the inductive hypothesis immediately imply that $(G_1,v_1) \not\sim_{\ell-1,k}^{\exists} (G_2, v_2)$, so we get $(G_1,v_1) \not\sim_{\ell,k}^{\exists} (G_2, v_2)$. 
(2) corresponds to the $k$-bounded $\ac$ case in the proof of Theorem \ref{thm:bisimulation-invariance} and implies $(G_1,v_1) \not\sim_{\ell,k}^{\exists} (G_2, v_2)$. 
Lastly, since the readout function $\readout$ is $k$-bounded, (3) implies that there exists $w_1 \in V_1$ such that $\emb_1(w_1)^{(\ell-1)}$ appears $k_1$ times in $\lBrace  \emb_1(w)^{(\ell-1)} \rBrace_{w \in V_1}$
and 
$k_2 \neq k_1$  times
in $ \lBrace  \emb_2^{(\ell-1)}(w) \rBrace_{w \in V_2}$ such that either $k_1 < k$ or $k_2 < k$.
Assume w.l.o.g. $k_2 < k_1$.
Then,  Spoiler can pick a set $U_1$ of $\mathit{min}(k,k_1)$ worlds in $V_1$ with label $\emb_1(w_1)^{(\ell)}$. 
Then any set $U_2$ of $\mathit{min}(k,k_1)$ worlds  Duplicator picks in $V_2$ contains a world $u$ such that  $\emb_2(u)^{(\ell-1)} \neq \emb_1(w_1)^{(\ell-1)}$. 
Then by the inductive hypothesis, we have $(G_1, u') \not\sim_{\ell-1,k}^{\exists} (G_2, u)$ for every $u' \in U_1$. Hence,  Spoiler has a winning strategy by picking $U_2$ then selecting the required $u$ from any $U_2$ Duplicator selects. 
Hence, we have $(G_1, v_1) \not\sim_{\ell,k}^{\exists} (G_2, v_2)$.

Finally, the proof for $\sim_{\ell,k}^{\exists}$-invariance of $\GNN{\s}{\acr}$ with $L$ layers can be treated as a special case of the proof for $\GNN{\bm}{\acr}$ under the special case where $k = 1$.
\end{proof}

\newpage
\section*{Proof Details for Section \ref{sec:two-var}}

To show that $\GNN{\bm}{\acn}$  and $\GNN{\s}{\acn}$  capture $\Ctwo$ and  $\FOtwo$  respectively, we will exploit the technique  \citet{DBLP:conf/iclr/BarceloKM0RS20}, based on the fact that \FOtwo{} has the same expressive power as the modal logic $\MLpar$ with complex modalities \cite[Theorem 1]{DBLP:conf/csl/LutzSW01} and 
\Ctwo{} over coloured graphs as  \EMLC{} \cite[Theorem D.3]{DBLP:conf/iclr/BarceloKM0RS20}. 

The grammar for \EMLC{} formulas (in normal form)
is given below \cite[Lemma D.4]{DBLP:conf/iclr/BarceloKM0RS20}:
$$
\varphi :=  p \mid \neg \varphi \mid \varphi \land \varphi \mid \langle S \rangle ^{\geq c} \varphi,   
$$
where $p$ ranges over propositions, $c \in \mathbb{N}$, and $S$ is a modal parameter from the following grammar:
$$
S:= id  \mid  \neg id \mid e \mid \neg e 
\mid id \cup e \mid \neg id \cap \neg e 
\mid  e \cup \neg e \mid e \cap \neg  e .
$$
%
In order to provide a correspondence with \FOtwo{}, we also introduce  a  modal logic \EML{}, which is similar to $\EMLC$, but the counting parameter $c$ in formulas is always set to $1$. 

As for the semantics of \EMLC{} and \EML{},
we will consider the same models  $\M_{\G} = (V, E, \nu)$ as for other modal logics in the paper.
The valuation function $\nu$ extends to complex formulas as follows:
\begin{align*}
\nu(\neg \varphi) & := V \setminus \nu(\varphi),
\\
\nu(\varphi_1 \land , \varphi_2)  & := \nu(\varphi_1) \cap \nu(\varphi_2), \\
\nu(\langle S \rangle^{\geq k} \varphi) & := \{ v \mid
 k \leq | \{ w \mid \{ v,w \} \in \varepsilon_S \text{ and } w \in \nu(\varphi) \}| \},
\end{align*}
where $\varepsilon_S$ is defined depending on the form of $S$ as follows:

\begin{itemize}
\item $\varepsilon_{id} = \{\{ v,v \} \;| \; v \in V\}$, 
\item $\varepsilon_{\neg id} = \{\{v,w \} \mid v, w \in V\}  \setminus \varepsilon_{id}$,
\item $\varepsilon_e = E$, 
\item $\varepsilon_{\neg e} = \{\{v,w \} \mid v,w \in V\} \setminus \varepsilon_{e}$,
\item $\varepsilon_{id \cup e} = \varepsilon_{id} \cup \varepsilon_{e}$,
\item $\varepsilon_{\lnot id \cap \lnot e} = \varepsilon_{\lnot id} \cap \varepsilon_{\lnot e}$,
\item $\varepsilon_{e \cup \lnot e} = \varepsilon_{e} \cup \varepsilon_{\lnot e}$,
\item $\varepsilon_{e \cap \lnot e} = \varepsilon_{e} \cap \varepsilon_{\lnot e} =  \emptyset$.
\end{itemize}
As usual, a pointed model $(\mathfrak{M}, v)$, 
\emph{satisfies} a formula $\varphi$, denoted $(\mathfrak{M},v) \models \varphi$, if
$v \in \nu(\varphi)$.

\linalgacn*

\begin{proof}
It is known that  
\Ctwo{} over coloured graphs has the same expressivity as as  \EMLC{} \cite[Theorem D.3]{DBLP:conf/iclr/BarceloKM0RS20}---see also \cite[Theorem 1]{DBLP:conf/csl/LutzSW01}.
Hence, each $\Ctwo$ classifier can be written as an equivalently \EMLC{} formula.
Hence, we assume that $\varphi$ is
an $\EMLC$  classifier  of dimension $d$ with propositions $\prop(\varphi)$ and subformulas $\mathsf{sub}(\varphi) = (\varphi_1, \ldots, \varphi_L)$, where $k \leq \ell$ whenever $\varphi_k$ is a subformula of $\varphi_{\ell}$.
We construct a GNN classifier $\N_{\varphi}$ with layers $0, \ldots, L$.
The classification function $\cls$ is such that $\cls(\mathbf{x}) = \true$ iff the last element of $\mathbf{x}$ is $1$.
Layer $0$  uses a combination function to multiply input vectors by a  matrix $\mathbf{D} \in \R^{d \times L}$, namely  $\lambda(v)^{(1)} =  \lambda(v)\mathbf{D} $, where $\mathbf{D}_{k \ell} = 1$ if the $k$th position of the  input vectors  corresponds to a proposition $\varphi_\ell$;
other entries of $\mathbf{D}$ are $0$.
Other layers are of dimension $L$ and defined below. All these remaining layers are homogeneous, meaning that the aggregations and  combination  functions in each case are the same (i.e., they share the same parameters across all layers). The activation function in all cases is the componentwise truncated ReLU defined as $\sigma(x) = \mathit{min}(\mathit{max}(0,x),1)$.

We define  \acn{} layers  with $\comb(\mathbf{x}, \mathbf{y}, \mathbf{z}) = \sigma(\mathbf{x} \mathbf{C}  + \mathbf{y} \mathbf{A} + \mathbf{z} \mathbf{\overline{A}} + \mathbf{b})$ and take aggregation functions, $\agg$ and $ \overline{\agg}$ to be the max-$n$-sum function, where $n$ is the counting rank of $\varphi$ 
(defined similarly as for other counting logics in the paper, namely as the maximal number $c$ occurring in expressions $\langle S \rangle ^{\geq c}$ in $\varphi$). 
We define the entries of $\mathbf{A}, \mathbf{\overline{A}}, \mathbf{C} \in \mathbb{R}^{L \times L}$ and  $\mathbf{b} \in \mathbb{R}^L$ depending on the subformulas of $\varphi$ as follows:

\begin{center}
\begin{tabular}{l}
1. if $\varphi_{\ell}$ is a proposition, $C_{\ell\ell} = 1$, \\
2. if $\varphi_{\ell} = \varphi_j \wedge \varphi_k$, then $C_{j \ell} = C_{k \ell} =1$ and $b_{\ell} = -1$; \\
3. if $\varphi_{\ell} =\neg \varphi_k$, then $C_{k\ell} = -1$ and $b_{\ell} = 1$;\\
4. if $\varphi_{\ell} = \langle id \rangle^{\geq c} \varphi_k$ and $c=1$, then  $C_{k\ell} = 1$
\\
5. if $\varphi_{\ell} = \langle \lnot id \rangle^{\geq c} \varphi_k$ then $A_{k\ell} = 1$, $\overline{A}_{k\ell} = 1$ and $b_{\ell} = -c+1$;\\
6. if $\varphi_{\ell} = \langle e \rangle^{\geq c} \varphi_k$ then $A_{k\ell} = 1$ and $b_{\ell} = -c+1$;\\
7. if $\varphi_{\ell} = \langle \lnot e \rangle^{\geq c} \varphi_k$ then $C_{k\ell} = 1$, $\overline{A}_{k\ell} = 1$ and $b_{\ell} = -c+1$;\\
8. if $\varphi_{\ell} = \langle e \cup id \rangle^{\geq c} \varphi_k$ then $C_{k\ell} = 1$, $A_{k\ell} = 1$ and $b_{\ell} = -c+1$;\\
9. if $\varphi_{\ell} = \langle \lnot e \cap \lnot id \rangle^{\geq c} \varphi_k$ then $\overline{A}_{k\ell} = 1$ and $b_{\ell} = -c+1$;\\
10. if $\varphi_{\ell} = \langle e \cup \lnot e \rangle^{\geq c} \varphi_k$ then $C_{k\ell} = 1$, $A_{k\ell} = 1$, $\overline{A}_{k\ell} = 1$ and $b_{\ell} = -c+1$;
\end{tabular}
\end{center}

\noindent all other entries in the $\ell$-th columns are set to 0. In particular, notice that if  $\varphi_{\ell} = \langle e \cap \lnot e \rangle^{\geq c} \varphi_k$, then all values in  the $\ell$-th columns are set to 0.

Consider the GNN application to $\G = (V,E,\emb)$ of dimension $d$.
We  show that for each subformula $\varphi_{\ell}$, each  $i \in \{\ell, \ldots, L\}$, and  $v \in V$, if   $(\M_{\G},v) \models \varphi_{\ell}$ then $\emb(v)^{(i)}_{\ell} =1 $, and  otherwise $\emb(v)^{(i)}_{\ell} =0$. 
This implies that $\N_\varphi(\G,v) = \true$ iff $(\M_{\G},v) \models \varphi$.
The proof is by induction on the structure of $\varphi_\ell$.

\begin{itemize}
\item If $\varphi_{\ell}$ is a proposition then, by the design of layer $0$, we have  $\lambda(v)^{(0)}_\ell =1$ if $(\M_{G},v) \models \varphi_\ell$, and otherwise $\lambda(v)^{(0)}_\ell =0$.
Moreover, since $\varphi_\ell$ is a proposition,
all \acn{} layers have $C_{\ell \ell} = 1$, $b_{\ell} = 0$, $A_{k\ell} = 0$, and $\overline{A}_{k\ell} = 0$ for each $k$, so
$\comb(\mathbf{x}, \mathbf{y}, \mathbf{z})_{\ell} = \mathbf{x}_{\ell}$.
Thus $\lambda(v)^{(0)}_\ell = \lambda(v)^{(i)}_\ell$ for any $i$.

\item If $\varphi_{\ell} = \neg \varphi_k$, then by construction, we have $C_{k\ell} = -1$, $b_\ell = 1$, $\overline{A}_{m\ell} = 0$, and $A_{m\ell} = 0$ for each $m$. 
Then by Equation~\eqref{eq:acn}, we have $\emb(v)^{(i)}_\ell = \sigma(-\emb(v)^{(i-1)}_k + 1)$.
By inductive hypothesis, 
$\lambda(v)^{(i-1)}_k = 1$ if  $(\M_{G},v) \models \varphi_k$,  and  otherwise $\lambda(v)^{(i-1)}_k = 0$. 
Hence, 
$\emb(v)^{(i)}_\ell=1$ if $(\M_{\G} , v) \models  \neg \varphi_k$, and otherwise $\emb(v)^{(i)}_\ell=0$ for all $i \geq \ell$.

\item If $\varphi_{\ell} = \varphi_j \wedge \varphi_k$, then by construction, we have $C_{j\ell} = C_{k\ell} = 1$, $b_\ell = -1$, $\overline{A}_{m\ell} = 0$, and $A_{m\ell} = 0$ for each $m$.
Then by Equation~\eqref{eq:acn}, we have $\emb(v)^{(i)}_\ell = \sigma(\emb(v)^{(i-1)}_j + \emb(v)^{(i-1)}_k - 1)$.
By inductive hypothesis, 
$\lambda(v)^{(i-1)}_k = 1$ if  $(\M_{G},v) \models \varphi_k$,  and  otherwise $\lambda(v)^{(i-1)}_k = 0$; and $\lambda(v)^{(i-1)}_j = 1$ if  $(\M_{G},v) \models \varphi_j$, and  otherwise $\lambda(v)^{(i-1)}_j = 0$.
Hence, 
$\emb(v)^{(i)}_\ell=1$ if $(\M_{\G} , v) \models  \varphi_j \wedge \varphi_k$, and otherwise $\emb(v)^{(i)}_\ell=0$ for all $i \geq \ell$.

\item If $\varphi_\ell =  \langle id \rangle^{\geq c} \varphi_k$, we obviously cannot have $c \geq 2$ since there is only a single identity element. Then by construction, $C_{k\ell} = 1$ if $c=1$ and $0$ otherwise, and $b_{\ell} = 0$, $\overline{A}_{m\ell} = 0$, and $A_{m\ell} = 0$ for each $m$, so by the inductive hypothesis and Equation~\eqref{eq:acn}, we have $\emb(v)^{(i)}_\ell= \sigma(\emb(v)^{(i-1)}_k) = 1$ if $(\M_{\G} , v) \models \langle id \rangle^{\geq c} \varphi_k$, and otherwise $\emb(v)^{(i)}_\ell=0$ for all $i \geq \ell$.

\item If $\varphi_{\ell} = \langle \lnot id \rangle^{\geq c} \varphi_k$, then by construction, we have $\overline{A}_{k\ell} = 1$, $A_{k\ell} = 1$, $b_\ell = -c+1$ and all remaining entries of the $\ell$-th columns are $0$. Then, by the inductive hypothesis and Equation~\eqref{eq:acn}, we have $\emb(v)^{(i)}_\ell= \sigma( \mathit{min}(n,\sum_{w \in N_G}\emb(w)^{(i-1)}_k) + \mathit{min}(n, \sum_{w \in \overline{N}_G}\emb(w)^{(i-1)}_k) -c + 1) = 1$ if $(\M_{\G} , v) \models \langle \lnot id \rangle^{\geq c} \varphi_k$, and otherwise $\emb(v)^{(i)}_\ell=0$ for all $i \geq \ell$. 

\item If $\varphi_\ell = \langle e \rangle^{\geq c} \varphi_k$, then by construction, we have $A_{k\ell} = 1$, $b_\ell = -c+1$ and all remaining entries of the $\ell$-th columns are $0$. Then, by the inductive hypothesis and Equation~\eqref{eq:acn}, we have $\emb(v)^{(i)}_\ell= \sigma(\mathit{min}(n,\sum_{w \in N_G(v)}\emb(w)^{(i-1)}_k) -c + 1) = 1$ if $(\M_{\G} , v) \models \langle e \rangle^{\geq c} \varphi_k$, and otherwise $\emb(v)^{(i)}_\ell=0$ for all $i \geq \ell$. 

\item If $\varphi_{\ell} = \langle \lnot e \rangle^{\geq c} \varphi_k$, then by construction, we have $\overline{A}_{k\ell} = 1$, $C_{k\ell} = 1$, $b_\ell = -c+1$ and all remaining entries of the $\ell$-th columns are $0$. Then, by the inductive hypothesis and Equation~\eqref{eq:acn}, we have $\emb(v)^{(i)}_\ell= \sigma( \emb(v)^{(i-1)}_k + \mathit{min}(n,\sum_{w \in \overline{N}_G}\emb(w)^{(i-1)}_k) -c + 1) = 1$ if $(\M_{\G} , v) \models \langle \lnot e \rangle^{\geq c} \varphi_k$, and otherwise $\emb(v)^{(i)}_\ell=0$ for all $i \geq \ell$. 

\item If $\varphi_\ell = \langle e \cup id \rangle^{\geq c} \varphi_k$, then by construction, we have $C_{k\ell} = 1$, $A_{k\ell} = 1$, $b_\ell = -c+1$ and all remaining entries of the $\ell$-th columns are $0$. Then, by the inductive hypothesis and Equation~\eqref{eq:acn}, we have $\emb(v)^{(i)}_\ell= \sigma( \emb(v)^{(i-1)}_k + \mathit{min}(n,\sum_{w \in N_G}\emb(w)^{(i-1)}_k) -c + 1) = 1$ if $(\M_{\G} , v) \models \langle e \cup id \rangle^{\geq c} \varphi_k$, and otherwise $\emb(v)^{(i)}_\ell=0$ for all $i \geq \ell$. 

\item If $\varphi_\ell = \langle \lnot e \cap \lnot id \rangle^{\geq c} \varphi_k$, then by construction, we have $\overline{A}_{k\ell} = 1$, $b_\ell = -c+1$ and all remaining entries of the $\ell$-th columns are $0$. Then, by the inductive hypothesis and Equation~\eqref{eq:acn}, we have $\emb(v)^{(i)}_\ell= \mathit{min}(\sigma(n,\sum_{w \in \overline{N}_G}\emb(w)^{(i-1)}_k) -c + 1) = 1$ if $(\M_{\G} , v) \models \langle \lnot e \cap \lnot id \rangle^{\geq c} \varphi_k$, and otherwise $\emb(v)^{(i)}_\ell=0$ for all $i \geq \ell$. 

\item If $\varphi_{\ell} = \langle e \cup \lnot e \rangle^{\geq c} \varphi_k$, then by construction, we have $A_{k\ell} = 1$, $\overline{A}_{k\ell} = 1$, $C_{k\ell} = 1$, $b_\ell = -c+1$ and all remaining entries of the $\ell$-th columns are $0$. Then, by the inductive hypothesis and Equation~\eqref{eq:acn}, we have $\emb(v)^{(i)}_\ell= \sigma( \emb(v)^{(i-1)}_k + \mathit{min}(n,\sum_{w \in N_G}\emb(w)^{(i-1)}_k) + \mathit{min}(n,\sum_{w \in \overline{N}_G}\emb(w)^{(i-1)}_k) -c + 1) = 1$ if $(\M_{\G} , v) \models \langle e \cup \lnot e \rangle^{\geq c} \varphi_k$, and otherwise $\emb(v)^{(i)}_\ell=0$ for all $i \geq \ell$.

\item If $\varphi_{\ell} = \langle e \cap \lnot e \rangle^{\geq c} \varphi_k$, then the induced accessibility relation is empty, so all entries of the $\ell$-th columns are $0$. Hence, we have $\emb(v)^{(i)}_\ell= \sigma(0) = 0$ for all $i \geq \ell$.
\end{itemize}

As for $\FOtwo$, it has the same expressive power as $\EML$.
Hence, by setting $c = 1$ and using the component-wise maximum function in our construction,  we obtain that $\FOtwo \leq \GNN{\s}{\acn}$. 
\end{proof}

\begin{example*}[$\GNN{\bm}{\acn}$ Construction for $\Ctwo$]
We illustrate the construction of $\N \in \GNN{\bm}{\acn}$ from the proof of Theorem \ref{lem:twovar-to-gnn} for $\Ctwo$ formula 
$$
\varphi(x) = \lnot \exists_3 y (R(x,y) \land P(y)) \land \exists_3 x(P(x) \land \exists_3 y (\lnot R(x,y) \land P(y)))
$$ 
with quantifier rank $\ell = 2$ and counting rank $c = 3$. 
We first rewrite $\varphi$ equivalently in $\EMLC$ as 
$$
\varphi = \lnot \langle e \rangle^{\geq 3} p \land \langle e \cup \lnot e \rangle^{\geq 3} (p \land \langle \lnot e \rangle^{\geq 3} p).
$$
Next, we provide an ordering of subformulas of $\varphi$ in $\EMLC$ as described in the proof of \Cref{lem:twovar-to-gnn}, for example:
$\varphi_1 = p$, $\varphi_2 = \langle e \rangle^{\geq 3} p$, $\varphi_3 = \langle \lnot e \rangle^{\geq 3} p$, $\varphi_4 = \lnot \langle e \rangle^{\geq 3} p$, $\varphi_5 = p \land \langle \lnot e \rangle^{\geq 3} p$, $\varphi_6 = \langle e \cup \lnot e \rangle^{\geq 3} (p \land \langle \lnot e \rangle^{\geq 3} p)$, $\varphi_7 = \varphi = \lnot \langle e \rangle^{\geq 3} p \land \langle e \cup \lnot e \rangle^{\geq 3} (p \land \langle \lnot e \rangle^{\geq 3} p)$.

As before, we construct matrix $\mathbf{D}$ to extend the input labels:
$$\mathbf{D} = 
\begin{bmatrix}
1 & 0 & 0 & 0 & 0 & 0 & 0
\end{bmatrix}.
$$

In comparison with $\acr$ architectures, in $\acn$ architectures, the matrix $\mathbf{\overline{A}}$ is responsible for propagating the aggregation over non-neighbours of the current node excluding the current node instead of the set of all nodes. Observe that the modality $\langle e \rangle$ corresponds to propagating the neighbours of the current node (i.e. matrix $\mathbf{A}$ only); the modality $\langle e \cup \lnot e \rangle$ corresponds to all nodes (i.e. matrices $\mathbf{C}, \mathbf{A}$, and $\mathbf{\overline{A}}$); and the modality $\langle \lnot e \rangle$ corresponds to all non-neighbours of the current node including the current node (i.e. matrices $\mathbf{C}$ and $\mathbf{\overline{A}}$ only). Then, by construction from Theorem \ref{lem:twovar-to-gnn}, we get:

$$\mathbf{C} = 
\begin{bmatrix}
1 & 0 & 1 & 0 & 1 & 0 & 0\\
0 & 0 & 0 & -1 & 0 & 0 & 0\\
0 & 0 & 0 & 0 & 1 & 0 & 1\\
0 & 0 & 0 & 0 & 0 & 0 & 0\\
0 & 0 & 0 & 0 & 0 & 1 & 0\\
0 & 0 & 0 & 0 & 0 & 0 & 1 \\
0 & 0 & 0 & 0 & 0 & 0 & 0
\end{bmatrix}, ~~
\mathbf{A} = 
\begin{bmatrix}
0 & 1 & 0 & 0 & 0 & 0 & 0\\
0 & 0 & 0 & 0 & 0 & 0 & 0\\
0 & 0 & 0 & 0 & 0 & 0 & 0\\
0 & 0 & 0 & 0 & 0 & 0 & 0\\
0 & 0 & 0 & 0 & 0 & 1 & 0\\
0 & 0 & 0 & 0 & 0 & 0 & 0\\
0 & 0 & 0 & 0 & 0 & 0 & 0
\end{bmatrix}, ~~
\mathbf{\overline{A}} = 
\begin{bmatrix}
0 & 0 & 1 & 0 & 0 & 0 & 0\\
0 & 0 & 0 & 0 & 0 & 0 & 0\\
0 & 0 & 0 & 0 & 0 & 0 & 0\\
0 & 0 & 0 & 0 & 0 & 0 & 0\\
0 & 0 & 0 & 0 & 0 & 1 & 0\\
0 & 0 & 0 & 0 & 0 & 0 & 0\\
0 & 0 & 0 & 0 & 0 & 0 & 0
\end{bmatrix}, \text{ and }
$$

$$
\mathbf{b} = 
\begin{bmatrix}
0 & -2 & -2 & 1 & -1 & -2 & -1
\end{bmatrix}.
$$

\end{example*}

\ctwogame*

\begin{proof}
We start by showing the first implication by strong induction on $\ell$, and for an arbitrary fixed $c$.
We show a slightly stronger claim that for any $i \in \{ 1, 2\}$ and tuples $\mathbf{a} \in V^i$, $\mathbf{a}' \in V'^i$ of length $i$, 
if $(\mathfrak{M}, \mathbf{a}) \sim_{\ell,c}^{2} (\mathfrak{M}',\mathbf{a}')$ (i.e. if Duplicator has a winning strategy from the game position $(\mathfrak{M},a_1,a_2,\mathfrak{M'}, a'_1,a'_2)$ for tuples $\mathbf{a} = (a_1,a_2)$ and $\mathbf{a'} = (a'_1,a'_2)$),  then
$\mathfrak{M} \models \varphi(\mathbf{a})$ iff $\mathfrak{M}' \models \varphi(\mathbf{a}')$  
for all $\Ctwo_{\ell,c}$ formulas $\varphi(\mathbf{x})$ with $i$ free variables.

For the base case, observe that $(\mathfrak{M}, \mathbf{a}) \sim^{2}_{0,c} (\mathfrak{M'}, \mathbf{a'})$ implies that $\mathbf{a}$ and $\mathbf{a'}$ satisfy the same unary and binary predicates: in the sense that $a_1$ and $a'_1$ satisfy the same unary predicates, $a_2$ and $a'_2$ satisfy the same unary predicates, and $(a_1,a_2)$ and $(a'_1,a'_2)$ satisfy the same binary predicates (including $E$ and $=$).
Hence,  $\mathbf{a}$ in $\mathfrak{M}$ and $\mathbf{a'}$ in $\mathfrak{M'}$ satisfy the same $\Ctwo_{0,c}$ formulas, which are simply Boolean combinations of unary and binary predicates.
For the inductive hypothesis, suppose  for any $i \in \{ 1, 2\}$ and tuples $\mathbf{a} \in V^i$, $\mathbf{a}' \in V'^i$,  
if $(\mathfrak{M}, \mathbf{a}) \sim_{\ell,c}^{2} (\mathfrak{M}',\mathbf{a}')$, then
$\mathfrak{M} \models \varphi(\mathbf{a})$ iff $\mathfrak{M}' \models \varphi(\mathbf{a}')$  
for all $\Ctwo_{\ell,c}$ formulas $\varphi(\mathbf{x})$ with $i$ free variables.
For the inductive step, 
assume that there is $\varphi(\mathbf{x}) \in \Ctwo_{\ell+1,c}$ such that $\mathfrak{M} \models \varphi(\mathbf{a})$, but $\mathfrak{M}' \not\models \varphi(\mathbf{a'})$. 
We will show that 
 $(\mathfrak{M}, \mathbf{a}) \not\sim^{2}_{\ell+1,c} (\mathfrak{M'}, \mathbf{a'})$.
 
If $\varphi(\mathbf{x}) \in \Ctwo_{\ell,c}$, the result holds by the inductive hypothesis. 
Hence, we will focus on the case when $\varphi(x) = \exists_{n}y. \psi(x,y) $ with $n \leq c$ and $\psi(x,y) \in \Ctwo_{\ell,c}$ (the cases when $\varphi$ is a conjunction or a negated formula, are easy to show).
Since  $\mathfrak{M} \models \varphi(a)$, there exists $U = \{b_1, \ldots, b_n \}$  such that $\mathfrak{M} \models \psi(a,b_j)$, for each $b_j$.
We will show that $(\mathfrak{M}, a) \not\sim_{\ell+1, c}^{2} (\mathfrak{M}',a')$.
The winning strategy for Spoiler is to choose $\M$, the second pair of pebbles, and the set $U$. Because 
$\mathfrak{M}' \not\models \exists_{n}y. \psi(a',y) $, whichever set $U'$ of size $n$ Duplicator chooses, there exists $b' \in U'$  such that
$\mathfrak{M}' \not\models \psi(a',b')$. 
Spoiler places  $p_{\mathfrak{M}'}^2$ on $b'$.
Duplicator needs to place $p_{\mathfrak{M}}^2$ on some $b \in U$, and the  game is in configuration
$(\M,a,b, \M',a', b')$.
Since $\mathfrak{M} \models \psi(a,b)$ and $\mathfrak{M}' \not\models \psi(a',b')$, by the inductive hypothesis, we obtain that
$(\mathfrak{M}, a,b) \not\sim_{\ell,c}^{2} (\mathfrak{M}',a',b')$.
Thus,  $(\mathfrak{M}, \mathbf{a}) \not\sim^{2}_{\ell+1,c} (\mathfrak{M'}, \mathbf{a'})$, as required.

For the other direction, we show the following implication for any $i \in \{ 1, 2\}$ and tuples $\mathbf{a} \in V^i$, $\mathbf{a}' \in V'^i$.  
If $\mathfrak{M} \models \varphi(\mathbf{a})$ iff $\mathfrak{M}' \models \varphi(\mathbf{a}')$  
for all $\Ctwo_{\ell,c}$ formulas $\varphi(\mathbf{x})$ with $i$ free variables, then    $(\mathfrak{M}, \mathbf{a}) \sim_{\ell,c}^{2} (\mathfrak{M}',\mathbf{a}')$.
We again prove it by induction on $\ell$, with a fixed $c$.
In the base, for $\ell = 0$, if  $\mathfrak{M} \models \varphi(\mathbf{a})$ iff $\mathfrak{M'} \models \varphi(\mathbf{a'})$ where $\varphi(\mathbf{x}) \in \Ctwo_{0,c}$, then they satisfy the same unary and binary predicates, so we have $(\mathfrak{M}, \mathbf{a}) \sim^2_{0,c} (\mathfrak{M'}, \mathbf{a'})$.
In the inductive step we need to show a winning strategy for Duplicator. 
Assume that Spoiler starts by choosing $\M$, $i=1$ (i.e. a pebble on $\mathbf{a}_1$), and a set $U = \{u_1, \dots, u_n \}$ of $n \leq c$ elements in $\mathfrak{M}$. 
Before we define the set $U'$ that  Duplicator should choose, we
observe that up to the equivalence there are finitely many $\Ctwo_{\ell,c}$ formulas with at most two free variables \cite[Lemma 4.4]{cai1992optimal}.
We call  conjunctions of such formulas  \emph{types} and  we let $T$ be the set of all types. 
We let the type, $t(\M,\mathbf{a})$, of  elements $\mathbf{a}$ in $\mathfrak{M}$   be  the (unique) maximal length 
$\psi(\mathbf{x}) \in T$  such that $\mathfrak{M} \models \psi(\mathbf{a})$.
Duplicator should  choose $U' =\{u'_1, \dots, u'_n \}$ of elements in $\mathfrak{M}'$ such that $t(\M', \mathbf{a}'[\mathbf{a}_1' \mapsto u'_j]) = t(\M, \mathbf{a}[\mathbf{a}_1 \mapsto u_j])$, for each $u_j$, where $\mathbf{a}'[\mathbf{a}_1' \mapsto u'_j]$ is obtained from $\mathbf{a}'$ by replacing $\mathbf{a}_1'$ with $u'_j$.
We observe that such $U'$ needs to exist.
Otherwise, w.l.o.g.
$k$ among
$\mathbf{a}[\mathbf{a}_1 \mapsto u_j]$ in $\M$ have some  type $\psi(\mathbf{x}) \in T$, but less than $k$ among  $\mathbf{a}'[\mathbf{a}'_1 \mapsto u'_j]$ have this type in $\M'$.
Hence, $\mathfrak{M}  \models \exists_k \mathbf{x}_1 \psi(\mathbf{x}) [\mathbf{x}_2 \mapsto \mathbf{a}_2]$, but 
$\mathfrak{M}' \not \models \exists_k \mathbf{x}_1 \psi(\mathbf{x}) [\mathbf{x}_2 \mapsto \mathbf{a}'_2 ]$.
Since $k \leq  n \leq c$, this is a $\Ctwo_{\ell,c}$ formula, which contradicts the assumption of the inductive step.
Thus,  $U'$  must exist.

Now, assume that Spoiler chooses  $u'_j \in U'$.
Duplicator should choose $u_j \in U$. Since 
$t(\M', \mathbf{a}'[\mathbf{a}_1' \mapsto u'_j]) = t(\M, \mathbf{a}[\mathbf{a}_1 \mapsto u_j])$, we obtain that $\mathbf{a}[\mathbf{a}_1 \mapsto u_j]$ in $\mathfrak{M}$ satisfies the same $\Ctwo_{\ell,c}$ formulas as $\mathbf{a}'[\mathbf{a}'_1 \mapsto u'_j]$ in $\mathfrak{M}'$.
By inductive assumption, 
$(\mathfrak{M}, \mathbf{a}[\mathbf{a}_1 \mapsto u_j]) \sim_{\ell,c}^{2} (\mathfrak{M}',\mathbf{a}'[\mathbf{a}'_1 \mapsto u'_j])$.
Thus, $(\mathfrak{M}, \mathbf{a}) \sim_{\ell+1,c}^{2} (\mathfrak{M}',\mathbf{a}')$.

Now, we will show that a class of pointed models closed under $\sim_{\ell,c}^{2}$ is definable by a (finite) disjunction of  $\Ctwo_{\ell,c}$ formulas.
We have showed that $(\mathfrak{M}, a) \sim^{2}_{\ell,c} (\mathfrak{M}',a')$ iff $a$ in $\mathfrak{M}$ and $a'$ in $\mathfrak{M}'$ satisfy the same $\Ctwo_{\ell,c}$ formulas with one free variable.
Since, up to logical equivalence,  there are finitely many $\Ctwo_{\ell,c}$ formulas, each $\sim_{\ell,c}^{2}$ equivalence class over pointed models can be  expressed as a (finite) disjunction of (finite) $\Ctwo_{\ell,c}$ formulas.
\end{proof}

\biacn*

\begin{proof}
We first consider $\N \in \GNN{\bm}{\acn}$.
We show by induction on $\ell \leq L$ and fixed $k$ that $(G_1,v_1) \sim_{\ell,k}^{2} (G_2, v_2)$ implies  $\emb_1(v_1)^{(\ell)} = \emb_2(v_2)^{(\ell)}$,
where $G_1$, $G_2$ are graphs, $v_1$, $v_2$ any of their nodes, and $k \geq 1$. 
If $\ell = 0$, 
then $v_1$ and $v_2$ satisfy the same propositions, so $\emb_1(v_1)^{(0)} = \emb_2(v_2)^{(0)}$.
For the induction step, assume $(G_1,v_1) \sim_{\ell-1,k}^{2} (G_2, v_2)$ implies $\emb_1(v_1)^{(\ell-1)} = \emb_2(v_2)^{(\ell-1)}$. 
We assume that $\emb_1(v_1)^{(\ell)} \neq \emb_2(v_2)^{(\ell)}$ and we will show that  $(G_1,v_1) \not\sim_{\ell,k}^{2} (G_2, v_2)$.
%
%
%
Since $\emb_1(v_1)^{(\ell)} \neq \emb_2(v_2)^{(\ell)}$, by Equation \eqref{eq:acn}, one of the following statements needs to hold:
\begin{center}
\begin{tabular}{l}
1. $\emb_1(v_1)^{(\ell-1)} \neq \emb_2(v_2)^{(\ell-1)}$,\\
2. $\agg(\lBrace   \emb_1(w)^{(\ell -1)}  \rBrace _{w \in N_{\G_1}(v_1)}) \neq \agg(\lBrace   \emb_2(w)^{(\ell-1)}  \rBrace _{w \in N_{\G_2}(v_2)})$,  \\
3. $\overline{\agg}(\lBrace   \emb_1(w)^{(\ell-1)}  \rBrace _{w \in \overline{N}_{\G_1}(v_1)}) \neq \agg(\lBrace   \emb_2(w)^{(\ell-1)}  \rBrace _{w \in \overline{N}_{\G_2}(v_2)})$.
\end{tabular}
\end{center}
Statement (1) and the inductive hypothesis immediately imply that $(G_1, v_1) \not\sim_{\ell-1,k}^{2} (G_2, v_2)$, so we get $(G_1, v_1) \not\sim_{\ell,k}^{2} (G_2, v_2)$. 

Next, we consider Statements (2) and (3). 
The $k$-boundedness of $\agg$  and $\overline{\agg}$ implies that for each $X \in \{N,\overline{N}\}$ there exists $u \in X_{\G_1}(v_1)$  such that $\emb_1(u)^{(\ell-1)}$ appears $k_1$ times in $\lBrace   \emb_1(w)^{(\ell-1)}  \rBrace _{w \in X_{\G_1}(v_1)}$ but $k_2 \neq k_1$ times in $\lBrace   \emb_2(w)^{(\ell-1)}  \rBrace _{w \in X_{\G_2}(v_2)}$ such that $k_1 < k$ or $k_2 < k$. 
Assume w.l.o.g. that $k_2 < k_1$, so $k_2 <k$. 
We will show that Spoiler has a winning strategy from the configuration $(\mathfrak{M}_{G_1},v_1,\mathfrak{M}_{G_2},v_2)$. 
The strategy for Spoiler is to pick a set $U_1$ of $\mathit{min}(k,k_1)$  elements from $\lBrace   \emb_1(w)^{(\ell-1)}  \rBrace _{w \in X_{\G_1}(v_1)}$ with label $\emb_1(u)^{(\ell-1)}$. 
Then, Duplicator has to respond by picking a set $U_2$ of $\mathit{min}(k,k_1)$ elements from $\lBrace   \emb_2(w)^{(\ell-1)}  \rBrace _{w \in X_{\G_2}(v_2)}$; otherwise partial isomorphism is not satisfied by the positions of the two pairs of pebbles. 
Since $k_2 <k_1$ and $k_2<k$, we have $k_2 < \min(k_1,k)$.
Hence, there must exists $u' \in U_2$ such that $\emb_2(u')^{(\ell-1)} \neq \emb_1(u)^{(\ell-1)}$. 
Now, the strategy for Spoiler is to pick this $u'$.
Indeed, this is because the inductive hypothesis implies $(G_1, u) \not\sim_{\ell-1,k}^{2} (G_2, u')$.
So we can conclude that $(G_1, v_1) \not\sim_{\ell,k}^{2} (G_2, v_2)$. 

It is straightforward to extend to the case of $\GNN{\s}{\acn}$ by considering the absence of a label instead of cardinality discrepancy of a label bounded by $k$. The proof for $\GNN{\bm}{\acn}$ goes through for $\GNN{\s}{\acn}$ under the special case where $k=1$.
\end{proof}

\newpage

\end{document}